\crefname{line}{line}{lines}
\let\oldsqrt\sqrt
\def\hksqrt{\mathpalette\DHLhksqrt}
\def\DHLhksqrt#1#2{\setbox0=\hbox{$#1\oldsqrt{#2\,}$}\dimen0=\ht0
   \advance\dimen0-0.2\ht0
   \setbox2=\hbox{\vrule height\ht0 depth -\dimen0}%
   {\box0\lower0.4pt\box2}}
\renewcommand\sqrt\hksqrt
\renewcommand{\le}{\leqslant}
\renewcommand{\ge}{\geqslant}
\newcommand*{\ce}{\textsc{Cluster Editing}\xspace}
\newcommand*{\cc}{\textsc{Correlation Clustering}\xspace}
\newcommand*{\cccost}{\cc cost\xspace}
\newcommand*{\clust}{clustering\xspace}
\newcommand*{\fcc}{\textsc{Fair Correlation Clustering}\xspace}
\newcommand*{\fc}{fair clustering\xspace}
\newcommand*{\fcs}{fair clusterings\xspace}
\newcommand*{\rfcc}{\textsc{Relaxed Fair Correlation Clustering}\xspace}
\newcommand*{\rfc}{relaxed fair clustering\xspace}
\newcommand*{\rfcs}{relaxed fair clusterings\xspace}
\newcommand*{\arfcc}{\(\alpha\)-\textsc{relaxed Fair Correlation Clustering}\xspace}
\newcommand*{\arfc}{\(\alpha\)-relaxed fair clustering\xspace}
\newcommand*{\arfcs}{\(\alpha\)-relaxed fair clusterings\xspace}
\newcommand*{\kbalpart}{\textsc{\(k\)-Balanced Partitioning}\xspace}
\newcommand*{\thrPart}{\textsc{3-Partition}\xspace}
\newcommand{\optPDef}[3]{%
      \noindent
      \begin{center}
      \begin{tcolorbox}
      \textsc{#1}\\[5pt]
      \begin{tabular}{p{0.13\columnwidth}p{0.8\columnwidth}}
      \textbf{Input:} & #2\\
      \textbf{Task:} & #3
      \end{tabular}
      \end{tcolorbox}
      \end{center}
      }
\renewcommand*{\P}{\textsf{P}\xspace}
\newcommand*{\NP}{\textsf{NP}\xspace}
\newcommand*{\APX}{\textsf{APX}\xspace}
\newcommand*{\PTAS}{\textsf{PTAS}\xspace}
\newcommand*{\FPT}{\textsf{FPT}\xspace}
\newcommand*{\XP}{\textsf{XP}\xspace}
\newcommand*{\poly}{\textsf{poly}}
\newcommand*{\bigO}{\mathrm{O}}
\newcommand*{\bigOmega}{\mathrm{\Omega}}
\newcommand{\N}{\mathbb{N}}
\newcommand{\R}{\mathbb{R}}
\newcommand{\Q}{\mathbb{Q}}
\newcommand{\Z}{\mathbb{Z}}
\newcommand{\mcP}{\mathcal{P}}
\newcommand{\mcS}{\mathcal{S}}
\newcommand{\cost}[1]{\text{cost}(#1)}
\newcommand{\comp}[1]{\(#1\)-component\xspace}
\newcommand{\rcomp}{\comp{r}}
\newcommand{\bcomp}{\comp{b}}
\newcommand{\rrcomp}{\comp{rr}}
\newcommand{\brcomp}{\comp{br}}
\newcommand{\brrcomp}{\comp{brr}}
\newcommand{\comps}[1]{\(#1\)-components\xspace}
\newcommand{\rcomps}{\comps{r}}
\newcommand{\bcomps}{\comps{b}}
\newcommand{\rrcomps}{\comps{rr}}
\newcommand{\brcomps}{\comps{br}}
\newcommand{\heads}{\{\emptyset, r, b, rr, br\}}
\newcommand{\join}{\textsc{Join}\xspace}
\newcommand{\joinroutine}{\join subroutine\xspace}
\newcommand{\svars}{\mathrm{setvars}\xspace}
\newcommand{\smax}{\mathrm{setmax}\xspace}
\newcommand*{\nwspace}{\hspace*{.1em}} 
\newcommand{\specialcell}[2][c]{%
	\def\arraystretch{1}%
	\begin{tabular}[#1]{@{}c@{}}#2\end{tabular}
}
\providecommand{\ignore}[1]{}
\title{Fair Correlation Clustering in Forests
}
\titlerunning{Fair Correlation Clustering}
\author{Katrin Casel}%
{Humboldt University of Berlin, Germany}%
{katrin.casel@hpi.de}%
{https://orcid.org/0000-0001-6146-8684}
{} 
\author{Tobias Friedrich}%
{Hasso Plattner Institute, University of Potsdam, Germany}%
{tobias.friedrich@hpi.de}%
{https://orcid.org/0000-0003-0076-6308}
{} 
\author{Martin Schirneck}%
{Faculty of Computer Science, University of Vienna, Austria}%
{martin.schirneck@univie.ac.at}%
{https://orcid.org/0000-0001-7086-5577}
{} 
\author{Simon Wietheger}%
{Hasso Plattner Institute, University of Potsdam, Germany}%
{simon.wietheger@student.hpi.de}%
{https://orcid.org/0000-0002-0734-0708}
{} 
\authorrunning{Casel, Friedrich, Schirneck \& Wietheger}
\keywords{correlation clustering, disparate impact, fair clustering, relaxed fairness}
\begin{document}

\maketitle

\begin{abstract}	
The study of algorithmic fairness received growing attention recently. 
This stems from the awareness that bias in the input data for machine learning systems
may result in discriminatory outputs.
For clustering tasks, one of the most central notions of fairness is the formalization by Chierichetti, Kumar, Lattanzi, and Vassilvitskii~[NeurIPS 2017].
A clustering is said to be fair, if each cluster has the same distribution of manifestations of a sensitive attribute as the whole input set. 
This is motivated by various applications where the objects to be clustered have sensitive attributes that should not be over- or underrepresented. 
Most research on this version of fair clustering has focused on centriod-based objectives.

In contrast, we discuss the applicability of this fairness notion to \textsc{Correlation Clustering}. The existing literature on the resulting \textsc{Fair Correlation Clustering} problem either presents approximation algorithms with poor approximation guarantees or severely limits the possible distributions of the sensitive attribute (often only two manifestations with a 1:1 ratio are considered). 
Our goal is to understand if there is hope for better results in between these two extremes.  To this end, we consider restricted graph classes which allow us to characterize the distributions of sensitive attributes for which this form of fairness is tractable from a complexity point of view.

While existing work on \textsc{Fair Correlation Clustering} gives approximation algorithms, we focus on exact solutions and investigate whether there are efficiently solvable instances. 
The unfair version of \textsc{Correlation Clustering} is trivial on forests, but adding fairness creates a surprisingly rich picture of complexities. We give an overview of the distributions and types of forests where \textsc{Fair Correlation Clustering} turns from tractable to intractable.

As the most surprising insight, we consider the fact that
the cause of the hardness of \textsc{Fair Correlation Clustering} is not the strictness of the fairness condition. 
We lift most of our results to also hold for the relaxed version of the fairness condition. 
Instead, the source of hardness seems to be the distribution of the sensitive attribute. 
On the positive side, we identify some reasonable distributions that are indeed tractable. 
While this tractability is only shown for forests, it may open an avenue to design reasonable approximations for larger graph classes.
\end{abstract}

\section{Introduction}
\label{sec:intro}

In the last decade, the notion of fairness in machine learning has increasingly attracted interest, see for example the review by Pessach and Schmueli \cite{Pessach_Shmueli_2022}. Feldman, Friedler, Moeller, Scheidegger, and Venkatasubramanian  \cite{Feldman_Friedler_Moeller_Scheidegger_Venkatasubramanian_2015} formalize fairness based on a US Supreme Court decision on disparate impact from 1971. It requires that sensitive attributes like gender or skin color should neither be explicitly considered in decision processes like hiring but also should the manifestations of sensitive attributes be proportionally distributed in all outcomes of the decision process. Feldman et al.\ formalize this notion for classification tasks. Chierichetti, Kumar, Lattanzi, and Vassilvitskii \cite{Chierichetti_Kumar_Lattanzi_Vassilvitskii_2017} adapt this concept for clustering tasks. 

In this paper we employ the same disparate impact based understanding of fairness. Formally, the objects to be clustered have a color assigned to them that represents some sensitive attribute. Then, a clustering of these colored objects is called \emph{fair} if for each cluster and each color the ratio of objects of that color in the cluster  corresponds to the total ratio of vertices of that color. More precisely, a clustering is \emph{fair}, if it partitions the set of objects into \emph{fair subsets}.

\begin{definition}[Fair Subset]
    Let \(U\) be a finite set of objects colored by a function \(c\colon U\rightarrow [k]\)  for some \(k\in \N_{>0}\). Let \(U_i = \{u\in U\mid c(u)=i\}\) be the set of objects of color \(i\) for all \(i\in[k]\). Then, a set \(S\subseteq U\) is fair if and only if for all colors \(i\in [k]\) we have \(\frac{|S\cap U_i|}{|S|} = \frac{|U_i|}{|U|}\).
\end{definition}

To understand how this notion of fairness affects clustering decisions, consider the following example. Imagine that an airport security wants to find clusters among the travelers to assign to each group a level of potential risk  with corresponding anticipating measures. There are attributes like skin color that should not influence the assignment to a risk level. A bias in the data, however, may lead to some colors being over- or underrepresented in some clusters.  
Simply removing the skin color attribute from the data may not suffice as it may correlate with other attributes. Such problems are especially likely if one of the skin colors is far less represented in the data than others. 
A fair clustering finds the optimum clustering such that for each risk level the distribution of skin colors is fair, by requiring the distribution of each cluster to roughly match the distribution of skin colors among all travelers.

The seminal fair clustering paper by  Chierichetti et al.~\cite{Chierichetti_Kumar_Lattanzi_Vassilvitskii_2017} introduced this notion of fairness for clustering and studied it for the  objectives \(k\)-center and \(k\)-median. 
Their work was extended by Bera, Chakrabarty, Flores, and Negahbani \cite{Bera_Chakrabarty_Flores_Negahbani_2019}, who relax the fairness constraint in the sense of requiring upper and lower bounds on the representation of a color in each cluster. More precisely, they define the following generalization of fair sets.  
\begin{definition}[Relaxed Fair Set]\label{def:relaxes_fair_set}
For a finite set \(U\) and coloring \(c\colon U\rightarrow [k]\) for some \(k\in \N_{>0}\) 
let \(p_i,q_i\in \Q\) with \(0<p_i\le \frac{|U_i|}{|U|} \le q_i < 1\) for all \(i\in [k]\), where \(U_i = \{u\in U\mid c(u)=i\}\). A set \(S\subseteq U\) is relaxed fair with respect to \(q_i\) and \(p_i\) if and only if     \(p_i \le \frac{|S\cap U_i|}{|S|} \le q_i\)  for all \(i\in [k]\).
\end{definition}
Following these results, this notion of (relaxed) fairness was extensively studied for centroid-based clustering objectives with many positive results.

For example, Bercea et al.~\cite{Bercea_2018} give bicreteira constant-factor approximations for facility location type problems like $k$-center and $k$-median. 
Bandyapadhyay, Fomin and Simonov~\cite{Bandyapadhyay_Fomin_Simonov_2021} use the technique of fair coresets introduced by Schmidt, Schwiegelshohn, and Sohler~\cite{Schmidt_Schwiegelshohn_Sohler_2020} to give constant factor approximations for many centroid-based clustering objectives; among many other results, they give a PTAS for fair $k$-means and $k$-median in Euclidean space. Fairness for centroid-based objectives seems to be so well understood, that most research already considers more generalized settings, like streaming~\cite{Schmidt_Schwiegelshohn_Sohler_2020}, or imperfect knowledge of group membership~\cite{Esmaeili_Brubach_Tsepenekas_Dickerson_2020}.

In comparison, there are few (positive) results for this fairness notion applied to graph clustering objectives. The most studied with respect to fairness among those is \cc, arguably the most studied graph clustering objective. For \cc we are given a pairwise similarity measure for a set of objects and the aim is to find a clustering that minimizes the number of similar objects placed in separate clusters and the number of dissimilar objects placed in the same cluster.
Formally, the input to \cc is  a graph \(G=(V,E)\), and the goal is to find a partition~\(\mcP\) of~\(V\) that minimizes the \cccost defined as 
\begin{gather}\label{eq:cccost}
    \cost{G,\mcP} = |\nwspace \{\{u,v\}\in \tbinom{V}{2}\setminus E \mid \mcP[u] = \mcP[v] \} \nwspace |
    	 + |\nwspace  \{\{u,v\}\in E \mid \mcP[u] \neq \mcP[v]\} \nwspace |.
\end{gather}

\fcc then is the task to find a partition into \emph{fair} sets that minimizes the \cccost. We emphasize that this is the complete, unweighted, min-disagree form of \cc. (It is often called \emph{complete} because every pair of objects is either similar or dissimilar but none is indifferent regarding the clustering.
It is unweighted as the (dis)similarity between two vertices is binary. 
A pair of similar objects that are placed in separate clusters as well as a pair of dissimilar objects in the same cluster is called a \emph{disagreement}, hence the naming of the min-disagree form.)

There are two papers that appear to have started studying \fcc independently\footnote{Confusingly, they both carry the title \emph{Fair Correlation Clustering}.}.
Ahmadian, Epasto, Kumar, and Mahdian~\cite{Ahmadian_Epasto_Kumar_Mahdian_2020} analyze settings where the fairness constraint is given by some \(\alpha\) and require that the ratio of each color in each cluster is at most \(\alpha\). For \(\alpha=\frac{1}{2}\), which corresponds to our fairness definition if there are two colors in a ratio of \(1:1\), they obtain a 256-approximation. For \(\alpha=\frac{1}{k}\), where \(k\) is the number of colors in the graph, they give a \(16.48k^2\)-approximation. We note that all their variants are only equivalent to our fairness notion if there are \(\alpha^{-1}\) colors that all occur equally often.
Ahmadi, Galhotra, Saha, and Schwartz~\cite{Ahmadi_Galhotra_Saha_Schwartz_2020}  give an $\bigO{(c^2})$-approximation algorithm for instances with two colors in a ratio of \(1:c\). In the special case of a color ratio of \(1:1\),  they obtain a \(3\beta+4\)-approximation, given any \(\beta\)-approximation to unfair \cc.  With a more general color distribution, their approach also worsens drastically. For instances with \(k\) colors in a ratio of \(1:c_2:c_3:\ldots:c_k\) for positive integers \(c_i\), they give an $\bigO(k^2\cdot \max_{2\le i\le k} c_i)$-approximation for the strict, and an $\bigO(k^2\cdot \max_{2\le i\le k} q_i)$-approximation for the relaxed setting\footnote{Their theorem states they achieve an $\bigO(\max_{2\le i\le k} q_i)$-approximation but when looking at the proof it seems they have accidentally forgotten the \(k^2\) factor.}.

Following these two papers, Friggstad and Mousavi~\cite{Friggstad_Mousavi_2021} provide an approximation to the \(1:1\) color ratio case with a factor of \(6.18\). To the best of our knowledge, the most recent publication on \fcc is by Ahmadian and Negahbani~\cite{Ahmadian_Negahbani} who give approximations for \fcc with a slightly different way of relaxing fairness. They give an approximation with ratio $\mathcal O(\varepsilon^{-1} k\max_{2\le i\le k} c_i)$ for color distribution \(1:c_2:c_3:\ldots:c_k\), where $\varepsilon$ relates to the amount of relaxation (roughly $q_i=(1+\epsilon)c_i$ for our definition of relaxed fairness).  

All these results for \fcc seem to converge towards considering the very restricted setting of two colors in a ratio of \(1:1\) in order to give some decent approximation ratio.
In this paper, we want to understand if this is unavoidable, or if there is hope to find better results for other (possibly more realistic) color distributions. In order to isolate the role of fairness, we consider ``easy'' instances for \cc, and study the increase in complexity when adding fairness constraints. \cc without the fairness constraint is easily solved on forests. We find that \fcc restricted to forests turns \NP-hard very quickly, even when additionally assuming constant degree or diameter. Most surprisingly, this hardness essentially also holds for relaxed fairness,  showing that the hardness of the problem is not due to the strictness of the fairness definition.

On the positive side, we identify color distributions that allow for efficient algorithms. Not surprisingly, this includes ratio \(1:1\), and extends to a constant number of $k$ colors with distribution \(c_1:c_2:c_3:\ldots:c_k\) for  constants $c_1,\dots, c_k$. Such distributions can be used to model sensitive attributes with a limited number of manifestation that are almost evenly distributed. Less expected, we also find tractability for, in a sense, the other extreme. We show that \fcc on forests can be solved in polynomial time for two colors with ratio \(1:c\) with $c$ being very large (linear in the number of overall vertices). Such a distribution can be used to model a scenario where a minority is drastically underrepresented and thus in dire need of fairness constraints. Although our results only hold for forests, we believe that they can offer a starting point for more general graph classes. We especially hope that our work sparks interest in the so far neglected distribution of ratio \(1:c\) with $c$ being very large.

\subsection{Related Work}
\label{subsec:intro_related_work}
The study of clustering objectives similar or identical to \cc dates back to the 1960s \cite{BenDor_Shamir_Yakhini_1999,Regnier_1983,Zahn_1964}.
Bansal, Blum, and Chawla~\cite{Bansal_Blum_Chawla_2004} were the first to coin the term \cc as a clustering objective. 
We note that it is also studied under the name \ce. The most general formulation of \cc regarding weights considers 
two positive real values for each pair of vertices, the first to be added to the cost if the objects are placed in the same cluster and the second to be added if the objects are placed in separate clusters \cite{Ailon_Charikar_Newman_2008}. The recent book by Bonchi, García-Soriano, and Gullo \cite{Bonchi_GarciaSoriano_Gullo_2022} gives a broad overview of the current research on \cc. 

We focus on the particular variant that considers a complete graph with $\{-1,1\}$ edge-weights, and the min disagreement objective function. This version is \APX-hard \cite{Charikar_Guruswami_Wirth_2005}, implying in particular that there is no algorithm giving an arbitrarily good approximation unless $\P = \NP$. The best known approximation for \cc is the very recent breakthrough  by Cohen{-}Addad, Lee and  Newman~\cite{bestccapx} who give a ratio of $(1.994+\epsilon)$.

We show that in forests, all clusters of an optimal \cc solution have a fixed size. 
In such a case, \cc is related to   \kbalpart. There, the task is to partition the graph into \(k\) clusters of equal size while minimizing the number of edges that are cut by the partition. Feldmann and Foschini \cite{Feldmann_Foschini_2015} study this problem on trees and their results have interesting parallels with ours.

Aside from the results on \fcc already discussed  above, we are only aware of three papers that consider a fairness notion close to the one of Chierichetti et al.~\cite{Chierichetti_Kumar_Lattanzi_Vassilvitskii_2017} for a graph clustering objective.
Schwartz and Zats~\cite{Schwartz_Zats_2022} consider incomplete \fcc with the max-agree objective function. 
Dinitz, Srinivasan, Tsepenekas, and Vullikanti \cite{Dinitz_Srinivasan_Tsepenekas_Vullikanti_2022} study \textsc{Fair Disaster Containment}, a graph cut problem involving fairness. Their problem is not directly a fair clustering problem since they only require  one part of their partition (the saved part) to be fair.
Ziko, Yuan, Granger, and Ayed \cite{Ziko_Yuan_Granger_Ayed_2021}  give a heuristic 
 approach for fair clustering in general that however does not allow for theoretical guarantees on the quality of the solution.

\section{Contribution}
\label{sec:contribution}
We now outline our findings on \fcc.
We start by giving several structural results that underpin our further investigations.
Afterwards, we present our algorithms and hardness results for certain graph classes and color ratios.
We further show that the hardness of fair clustering does \emph{not} stem from
the requirement of the clusters exactly reproducing the color distribution of the whole graph.
This section is concluded by a discussion of possible directions for further research.

\subsection{Structural Insights}
\label{subsec:contrib_structural}
\begin{figure}
    \centering
    \includegraphics[height=.15\textwidth]{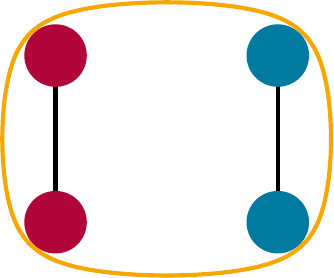}
    \hspace{.1\textwidth}
    \includegraphics[height=.15\textwidth]{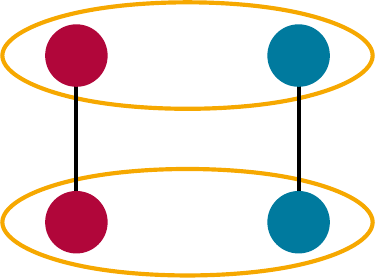}
    \caption{Example forest where a cluster of size 4 and two clusters of size 2 incur the same cost. With one cluster of size 4 (left), the inter-cluster cost is 0 and the intra-cluster cost is 4. With two clusters of size 2 (right), both the inter-cluster and intra-cluster cost are 2.}
\label{fig:1to1_4is2}
\end{figure}

We outline here the structural insights that form the foundation of all our results.
We first give a close connection between the cost of a clustering,
the number of edges ``cut'' by a clustering,
and the total number of edges in the graph.
We refer to this number of ``cut'' edges as the \emph{inter-cluster} cost as opposed to the number of non-edges inside clusters, which we call the \emph{intra-cluster} cost. 
Formally, the intra- and inter-cluster cost are the first and second summand of the \cccost in \Cref{eq:cccost}, respectively.
The following lemma shows that minimizing the inter-cluster cost suffices to minimize the total cost if all clusters are of the same size.
This significantly simplifies the algorithm development for \cc.

\begin{restatable}{lemma}{costbycuts}
\label{lem:costByCuts}
    Let \(\mcP\) be a partition of the vertices of an \(m\)-edge graph \(G\). Let \(\chi\) denote the inter-cluster cost incurred by \(\mcP\) on \(G\).
    If all sets in the partition are of size \(d\), then \(\cost{\mcP}=\frac{(d-1)}{2} \nwspace n -m +2\chi\).
    In particular, if \(G\) is a tree, \(\cost{\mcP} = \frac{(d-3)}{2} \nwspace n + 2\chi +1\).
\end{restatable}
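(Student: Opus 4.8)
The plan is to split $\cost{\mcP}$ into its intra- and inter-cluster parts and then count, explicitly, the vertex pairs that lie inside a common cluster. Since $\chi$ is by definition exactly the inter-cluster cost (the second summand of \Cref{eq:cccost}), it suffices to express the intra-cluster cost, i.e.\ the number of non-edges whose two endpoints are placed in the same set of $\mcP$, in terms of $n$, $m$, $d$, and $\chi$.

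First I would observe that if every cluster has size $d$, then $\mcP$ consists of exactly $n/d$ clusters (in particular $d \mid n$), so the total number of unordered vertex pairs contained in a common cluster is $\frac{n}{d}\binom{d}{2} = \frac{(d-1)}{2}\,n$. Each such pair is either an edge or a non-edge of $G$; writing $m_{\mathrm{in}}$ for the number of intra-cluster edges, the intra-cluster cost is therefore $\frac{(d-1)}{2}\,n - m_{\mathrm{in}}$.

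Next I would use that every edge of $G$ is counted exactly once, either as an intra-cluster edge or as an inter-cluster (cut) edge, which gives $m = m_{\mathrm{in}} + \chi$, hence $m_{\mathrm{in}} = m - \chi$. Substituting, the intra-cluster cost equals $\frac{(d-1)}{2}\,n - m + \chi$, and adding the inter-cluster cost $\chi$ yields $\cost{\mcP} = \frac{(d-1)}{2}\,n - m + 2\chi$. The tree case then follows immediately by setting $m = n-1$ and simplifying $\frac{(d-1)}{2}\,n - (n-1) + 2\chi = \frac{(d-3)}{2}\,n + 2\chi + 1$.

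There is no genuine obstacle here; the only point that warrants care is the bookkeeping that nothing is double-counted. Each element of $\binom{V}{2}$ falls into exactly one of four categories --- intra-cluster edge, intra-cluster non-edge, inter-cluster edge, inter-cluster non-edge --- and the \cccost picks up precisely the two middle ones. Making this partition of $\binom{V}{2}$ explicit, together with the identity $m = m_{\mathrm{in}} + \chi$, is the one step I would write out in full.
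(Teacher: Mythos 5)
Your argument is correct and matches the paper's proof exactly: both count the $\frac{n}{d}\binom{d}{2}$ intra-cluster pairs, subtract the $m-\chi$ uncut edges to get the intra-cluster cost, add $\chi$, and specialize to $m=n-1$ for trees. Nothing further is needed.
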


The condition that all clusters need to be of the same size seems rather restrictive at first.
However, we prove in the following that in bipartite graphs and, in particular, in forests and trees
there is always a minimum-cost \fc such that indeed all clusters are equally large.
This property stems from how the fairness constraint acts on the distribution of colors
and is therefore specific to \fcc.
It allows us to fully utilize \autoref{lem:costByCuts}
both for building reductions in \NP-hardness proofs as well as for algorithmic approaches 
as we can restrict our attention to partitions with equal cluster sizes. 

Consider two colors of ratio \(1:2\), then any fair cluster must contain at least
1 vertex of the first color and 2 vertices of the second color to fulfil the fairness requirement.
We show that a minimum-cost clustering of a forest, due to the small number of edges,
consists entirely of such minimal clusters.
Every clustering with larger clusters incurs a higher cost.

\begin{restatable}{lemma}{smallclustersforest}
\label{lem:smallClustersForest}
    Let \(F\) be a forest with \(k\ge 2\) colors in a ratio of \(c_1:c_2:\ldots:c_k\) with \(c_i\in \N_{>0}\) for all \(i\in[k]\), \(\gcd(c_1,c_2,\ldots,c_k)=1\), and \(\sum_{i=1}^k c_i \ge 3\). Then, all clusters of every minimum-cost \fc are of size \(d = \sum_{i=1}^k c_i\). 
\end{restatable}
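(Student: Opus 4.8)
The plan is a local exchange argument. First I would determine the possible cluster sizes. A fair set $S$ contains exactly $|S|\,c_i/d$ vertices of color $i$, so $d$ divides $|S|\,c_i$ for every $i$; since $d=\sum_i c_i$ forces $\gcd(d,c_1,\dots,c_k)=\gcd(c_1,\dots,c_k)=1$, a prime-by-prime inspection gives $d\mid|S|$. Hence every fair cluster has size $td$ for some $t\in\N_{>0}$, and it remains to rule out $t\ge2$.

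Suppose, for contradiction, that a minimum-cost \fc $\mcP$ contains a cluster $C$ with $|C|=td$, $t\ge2$. I would form $\mcP'$ by replacing $C$ with $t$ fair sets of size $d$ and keeping every other cluster of $\mcP$; this $\mcP'$ is again fair, so deriving $\cost{\mcP'}<\cost{\mcP}$ contradicts minimality. The cost change is purely local: pairs with both endpoints outside $C$ keep their clusters, and a pair with exactly one endpoint in $C$ lies in two different clusters under both partitions. Hence, with $e(C)$ the number of edges of $F$ inside $C$ and $\chi$ the number of those edges whose endpoints land in different parts of the split, only pairs inside $C$ change cost: they account for $\binom{td}{2}-e(C)$ under $\mcP$ and $t\binom{d}{2}-e(C)+2\chi$ under $\mcP'$, so $\cost{\mcP}-\cost{\mcP'}=\binom{td}{2}-t\binom{d}{2}-2\chi=\tfrac{t(t-1)d^2}{2}-2\chi$. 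It therefore suffices to choose the split so that $\chi<\tfrac{t(t-1)d^2}{4}$.

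Controlling $\chi$ is the one delicate step, and it is the only place the forest structure enters. Since $F[C]$ is a sub-forest on $td$ vertices, $e(C)\le td-1$. If $e(C)\le td-2$, any partition of $C$ into fair $d$-sets works (fairness depends only on color counts, so colors may be assigned arbitrarily) and yields $\chi\le td-2$. If $e(C)=td-1$ then $F[C]$ is a tree, hence connected; as $C$ is fair it contains all $k\ge2$ colors, so the tree has an edge $\{u,v\}$ with $c(u)\neq c(v)$. I would put $u,v$ into one part, complete it to a fair $d$-set (enough vertices of every color remain because $t\ge2$), and partition the leftover fair $(t-1)d$-set arbitrarily into $t-1$ fair $d$-sets; the edge $\{u,v\}$ survives, so again $\chi\le td-2$. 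It then remains to check the elementary inequality $\tfrac{t(t-1)d^2}{2}-2(td-2)>0$ for all $t\ge2$, $d\ge3$, i.e.\ $t(t-1)d^2-4td+8>0$, which is immediate for $d\ge4$ (it is at least $td(d-4)+8$) and a one-line verification at $d=3$ (there it equals $9t(t-1)-12t+8$, which is $2$ at $t=2$ and increasing in $t$). Thus $\cost{\mcP'}<\cost{\mcP}$, contradicting minimality, and every cluster of $\mcP$ has size exactly $d$. I expect the bound $\chi\le td-2$ in the $e(C)=td-1$ case to be the main obstacle: the cruder estimate $\chi\le td-1$ is not quite enough when $d=3$, which is why one needs the observation that a spanning tree of a fair cluster must contain a bichromatic edge that can be kept inside one part.
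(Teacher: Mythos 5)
Your proposal is correct and takes essentially the same route as the paper: a splitting argument that bounds the number of edges inside a cluster by the forest structure, assumes in the worst case that all of them are cut, and rescues the tight case $d=3$ by preserving a bichromatic edge (which exists in any connected fair cluster with $k\ge 2$ colors) inside one part. The only cosmetic differences are that you split the oversized cluster into $t$ fair $d$-sets at once rather than peeling off a single fair $d$-set, and that you make explicit the divisibility argument showing $d\mid|S|$; both calculations check out.
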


\autoref{lem:smallClustersForest} does not extend to two colors in a ratio of \(1:1\)
as illustrated in \autoref{fig:1to1_4is2}. 
In fact, this color distribution is the only case for forests where a partition with larger clusters 
can have the same (but no smaller) cost. 
We prove a slightly weaker statement than \autoref{lem:smallClustersForest},
namely, that \emph{there is} always a minimum-cost \fc whose cluster sizes are given by the color ratio.
We find that this property, in turn, holds not only for forests but for every bipartite graph.
Note that in general bipartite graphs there are more color ratios than only $1:1$ that allow for
these ambiguities.

\begin{restatable}{lemma}{smallclustersbipartite}
\label{lem:smallClustersBipartite}
    Let \(G=(A\cup B, E)\) be a bipartite graph with \(k\ge 2\) colors in a ratio of \(c_1:c_2:\ldots:c_k\) with \(c_i\in \N_{>0}\) for all \(i\in[k]\) and \(\gcd(c_1,c_2,\ldots,c_k)=1\). Then, there is a minimum-cost \fc such that all its clusters are of size \(d=\sum_{i=1}^k c_i\). Further, each minimum-cost \fc with larger clusters can be transformed into a minimum-cost \fc such that all clusters contain no more than \(d\) vertices in linear time.
\end{restatable}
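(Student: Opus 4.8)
The plan is to take an arbitrary minimum-cost \fc $\mcP$ and, while some cluster has size larger than $d$, refine that cluster into fair clusters of size exactly $d$, showing that this never increases the cost. The starting observation is that every fair set $S$ has $|S\cap U_i| = c_i|S|/d$, so $\gcd(c_1,\dots,c_k)=1$ forces $d\mid|S|$; hence every cluster of $\mcP$ has size $\ell d$ for an integer $\ell\ge1$, and the refinement process terminates with all clusters of size $d$.

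To track the cost I would use the decomposition $\cost{\mcP} = m + \sum_{C\in\mcP}\bigl(\binom{|C|}{2}-2\,e(C)\bigr)$, where $m=|E|$ and $e(C)$ is the number of edges inside $C$ (the intra-cluster cost of $C$ is $\binom{|C|}{2}-e(C)$ and the inter-cluster cost is $m-\sum_{C}e(C)$). Replacing one fair cluster $C$ of size $\ell d$ by a partition of it into $\ell$ fair size-$d$ sets $C_1,\dots,C_\ell$, and leaving the rest of $\mcP$ fixed, changes the cost by exactly $\ell\binom{d}{2}-\binom{\ell d}{2}+2X = -\tfrac{\ell(\ell-1)d^2}{2}+2X$, where $X=e(C)-\sum_{j}e(C_j)$ is the number of edges of $G$ the refinement cuts. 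So the entire lemma reduces to the claim that inside any fair cluster $C$ of size $\ell d$ with $\ell\ge2$ there is a partition into $\ell$ fair size-$d$ sets that cuts at most $\tfrac{\ell(\ell-1)d^2}{4}$ edges. Granting this, refining every oversized cluster yields a fair clustering all of whose clusters have size $d$ and whose cost is at most $\cost{\mcP}$, hence optimal; and since each cluster and each incident edge is processed a bounded number of times, the transformation is linear-time, which also gives the ``further'' statement when we start from a given minimum-cost \fc with larger clusters.

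For the claim I would use bipartiteness. Write $C_A=C\cap A$, $C_B=C\cap B$, and $a_i,b_i$ for the number of color-$i$ vertices of $C$ on the two sides; every edge inside $C$ joins $C_A$ to $C_B$, so $e(C)\le|C_A|\,|C_B|$ and $e_i(C)\le a_ib_i$ per color $i$. Take a uniformly random refinement of $C$ into $\ell$ fair size-$d$ parts, i.e.\ for each color $i$ a uniformly random distribution of its $\ell c_i$ vertices into $\ell$ slots of $c_i$ each, independently across colors. Then a bichromatic edge is cut with probability $\tfrac{\ell-1}{\ell}$ and a color-$i$ monochromatic edge with probability $\tfrac{(\ell-1)c_i}{\ell c_i-1}$; summing over the edges of $C$, bounding the bichromatic count by $e(C)\le|C_A||C_B|\le\ell^2d^2/4$ and each monochromatic count by $e_i(C)\le a_ib_i\le\ell^2c_i^2/4$, and using $\sum_i c_i=d$, the aim is to bring the expectation of $X$ down to $\tfrac{\ell(\ell-1)d^2}{4}$, after which the method of conditional expectations makes the split explicit within the claimed time bound.

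The step I expect to be the main obstacle is exactly this last estimate: the first-moment bound is essentially tight when $C$ is close to a balanced complete bipartite graph, which is the configuration where a single large cluster can already tie a family of small ones (\autoref{fig:1to1_4is2}). Making the averaging go through therefore seems to require either noting that $e(C)\le|C_A||C_B|$ and the per-color bounds $e_i(C)\le a_ib_i$ cannot all be simultaneously tight and carefully accounting for the resulting slack, or else choosing the refinement so that every part is as balanced as possible with respect to $(C_A,C_B)$ and verifying the inequality part by part instead of in expectation. This boundary behaviour is also why the statement only asserts that \emph{there is} a size-$d$ optimum and that the transformation preserves (rather than strictly decreases) the cost: for ratios such as $1:1$ the refinement is exactly cost-neutral.
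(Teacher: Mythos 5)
Your reduction of the lemma to a single combinatorial claim is clean and correct: cluster sizes are forced to be multiples of $d$, and replacing a fair cluster $C$ of size $\ell d$ by $\ell$ fair parts of size $d$ changes the cost by exactly $-\tfrac{\ell(\ell-1)d^2}{2}+2X$, so everything hinges on finding a refinement with $X\le\tfrac{\ell(\ell-1)d^2}{4}$. The problem is that this claim is the entire content of the lemma, and the tool you propose for it --- a uniformly random fair refinement plus a first-moment bound --- provably does not deliver it. Take $C=K_{2,2}$ with one red and one blue vertex on each side ($c_1=c_2=1$, $d=2$, $\ell=2$): the target is $X\le 2$, but the two fair refinements cut $4$ and $2$ edges respectively, so the uniform expectation is $3$. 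The culprit is exactly what your own calculation shows: monochromatic edges are cut with probability $\tfrac{(\ell-1)c_i}{\ell c_i-1}>\tfrac{\ell-1}{\ell}$, and the bounds $e(C)\le|C_A||C_B|\le\ell^2d^2/4$ and $e_i(C)\le a_ib_i\le \ell^2c_i^2/4$ \emph{can} all be tight simultaneously (balanced complete bipartite cluster with every color split evenly), so the slack you hope to exploit is not there. You flag this as the main obstacle and offer two possible repairs, but neither is carried out, so the proof is incomplete at its decisive step.

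The repair you sketch second --- forcing each part to be balanced with respect to $(C_A,C_B)$ --- is in fact what the paper does, and it is where the real work lies. The paper peels off \emph{one} fair set $\widehat S$ of size $d$ at a time, chosen so that $0\le|\widehat S\cap A|-|\widehat S\cap B|\le|S\cap A|-|S\cap B|$; establishing that such a set exists is a genuinely nontrivial counting argument (first building a fair $\widetilde S$ drawing a majority from the larger side, then swapping vertices across the bipartition to control the imbalance). Given such an $\widehat S$, the cost comparison is deterministic: the at most $|\widehat S_A||S_B\setminus\widehat S_B|+|\widehat S_B||S_A\setminus\widehat S_A|$ newly cut edges are paid for by the $|\widehat S_A||S_A\setminus\widehat S_A|+|\widehat S_B||S_B\setminus\widehat S_B|$ same-side non-adjacent pairs that are newly separated, and the difference factors as $(|\widehat S_A|-|\widehat S_B|)(|S_A|-|S_B|)-(|\widehat S_A|-|\widehat S_B|)^2\ge 0$. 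To complete your proof you would need to supply both the existence argument for the balanced fair subset and this per-split accounting; the probabilistic route should be abandoned rather than patched. (Your linear-time claim also currently rests on derandomizing by conditional expectations, which is not obviously linear; the greedy peeling above is.)
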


In summary, the results above show that the ratio of the color classes is the key parameter
determining the cluster size.
If the input is a bipartite graph
whose vertices are colored with~$k$ colors in a ratio of $c_1 : c_2 : \dots : c_k$, our results imply that without loosing optimality, solutions can be restricted to contain only clusters of size $d = \sum_{i=1}^k c_i$, each with exactly $c_i$ vertices of color $i$.
Starting from these observations, we show in this work that the color ratio is also
the key parameter determining the complexity of \fcc.
On the one hand, the simple structure of optimal solutions restricts the search space
and enables polynomial-time algorithms, at least for some instances.
On the other hand, these insights allow us to show hardness already for very restricted input classes. 
The technical part of most of the proofs consists of exploiting the connection between
the clustering cost, total number of edges,
and the number of edges cut by a clustering.

\subsection{Tractable Instances}
\label{subsec:contrib_feasible}

\setlength{\belowrulesep}{.5em}

\begin{table}
\caption{%
	Running times of our algorithms for \fcc on forests depending on the color ratio.
 	Value $p$ is any rational such that $\sfrac{n}{p}-1$ is integral;
 	$c_1, c_2, \dots, c_k$ are coprime positive integers, possibly depending on $n$.
    Functions \(f\) and \(g\) are 
    given in \Cref{thm:forestByColorsAlgo,thm:forestLarge1_CAlgo}.
}

\centering
\begin{tabular}{lccccc}
	\toprule
	Color Ratio		& \(1:1\) & \(1:2\) & 
		\(1:(\sfrac{n}{p} \,{-}\, 1)\) & \(c_1:c_2:\ldots:c_k\)\\[.5em]
	Running Time	& $\bigO(n)$ & $\bigO(n^6)$ & 
		$\bigO\!\left(n^{f(p)} \right)$ & $\bigO\!\left(n^{g(c_1,\dots,c_k)}\right)$\\[.5em]
	\bottomrule
\end{tabular}
\label{table:resultsNoRestriction}
\end{table}

We start by discussing the algorithmic results.
The simplest case is that of two colors, each one occurring equally often.
We prove that for bipartite graphs with a color ratio $1:1$
\fcc is equivalent to the maximum bipartite matching problem,
namely, between the vertices of different color.
Via the standard reduction to computing maximum flows,
this allows us to benefit from the recent breakthrough by
Chen, Kyng, Liu, Peng, Probst Gutenberg, and Sachdeva~\cite{Chen22MaxFlowAlmostLinear}.
It gives an algorithm running in time $m^{1+o(1)}$.

The remaining results focus on forests as the input, see \autoref{table:resultsNoRestriction}.
It should not come as a surprise that our main algorithmic paradigm is dynamic programming.
A textbook version finds a maximum matching in linear time in a forests, solving the $1:1$ case.
For general color ratios, we devise much more intricate dynamic programs.
We use the color ratio $1:2$ as an introductory example.
The algorithm has two phases.
In the first, we compute a list of candidate \emph{splittings}
that partition the forest into connected parts
containing at most 1 blue and 2 red vertices each.
In the second phase, we assemble the parts of each of the splittings 
to fair clusters and return the cheapest resulting clustering.
The difficulty lies in the two phases not being independent from each other.
It is not enough to minimize the ``cut'' edges in the two phases separately.
We prove that the costs incurred by the merging additionally depends on the number of 
of parts of a certain type generated in the splittings.
Tracking this along with the number of cuts results in a $\bigO(n^6)$-time algorithm.
Note that we did not optimize the running time as long as it is polynomial.

We generalize this to $k$ colors in a ratio $c_1 : c_2 : \dots : c_k$.\footnote{%
	The $c_i$ are coprime, but they are not necessarily constants with respect to $n$.
}
We now have to consider \emph{all} possible colorings 
of a partition of the vertices such that in each part the $i$-th color occurs at most $c_i$ times.
While assembling the parts, we have to take care that the merged colorings remain compatible.
The resulting running time is $\bigO(n^{g(c_1,\dots,c_k)})$ for some (explicit) polynomial $g$.
Recall that, by \autoref{lem:smallClustersForest}, the minimum cluster size is $d = \sum_{i=1}^k c_i$.
If this is a constant, then the dynamic program runs in polynomial time.
If, however, the number of colors $k$ or some color's proportion grows with $n$,
it becomes intractable.
Equivalently, the running time gets worse if there are very large but sublinearly many clusters.

To mitigate this effect,
we give a complementary algorithm at least for forests with two colors.
Namely, consider the color ratio $1: \frac{n}{p}-1$.
Then, an optimal solution has $p$ clusters each of size $d = \sfrac{n}{p}$.
The key observation is that the forest contains $p$ vertices of the color with fewer occurrences,
say, blue, and any fair clustering isolates the blue vertices.
This can be done by cutting at most $p-1$ edges
and results in a collection of (sub-)trees where each one has at most one blue vertex.
To obtain the clustering, we split the trees with red excess vertices
and distribute those among the remaining parts.
We track the costs of all the $\bigO(n^{\poly(p)})$ many cut-sets and rearrangements
to compute the one of minimum cost.
In total, the algorithm runs in time $\bigO(n^{f(p)})$ for some polynomial in $p$.
In summary, we find that if the number of clusters $p$ is constant, then the running time is polynomial.
Considering in particular an integral color ratio \(1:c\),\footnote{
	In a color ratio $1:c$, $c$ is not necessarily a constant,
	but ratios like $2:5$ are not covered.
}, we find tractability for forests if $c = \bigO(1)$ or \(c = \bigOmega(n)\). 
We will show next that \fcc with this kind of a color ratio is \NP-hard already on trees,
hence the hardness must emerge somewhere for intermediate $c$.

\subsection{A Dichotomy for Bounded Diameter}
\label{subsec:contrib_diameter}

\begin{table}[t]
\caption{Complexity of \fcc on trees and general graphs depending on the diameter.
	The value $c$ is a positive integer, possibly depending on $n$.}
    
\centering
\begin{tabular}{cccc}
	\toprule
	Diameter & Color Ratio & Trees & General Graphs\\[.25em]
	\hline\\[-.75em]
	\(2,3\) & any & $\bigO(n)$ & \NP-hard \\[.25em]
	\(\ge 4\) & \(1:c\) & \NP-hard & \NP-hard\\[.25em]
	\bottomrule
    \end{tabular}
\label{table:resultsDiameter}
\end{table}

\autoref{table:resultsDiameter} shows the complexity of \fcc on graphs with bounded diameter.
We obtain a dichotomy for trees with two colors with ratio $1:c$.
If the diameter is at most $3$, an optimal clustering is computable in $\bigO(n)$ time,
but for diameter at least $4$, the problem becomes \NP-hard.
In fact, the linear-time algorithm extends to trees with an arbitrary number of colors in any ratio.

The main result in that direction is the hardness of \fcc already on trees with diameter at least 4 and two colors of ratio $1:c$.
This is proven by a reduction from the strongly \NP-hard \thrPart problem.
There, we are given positive integers $a_1, \dots, a_\ell$ where $\ell$ is a multiple of $3$
and there exists some $B$ with $\sum_{i = 1}^\ell a_i = B \cdot \frac{\ell}{3}$.
The task is to partition the numbers $a_i$ into triples such that each one of those sums to $B$.
The problem remains \NP-hard if all the $a_i$ are strictly between $\sfrac{B}{4}$ and $\sfrac{B}{2}$,
ensuring that, if some subset of the numbers sums to $B$, it contains exactly three elements.

We model this problem as an instance of \fcc as illustrated in \autoref{fig:treeOctopus_main}.
We build $\ell$ stars, where the $i$-th one consists of $a_i$ red vertices,
and a single star of $\sfrac{\ell}{3}$ blue vertices.
The centers of the blue star and all the red stars are connected.
The color ratio in the resulting instance is $1:B$.
\autoref{lem:smallClustersForest} then implies that
there is a minimum-costs clustering with $\sfrac{\ell}{3}$ clusters,
each with a single blue vertex and $B$ red ones.
We then apply \autoref{lem:costByCuts} to show that 
this cost is below a certain threshold if and only if
each cluster consist of exactly three red stars (and an arbitrary blue vertex),
solving \thrPart.

\begin{figure}
        \centering
        \includegraphics[height=.26\textwidth]{octopusTree_main}
        \caption{The tree with diameter 4 in the reduction from \thrPart to \fcc.}
    \label{fig:treeOctopus_main}
\end{figure}

\subsection{Maximum Degree}
\label{subsec:contrub_degree}

The reduction above results in a tree with a low diameter but arbitrarily high maximum degree.
We have to adapt our reductions to show hardness also for bounded degrees.
The results are summarized in \autoref{table:resultsDegree}.
If the \fcc instance is not required to be connected, we can represent \thrPart 
with a forest of trees with maximum degree $2$, that is, a forest of paths.
The input numbers are modeled by paths with $a_i$ vertices.
The forest also contains $\sfrac{\ell}{3}$ isolated blue vertices, which again implies that
an optimal fair clustering must have $\sfrac{\ell}{3}$ clusters each with $B$ red vertices.
By defining a sufficiently small cost threshold, 
we ensure that the fair clustering has cost below it if and only if none of the path-edges are ``cut''
by the clustering, corresponding to a partition of the $a_i$.

There is nothing special about paths, we can arbitrarily restrict the shape of the trees,
as long it is always possible to form such a tree with a given number of vertices.
However, the argument crucially relies on the absence of edges between the $a_i$-paths/trees
and does not transfer to connected graphs.
This is due to the close relation between inter-cluster costs
and the total number of edges stated in \autoref{lem:costByCuts}.
The complexity of \fcc on a single path with a color ratio $1:c$ therefore remains open. 
Notwithstanding, we show hardness for trees in two closely related settings:
keeping the color ratio at $1:c$ but raising the maximum degree to $5$,
or having a single path but a total of $\sfrac{n}{2}$ colors
and each color shared by exactly $2$ vertices.

For the case of maximum degree $5$ and two colors with ratio $1:c$,
we can again build on the \thrPart machinery.
The construction is inspired by how Feldmann and Foschini~\cite{Feldmann_Foschini_2015}
used the problem to show hardness of computing so-called $k$-balanced partitions.
We adapt it to our setting in which the vertices are colored
and the clusters need to be fair.

For the single path with $\sfrac{n}{2}$ colors, we reduce from
(the 1-regular 2-colored variant of) the 
\textsc{Paint Shop Problem for Words}~\cite{Epping_Hochstattler_Oertel_2004}.
There, a word is given in which every symbol appears exactly twice.
The task is to assign the values $0$ and $1$ to the letters of the word\footnote{%
	The original formulation~\cite{Epping_Hochstattler_Oertel_2004} assigns colors, 
	aligning better with the paint shop analogy.
	We change the exposition here in order to avoid confusion with
	the colors in the fairness sense.
}
such that that, for each symbol, exactly one of the two occurrences receives a $1$,
but the number of blocks of consecutive $0$s and $1$s
over the whole word is minimized.
In the translation to \fcc,
we represent the word as a path and the symbols as colors.
To remain fair, there must be two clusters containing exactly one vertex of each color,
translating back to a $0/1$-assignment to the word.

\begin{table}[t]
\caption{Hardness of \fcc on trees and forests depending on the maximum degree.
	The value $c$ is a positive integer, possibly depending on $n$.
	The complexity for paths (trees with maximum degree $2$) 
	with color ratio $1:c$ is open.}

\centering
\begin{tabular}{cccc}
	\toprule
	Max.\ Degree & Color Ratio & Trees & Forests\\[.25em]
    \hline\\[-.75em]
	\(2\) & \(1:c\) &  & \NP-hard \\[.25em]
	\(\ge 2\) & \specialcell{\(n/2\) colors,\\2 vertices each} & \NP-hard & \NP-hard \\[.75em]
	\(\ge 5\) & \(1:c\) & \NP-hard & \NP-hard \\[.25em]
    \bottomrule
\end{tabular}
\label{table:resultsDegree}
\end{table}

\subsection{Relaxed Fairness}
\label{subsec:contrib_relaxed}

One could think that the hardness of \fcc already for classes of trees and forests 
has its origin in the strict fairness condition.
After all, the color ratio in each cluster must precisely mirror that of the whole graph.
This impression is deceptive.
Instead, we lift most of our hardness results to \rfcc considering the \emph{relaxed fairness}
of Bera~et~al.~\cite{Bera_Chakrabarty_Flores_Negahbani_2019}.
Recall \autoref{def:relaxes_fair_set}.
It prescribes two rationals $p_i$ and $q_i$ for each color $i$
and allows, the proportion of $i$-colored elements  in any cluster to be in the interval $[p_i,q_i]$,
instead of being precisely $\sfrac{c_i}{d}$, where $d = \sum_{j=1}^k c_j$.  

The main conceptual idea is to show that, in some settings but not all,
the \emph{minimum-cost} solution under a relaxed fairness constraint is in fact \emph{exactly} fair.
This holds for the settings described above where we reduce from \thrPart.
In particular, \rfcc with a color ratio of $1:c$ is \NP-hard on trees with diameter $4$
and forests of paths, respectively.
Furthermore, the transferal of hardness is immediate for the case of a single path with $\sfrac{n}{2}$ colors
and exactly $2$ vertices of each color.
Any relaxation of fairness still requires one vertex of each color in every cluster,
maintaining the equivalence to the \textsc{Paint Shop Problem for Words}.

In contrast, algorithmic results are more difficult to extend
if there are relaxedly fair solutions that have lower cost than any exactly fair one.
We then no longer know the cardinality of the clusters in an optimal solution.
As a proof of concept, we show that a slight adaption of our dynamic program for two colors in a ratio of $1:1$ still works for what we call $\alpha$-\emph{relaxed fairness}.\footnote{%
	This should not be confused with the notion of $\alpha$\emph{-fairness}
	in resource allocation~\cite{JangYang22AlphaFairness,KumariSingh18AlphaFairness}.
}
There, the lower fairness ratio is $p_i = \alpha \cdot \frac{c_i}{d}$
and the upper one is $q_i = \frac{1}{\alpha} \cdot \frac{c_i}{d}$ for some parameter $\alpha \in (0,1)$.
We give an upper bound on the necessary cluster size depending on $\alpha$,
which is enough to find a good splitting of the forest.
Naturally, the running time now also depends on $\alpha$, but is of the form $O(n^{h(1/\alpha)})$
for some polynomial $h$.
In particular, we get an polynomial-time algorithm for constant $\alpha$. 
The proof of correctness consists of an exhaustive case distinction
already for the simple case of $1:1$.
We are confident that this can be extended to more general color ratios,
but did not attempt it in this work.

\subsection{Summary and Outlook}
\label{subsec:contrib_summary}

We show that \fcc on trees, and thereby forests, is \NP-hard. 
It remains so on trees of constant degree or diameter, 
and--for certain color distributions--it is also \NP-hard on paths. 
On the other hand,
we give a polynomial-time algorithm if the minimum size \(d\) of a fair cluster is constant. We also provide an efficient algorithm for the color ratio \(1:c\) if the total number of clusters is constant, corresponding to \(c\in\Theta(n)\).
For our main algorithms and hardness results, we prove that they still hold when the fairness constraint is relaxed, so the hardness is not due to the strict fairness definition.
%
%
Ultimately, we hope that the insights gained from these proofs as well as our proposed algorithms prove helpful to the future development of algorithms to solve \fcc on more general graphs.
In particular, fairness with color ratio \(1:c\) with $c$ being very large seems to be an interesting and potentially tractable type of distribution for future study.

As first steps to generalize our results,
we give a polynomial-time approximation scheme (PTAS) for \fcc on forests. 
Another avenue for future research could be that \autoref{lem:smallClustersBipartite},
bounding the cluster size of optimal solutions, extends also to bipartite graphs.
This may prove helpful in developing exact algorithms for bipartite graphs with other 
color ratios than \(1:1\).

Parameterized algorithms are yet another approach to solving more general instances. 
When looking at the decision version of \fcc,
our results can be cast as an \XP-algorithm when the problem is parameterized by the cluster size $d$,
for it can be solved in time $\bigO(n^{g(d)})$ for some function $g$.
Similarly, we get an \XP-algorithm for the number of clusters as parameter.
We wonder whether \fcc can be placed in the class \FPT of fixed-parameter tractable problems
for any interesting structural parameters.
This would require a running time of, e.g., $g(d) \cdot \poly(n)$.
There are \FPT-algorithms for \ce parameterized by the cost of the solution~\cite{Boecker_Baumbach_2013}. 
Possibly, future research might provide similar results for the fair variant as well.
A natural extension of our dynamic programming approach could potentially lead to 
an algorithm parameterizing by the treewidth of the input graph.
Such a solution would be surprising, however, since to the best of our knowledge 
even for normal, unfair \cc\footnote{%
	In more detail, no algorithm for complete \cc has been proposed. 
	Xin \cite{Xin_2011} gives a treewidth algorithm for \emph{incomplete} \cc for the treewidth of the graph of all positively and negatively labeled edges.
}
and for the related \textsc{Max Dense Graph Partition}~\cite{Darlay_Brauner_Moncel_2012} no treewidth approaches are known.

Finally, it is interesting how \fcc behaves on paths. 
While we obtain \NP-hardness for a particular color distribution from the \textsc{Paint Shop Problem For Words}, the question of whether \fcc on paths with for example two colors in a ratio of \(1:c\) is efficiently solvable or not is left open. 
However, we believe that this question is rather answered by the study of the related (discrete) \textsc{Necklace Splitting} problem, see the work of Alon and West~\cite{Alon_West_1986}.
There, the desired cardinality of every color class is explicitly given,
and it is non-constructively shown that there always exists a split of the necklace
with the number of cuts meeting the obvious lower bound.
A constructive splitting procedure may yield some insights for \fcc on paths.

\vspace*{1em}
\noindent

\section{Preliminaries}
\label{sec:prelims}
We fix here the notation we are using for the technical part and give the formal definition of \fcc.

\subsection{Notation}
\label{subsec:notation}

We refer to the set of natural numbers \(\{0,1,2,\ldots\}\) by \(\N\).
For \(k\in \N\), let \([k]=\{1,2,\ldots, k\}\) and \(\N_{>k}= \N\setminus\left(\{0\}\cup [k]\right)\). We write \(2^{[k]}\) for the power set of \([k]\).
By \(\gcd(a_1,a_2,\ldots,a_k)\) we denote the \emph{greatest common divisor} of \(a_1,a_2\ldots,a_k\in\N\). 

An \emph{undirected graph} \(G=(V,E)\) is defined by a set of vertices \(V\) and a set of edges \(E\subseteq \binom{V}{2} = \{\{u,v\} \mid u,v\in V, u\neq v\}\). If not stated otherwise, by the \emph{size of \(G\)} we refer to \(n+m\), where \(n=|V|\) and \(m=|E|\). A graph is called \emph{complete} if \(m = \frac{n(n-1)}{2}\). We call a graph \(G=(A\cup B, E)\) \emph{bipartite} if there are no edges in \(A\) nor \(B\), i.e., \(E \cap \binom{A}{2} = E \cap \binom{B}{2} = \emptyset\).
For every \(S\subseteq V\), we let \(G[S] = \left(S, E\cap\binom{S}{2}\right)\) denote the \emph{subgraph induced by \(S\)}.
The \emph{degree} of a vertex \(v\in V\) is the number of edges incident to that vertex, \(\delta(v)=|\{u \mid \{u,v\}\in E\}|\). The \emph{degree} of a graph \(G=(V,E)\) is the maximum degree of any of its vertices \(\delta(G)=\max_{v\in V} \delta(v)\).
A \emph{path} of length \(k\) in \(G\) is a tuple of vertices \((v_1, v_2, \ldots, v_{k-1})\) such that for each \(1\le i< k-1\) we have \(\{v_i,v_{i+1}\}\in E\). 
We only consider simple paths, i.e., we have \(v_i\neq v_j\) for all \(i\neq j\). A graph is called \emph{connected} if for every pair of vertices \(u,v\) there is a path connecting \(u\) and \(v\).
The \emph{distance} between two vertices is the length of the shortest path connecting these vertices and the \emph{diameter} of a graph is the maximum distance between a pair of vertices.  A \emph{circle} is a path \((v_1, v_2, \ldots, v_k)\) such that \(v_1 = v_k\) and \(v_i\neq v_j\) only for all other pairs of \(i\neq j\). 

A \emph{forest} is a graph without circles. A connected forest is called a \emph{tree}. There is exactly one path connecting every pair of vertices in a tree. A tree is \emph{rooted} by choosing any vertex \(r\in V\) as the root. Then, every vertex \(v\), except for the root, has a \emph{parent}, which is the next vertex on the path from \(v\) to \(r\). All vertices that have \(v\) as a parent are referred to as the \emph{children} of \(v\). A vertex without children is called a \emph{leaf}. Given a rooted tree \(T\), by \(T_v\) we denote the subtree induced by \(v\) and its descendants, i.e., the set of vertices such that there is a path starting in \(v\) and ending in that vertex without using the edge to \(v\)'s parent. Observe that each forest is a bipartite graph, for example by placing all vertices with even distance to the root of their respective tree on one side and the other vertices on the other side.

A finite set \(U\) can be \emph{colored} by a function \(c: U\rightarrow [k]\), for some \(k\in\N_{>0}\). If there are only two colors, i.e., \(k=2\), for convenience we call them \emph{red} and \emph{blue}, instead by numbers.

For a \emph{partition} \(\mcP = \{S_1, S_2, \ldots, S_k\}\) with \(S_i\cap S_j = \emptyset\) for \(i\neq j\) of some set \(U = S_1\cup S_2\cup \ldots \cup S_k\) and some \(u\in U\) we use \(\mcP[u]\) to refer to the set \(S_i\) for which \(u\in S_i\). Further, we define the term \emph{coloring} on sets and partitions. The \emph{coloring of a set} counts the number of occurrences of each color in the set.  

\begin{definition}[Coloring of Sets]
    Let \(S\) be a set colored by a function \(c\colon S\rightarrow[k]\). Then, the coloring of \(S\) is an array \(C_S\) such that \(C_S[i] = |{\{s\in S\mid c(s)=i}\}|\) for all \(i\in[k]\).
\end{definition}
The \emph{coloring of a partition} counts the number of occurrences of set colorings in the partition. 
\begin{definition}[Coloring of Partitions]\label{def:coloringPartition}
    Let \(U\) be a colored set and let \(\mcP\) be a partition of \(U\). Let \(\mathcal{C} = \{C_S \mid S\subseteq U\}\) denote the set of set colorings for which there is a subset of \(U\) with that coloring.
    By an arbitrarily fixed order, let \(C_1,C_2,\ldots,C_\ell\) denote the elements of \(\mathcal{C}\).
    Then, the coloring of \(\mcP\) is an array \(C_\mcP\) such that \(C_\mcP[i] = |\{S\in \mcP\mid C_S = C_i\}|\) for all \(i\in[\ell]\).
\end{definition}

\subsection{Problem Definitions}
\label{subsec:problemDef}

In order to define \fcc, we first give a formal definition of the unfair clustering objective.
\cc receives a pairwise similarity measure for a set of objects and aims at minimizing the number of similar objects placed in separate clusters and the number of dissimilar objects placed in the same cluster. For the sake of consistency, we reformulate the definition of Bonchi et al.\ \cite{Bonchi_GarciaSoriano_Gullo_2022} such that the pairwise similarity between objects is given by a graph rather than an explicit binary similarity function. 
Given a graph \(G=(V,E)\) and a partition \(\mcP\) of \(V\), the \cccost is 
\begin{gather*}
    \cost{G,\mcP} = |\nwspace \{\{u,v\}\in \tbinom{V}{2}\setminus E \mid \mcP[u] = \mcP[v] \} \nwspace |
    	 + |\nwspace  \{\{u,v\}\in E \mid \mcP[u] \neq \mcP[v]\} \nwspace |.
\end{gather*}
We refer to the first summand as the \emph{intra-cluster cost} \(\psi\) and the second summand as the \emph{inter-cluster cost} \(\chi\). Where \(G\) is clear from context, we abbreviate to \(\cost{\mcP}\).  Sometimes, we consider the cost of \(\mcP\) on an induced subgraph. To this end, we allow the same cost definition as above also if \(\mcP\) partitions some set \(V'\supseteq V\).
We define (unfair) \cc as follows.

\optPDef{\cc}
{Graph \(G=(V,E)\).}
{Find a partition \(\mcP\) of \(V\) that minimizes \(\cost{\mcP}\).}

We emphasize that this is the complete, unweighted, min-disagree form of \cc. It is complete as every pair of objects is either similar or dissimilar but none is indifferent regarding the clustering.
It is unweighted as the (dis)similarity between two vertices is binary. 
A pair of similar objects that are placed in separate clusters as well as a pair of dissimilar objects in the same cluster is called a \emph{disagreement}, hence the naming of the min-disagree form. An alternative formulation would be the max-agree form with the objective to maximize the number of pairs that do not form a disagreement. Note that both formulations induce the same ordering of clusterings though approximation factors may differ because of the different formulations of the cost function. 

Our definition of the \fcc problem loosely follows \cite{Ahmadian_Epasto_Kumar_Mahdian_2020}. The fairness aspect limits the solution space to \emph{fair} partitions. A partition is fair if each of its sets has the same color distribution as the universe that is partitioned.

\begin{definition}[Fair Subset]
    Let \(U\) be a finite set of elements colored by a function \(c : U\rightarrow [k]\) for some \(k\in \N_{>0}\). Let \(U_i = \{u\in U\mid c(u)=i\}\) be the set of elements of color \(i\) for all \(i\in[k]\). Then, some \(S\subseteq U\) is fair if and only if for all colors \(i\in [k]\) we have \(\frac{|S\cap U_i|}{|S|} = \frac{|U_i|}{|U|}\).
\end{definition}
\begin{definition}[Fair Partition]
    Let \(U\) be a finite set of elements colored by a function \(c : U\rightarrow [k]\) for some \(k\in \N_{>0}\). Then, a partition \(S_1\cup S_2 \cup \ldots \cup S_\ell = U\) is fair if and only if all sets \(S_1, S_2, \ldots, S_\ell\) are fair.
\end{definition}

We now define complete, unweighted, min-disagree variant of the \fcc problem.
When speaking of (\textsc{Fair}) \cc, we refer to this variant,
unless explicitly stated otherwise.

\optPDef{\fcc}
{Graph $G = (V, E)$, coloring $c\colon V\rightarrow [k]$.}
{Find a fair partition $\mathcal{P}$ of \(V\) that minimizes \(\cost{\mcP}\).}

\section{Structural Insights}
\label{sec:structure}
We prove here the structural results outlined in \autoref{subsec:contrib_structural}.
The most important insight is that in bipartite graphs, and in forests in particular, there is always a minimum-cost \fc such that all clusters are of some fixed size. This property is very useful, as it helps for building reductions in hardness proofs as well as algorithmic approaches that enumerate possible clusterings. Further, by the following lemma, this also implies that minimizing the inter-cluster cost suffices to minimize the \cccost, which simplifies the development of algorithms solving \fcc on such instances.

\costbycuts*

\begin{proof}
    Note that in each of the \(\frac{n}{d}\) clusters there are \(\frac{d(d-1)}{2}\) pairs of vertices, each incurring an intra-cost of 1 if not connected by an edge. Let the total intra-cost be \(\psi\).
    As there is a total of \(m\) edges, we have 
    \begin{gather*}
        \cost{\mcP} = \chi + \psi = \chi + \frac{n}{d}\cdot \frac{d(d-1)}{2} -(m - \chi) 
        	= \frac{(d-1)n}{2} -m +2\chi. \qedhere
    \end{gather*}
\end{proof}

In particular, if \(G\) is a tree, this yields \(\cost{\mcP} = \frac{(d-3)n}{2} + 2\chi +1\) as there \(m=n-1\).

\subsection{Forests}
\label{sec:structural_forests}

We find that in forests in every minimum-cost partition all sets in the partition are of the minimum size required to fulfill the fairness requirement.

\smallclustersforest*

\begin{proof}
    Let \(d = \sum_{i=1}^k c_i\).
    For any clustering \(\mcP\) of \(V\) to be fair, all clusters must be at least of size \(d\). We show that if there is a cluster \(S\) in the clustering with \(|S|>d\), then we decrease the cost by splitting \(S\).
    First note that in order to fulfill the fairness constraint, we have \(|S| = ad\) for some \(a\in \N_{\ge 2}\).
    Consider a new clustering \(\mcP'\) obtained by splitting \(S\) into \(S_1, S_2\), where \(S_1\subset S\) is an arbitrary fair subset of \(S\) of size \(d\) and \(S_2 = S\setminus S_1\).
    Note that the cost incurred by every edge and non-edge with at most one endpoint in \(S\) is the same in both clusterings. Let \(\psi\) be the intra-cluster cost of \(\mcP\) on \(F[S]\). Regarding the cost incurred by the edges and non-edges with both endpoints in \(S\), we know that
    \begin{gather*}
        \cost{F[S], \mcP} \ge \psi \ge \frac{ad(ad-1)}{2}-(ad-1)
         	= \frac{a^2d^2 - 3ad+2}{2}
    \end{gather*}
    since the cluster is of size \(ad\) and as it is part of a forest it contains at most \(ad-1\) edges.
    In the worst case, the \(\mcP'\) cuts all the \(ad-1\) edges. However, we profit from the smaller cluster sizes. We have
    \begin{align*}
        \cost{F[S], \mcP'} = \chi + \psi 
        	&\le ad - 1 + \frac{d(d-1)}{2} + \frac{(a-1)d \cdot ((a-1)d - 1)}{2}\\
        	&= \frac{2d^2+ a^2d^2-2ad^2+ad-2}{2}.
    \end{align*}
    Hence, \(\mcP'\) is cheaper by
    \begin{gather*}
        \cost{F[S], \mcP} - \cost{F[S], \mcP'} \ge \frac{2ad^2-2d^2 - 4ad +4}{2}
        	= ad(d-2)-d^2+2.
    \end{gather*}
    This term is increasing in \(a\). As \(a\ge 2\), by plugging in \(a=2\), we hence obtain a lower bound of $\cost{F[S], \mcP} - \cost{F[S], \mcP'} \ge d^2-4d+2$.
     For \(d\ge 2\), the bound is increasing in \(d\) and it is positive for \(d> 3\). This means, if \(d>3\) no clustering with a cluster of size more than \(d\) has minimal cost implying that all optimum clusterings only consist of clusters of size \(d\). 

    Last, we have to argue the case \(d=3\), i.e., we have a color ratio of \(1:2\) or \(1:1:1\). In this case \(d^2-4d+2\) evaluates to \(-1\).
    However, we obtain a positive change if we do not split arbitrarily but keep at least one edge uncut. Note that this means that one edge less is cut and one more edge is present, which means that our upper bound on \(\cost{T[S], \mcP'}\) decreases by 2, so \(\mcP\) is now cheaper. 
    Hence, assume there is an edge \(\{u,v\}\) such that \(c(u)\neq c(v)\). Then by splitting \(S\) into \(\{u,v,w\}\) and \(S\setminus \{u,v,w\}\) for some vertex \(w\in S\setminus\{u,v\}\) that makes the component \(\{u,v,w\}\) fair, we obtain a cheaper clustering.
    If there is no such edge \(\{u,v\}\), then \(T[S]\) is not connected. This implies there are at most \(3a-3\) edges if the color ratio is \(1:1:1\) since no edge connects vertices of different colors and there are \(a\) vertices of each color, each being connected by at most \(a-1\) edges due to the forest structure. By a similar argument, there are at most \(3a-2\) edges if the color ratio is \(1:2\). Hence, the lower bound on \(\cost{T[S],\mcP}\) increases by 1. At the same time, even if \(\mcP'\) cuts all edges it cuts at most \(3a-2\) times, so it is at least 1 cheaper than anticipated. Hence, in this case \(\cost{T[S], \mcP'} < \cost{T[S], \mcP}\) no matter how we cut.
\end{proof}

Note that \Cref{lem:smallClustersForest} makes no statement about the case of two colors in a ratio of \(1:1\).

\subsection{Bipartite Graphs}
\label{subsec:structural_bipartite}

We are able to partially generalize our findings for trees to bipartite graphs. We show that there is still always a minimum-cost \fc with cluster sizes fixed by the color ratio. However, in bipartite graphs there may also be minimum-cost clusterings with larger clusters. We start with the case of two colors in a ratio of \(1:1\) and then generalize to other ratios. 

\begin{restatable}{lemma}{smallclustersbipartiteoneone}
\label{lem:smallClustersBipartiteOneOne}
    Let \(G=(A\cup B, E)\) be a bipartite graph with two colors in a ratio of \(1:1\). Then, there is a minimum-cost \fc of \(G\) that has no clusters with more than 2 vertices.  
    Further, each minimum-cost \fc can be transformed into a minimum-cost \fc such that all clusters contain no more than 2 vertices in linear time.
    If \(G\) is a forest, then no cluster in a minimum-cost \fc is of size more than 4.
\end{restatable}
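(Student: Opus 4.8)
The plan is to study how the cost of a fair clustering changes when a single cluster is broken into smaller fair pieces, and to show that in bipartite graphs this can always be done without increasing the cost, while in forests it \emph{strictly} decreases the cost once a cluster has size at least~$6$. The key tool is a local cost-difference identity. Write $e(X)=|E(G[X])|$. If a partition $\mcP$ has a cluster $S$ of size $2a$ and $\mcP'$ is obtained by replacing $S$ with fair sub-clusters $S_1,\dots,S_t$, then only pairs with both endpoints in $S$ change their contribution, and a direct count gives
\[
\cost{\mcP'}-\cost{\mcP}=\sum_{j=1}^{t}\tbinom{|S_j|}{2}-\tbinom{2a}{2}+2\Bigl(e(S)-\sum_{j=1}^{t}e(S_j)\Bigr).
\]
The whole argument is then a matter of choosing good splittings and plugging into this identity; recall that for color ratio $1:1$ every fair cluster has even size with equally many red and blue vertices.

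For the bipartite graph $G=(A\cup B,E)$ and a cluster $S$ with $|S|=2a\ge 4$ I would distinguish two cases. (i) If $G[S]$ contains an edge $\{r,b\}$ with $r$ red and $b$ blue, replace $S$ by the fair clusters $\{r,b\}$ and $S\setminus\{r,b\}$. Since $\{r,b\}\in E$, the two vertices lie on opposite sides of the bipartition, hence $\deg_S(r)+\deg_S(b)\le|A\cap S|+|B\cap S|=2a$, and the identity collapses to $\cost{\mcP'}-\cost{\mcP}=2(\deg_S(r)+\deg_S(b)-2a)\le 0$. (ii) If $G[S]$ has no red--blue edge, split $S$ into $a$ arbitrary fair clusters of size~$2$; each such cluster is a non-edge, so $\sum_j e(S_j)=0$, while all edges of $G[S]$ are monochromatic, giving $e(S)\le a^2/2\le a^2-a$ for $a\ge 2$, which again makes the difference non-positive. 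Iterating these moves --- first repeatedly pulling out red--blue edges, then splitting the remaining red--blue-edge-free cluster --- turns any fair clustering into one whose clusters all have size~$2$ without increasing the cost; starting from an optimal clustering yields both the existence claim and the transformation claim. Maintaining a list of the red--blue edges inside the current cluster and, when a pair is pulled out, deleting only the edges incident to it, makes the total work $\bigO(n+m)$, i.e.\ linear.

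For the forest statement I would argue by contradiction. If a minimum-cost fair clustering of a forest had a cluster $S$ with $|S|=2a\ge 6$, split it into $a$ arbitrary fair clusters of size~$2$. A forest on $2a$ vertices has at most $2a-1$ edges, so $e(S)\le 2a-1$ and $\sum_j e(S_j)\ge 0$, and the identity gives $\cost{\mcP'}-\cost{\mcP}\le 2a-2a^2+2(2a-1)=-2(a^2-3a+1)\le -2<0$ for $a\ge 3$, contradicting optimality. Hence every cluster of a minimum-cost fair clustering of a forest has size at most~$4$.

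The main obstacle I anticipate is case~(ii) of the bipartite argument: one must make sure the elementary bound $e(S)\le a^2-a$ is tight enough that \emph{no} such split increases the cost, even in worst configurations --- for instance when $S$ is spread evenly over $A$ and $B$ and $G[S]$ is a disjoint union of complete bipartite graphs, where a single naive maximum-matching bound on $S$ would be too weak. Treating red--blue edges separately from monochromatic ones is exactly what avoids this difficulty; the forest claim, by contrast, is routine because forests are so sparse.
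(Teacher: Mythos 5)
Your proof is correct and follows the same overall strategy as the paper's: iteratively split an oversized cluster into fair pieces and show, by locally counting newly cut edges against newly separated non-edges, that the cost cannot increase in a bipartite graph and must strictly decrease in a forest once the cluster has size at least $6$. The execution differs in two places, both legitimately. In the bipartite case the paper splits off a pair $a\in S\cap A$, $b\in S\cap B$ of \emph{different colors} that need not be adjacent, and its complementary case ($S$ meets only one side of the bipartition) has no edges at all; you instead case on whether $G[S]$ contains a \emph{bichromatic edge}, pull such an edge out when it exists (your degree bound $\deg_S(r)+\deg_S(b)\le 2a$ via the bipartition is sound), and otherwise split into all fair pairs at once, where the bound $e(S)\le a^2/2\le a^2-a$ on the purely monochromatic edge set is exactly what is needed. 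Both case distinctions are exhaustive and both counts close. In the forest case the paper peels off one carefully chosen low-degree bichromatic pair, whereas you split into all $a$ pairs simultaneously and use only $e(S)\le 2a-1$; your computation $\cost{\mcP'}-\cost{\mcP}\le -2(a^2-3a+1)<0$ for $a\ge 3$ is correct and arguably cleaner, since it avoids having to argue that a suitable low-degree pair exists. Your explicit cost-difference identity is a nice uniform packaging of bookkeeping that the paper redoes ad hoc in each case.
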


\begin{proof}
     Note that, due to the fairness constraint, each \fc consists only of evenly sized clusters.
     We prove both statements by showing that in each cluster of at least 4 vertices there are always two vertices such that by splitting them from the rest of the cluster the cost does not increase and fairness remains.

     Let \(\mcP\) be a clustering and \(S\in\mcP\) be a cluster with \(|S|\ge 4\). Let \(S_A = S\cap A\) and \(S_B = S\cap B\). 
     Assume there is \(a\in S_a\) and \(b\in S_b\) such that \(a\) and \(b\) have not the same color. Then, the clustering \(\mcP'\) obtained by splitting \(S\) into \(\{a,b\}\) and \(S\setminus \{a,b\}\) is fair. 
     We now analyze for each pair of vertices \(u,v, u\neq v\) how the incurred \cccost changes. The cost does not change for every pair of vertices of which at most one vertex of \(u\) and \(v\) is in \(S\). Further, it does not change if either \(\{u,v\}=\{a,b\}\) or \(\{u,v\}\subseteq S\setminus \{a,b\}\). 
     There are at most
    $|S_A|-1+|S_B|-1=|S|-2$
      edges with one endpoint in \(\{a,b\}\) and the other in \(S\setminus \{a,b\}\). Each of them is cut in \(\mcP'\) but not in \(\mcP\), so they incur an extra cost of at most \(|S|-2\). However, due to the bipartite structure, there are \(|S_A|-1\) vertices in \(S\setminus\{a,b\}\) that have no edge to \(a\) and \(|S_B|-1\) vertices in \(S\setminus\{a,b\}\) that have no edge to \(b\). These \(|S|-2\) vertices incur a total cost of \(|S|-2\) in \(\mcP\) but no cost in \(\mcP'\). This makes up for any cut edge in \(\mcP\), so splitting the clustering never increases the cost. 

     If there is no \(a\in S_a\) and \(b\in S_b\) such that \(a\) and \(b\) have not the same color, then either \(S_A = \emptyset\) or \(S_B = \emptyset\). In both cases, there are no edges inside \(S\), so splitting the clustering in an arbitrary fair way never increases the cost.

     By iteratively splitting large clusters in any fair clustering, we hence eventually obtain a minimum-cost \fc such that all clusters consist of exactly two vertices.  

     Now, assume \(G\) is a forest and there would be a minimum-cost clustering \(\mcP\) with some cluster \(S\in \mcP\) such that \(|S|>2a\) for some \(a\in \N_{>2}\). Consider a new clustering \(\mcP'\) obtained by splitting \(S\) into \(\{u,v\}\) and \(S\setminus\{u,v\}\), where \(u\) and \(v\) are two arbitrary vertices of different color that have at most 1 edge towards another vertex in \(S\). There are always two such vertices due to the forest structure and because there are \(\frac{S}{2}\) vertices of each color.
     Then, \(\mcP'\) is still a fair clustering. 
     Note that the cost incurred by each edge and non-edge with at most one endpoint in \(S\) is the same in both clusterings.
     Let \(\psi\) denote the intra-cluster cost of \(\mcP\) in \(G[S]\). Regarding the edges and non-edges with both endpoints in \(S\), we know that  
      \begin{gather*}
      	\cost{G[S], \mcP} \ge \psi \ge \frac{2a(2a-1)}{2}-(2a-1) = 2a^2-3a+1
      \end{gather*}
      as the cluster consists of \(2a\) vertices and has at most \(2a-1\) edges due to the forest structure.
      In the worst case, \(\mcP'\) cuts \(2\) edges. However, we profit from the smaller cluster sizes. We have
      \begin{gather*}
      	\cost{G[S], \mcP'} \le 2+ \psi \le 2+1+\frac{2(a-1)(2(a-1)-1)}{2} - (2a-1 - 2)
        	= 2a^2-5a+6.
      \end{gather*} 
      Hence, \(\mcP\) costs at least \(2a-5\) more than \(\mcP'\), which is positive as \(a>2\). Thus, in every minimum-cost \fc all clusters are of size 4 or 2.
\end{proof}

We employ an analogous strategy if there is a different color ratio than \(1:1\) in the graph. However, then we have to split more than 2 vertices from a cluster. To ensure that the clustering cost does not increase, we have to argue that we can take these vertices in some balanced way from both sides of the bipartite graph.

\smallclustersbipartite*

\begin{proof}
     Due to the fairness constraint, each \fc consists only of clusters that are of size \(ad\), where \(a\in\N_{>0}\).
     We prove the statements by showing that a cluster of size at least \(2d\) can be split such that the cost does not increase and fairness remains.

     Let \(\mcP\) be a clustering and \(S\in\mcP\) be a cluster with \(|S|= ad\) for some \(a\ge 2\). Let \(S_A = S\cap A\) as well as \(S_B = S\cap B\) and w.l.o.g.\ \(|S_A|\ge|S_B|\).
     Our proof has three steps.
     
     \begin{itemize}
      \item First, we show that there is a fair \(\widetilde{S}\subseteq S\) such that \(|\widetilde{S}| = d\) and  \(|\widetilde{S}\cap A|\ge |\widetilde{S}\cap B|\).\vspace*{.25em}
     \item Then, we construct a fair set \(\widehat{S}\subseteq S\) by replacing vertices in \(\widetilde{S}\) with vertices in \(S_B\setminus \widetilde{S}\) such that still
     \(|\widehat{S}|=d, |\widehat{S}_A|\ge |\widehat{S}_B|\), with \(\widehat{S}_A = \widehat{S}\cap A\) and \(\widehat{S}_B=\widehat{S}\cap B\), and additionally \(|\widehat{S}_A|-|\hat{S}_B| \le |S_A|-|S_B|\).\vspace*{.25em}
     \item Last, we prove that splitting \(S\) into \(\widehat{S}\) and \(S\setminus \widehat{S}\) does not increase the clustering cost. 
   \end{itemize}
   
   We then observe that the resulting clustering is fair, so the lemma's statements hold because any fair clustering with a cluster of more than \(d\) vertices is transformed into a fair clustering with at most the same cost, and only clusters of size \(d\) by repeatedly splitting larger clusters.
   
   For the first step, assume there would be no such \(\widetilde{S}\subseteq S\), i.e., that we only could take \(s < \frac{d}{2}\) vertices from \(S_A\) without taking more than \(c_i\) vertices of each color \(i\in[k]\).
     Let \(s_i\) be the number of vertices of color \(i\) among these \(s\) vertices for all \(i\in[k]\).
     Then, if \(s_i = 0\) there is no vertex of color \(i\) in \(S_A\) as we could take the respective vertex into \(\widetilde{S}\), otherwise.
     Analogously, if \(s_i < c_i\), then there are no more then \(s_i\) vertices of color \(i\) in \(S_A\).
    If we take \(s_i = c_i\) vertices, then up to all of the \(ac_i=as_i\) vertices of that color are possibly in \(S_A\). Hence, 
    $|S_A|\le \sum_{i=1}^k as_i = as <  \frac{ad}{2}$.
    This contradicts \(S_A\ge S_B\) because \(|A|+|B|=ad\). Thus, there is a fair set \(\widetilde{S}\) of size \(d\) such that \(|\widetilde{S}\cap S_A|\ge |\widetilde{S}\cap S_B|\).

    Now, for the second step, we transform \(\widetilde{S}\) into \(\widehat{S}\).
    Note that, if \(|S_A\setminus \widetilde{S}|\ge |S_B\setminus \widetilde{S}|\) it suffices to set \(\widehat{S} =\widetilde{S}\). Otherwise, we replace some vertices from \(\widetilde{S}\cap S_A\) by vertices of the respective color from \(S_B\setminus \widetilde{S}\). We have to show that after this we still take at least as many vertices from \(S_A\) as from \(S_B\) and \(|S_A|-|\widehat{S}_A|\ge |S_B|-|\widehat{S}_B|\). Let
     \begin{equation*}
      \delta= |S_B\setminus \widetilde{S}|- |S_A\setminus \widetilde{S}| > 0.
     \end{equation*}
      Recall that \(|S_A|\ge |S_B|\), so 
      \(\delta \le |\widetilde{S}\cap A|-|\widetilde{S}\cap B|\).
     Then, we build \(\widehat{S}\) from \(\widetilde{S}\) by replacing \(\frac{\delta}{2}\le\frac{d}{2}\) vertices from \(\widetilde{S}\cap S_A\) with vertices of the respective color from \(S_B\setminus \widetilde{S}\). If there are such \(\frac{\delta}{2}\) vertices, we have \(|S_A\setminus \widehat{S}_A|=|S_B\setminus \widehat{S}_B|\) and \(|\widehat{S}_A|\ge |\widehat{S}_B|\). Consequently, \(\widehat{S}\) fulfills the requirements.  

     Assume there would be no such \(\frac{\delta}{2}\) vertices but that we could only replace \(s<\frac{\delta}{2}\) vertices. Let \(s_i\) be the number of vertices of color \(i\) among these vertices for all \(i\in[k]\). By a similar argumentation as above and because there are only \((a-1)c_i\) vertices of each color \(i\) in \(S\setminus \widehat{S}\), we have
     \begin{gather*}
      |S_B\setminus \widehat{S}| \le \sum_{i=1}^k (a-1)s_i = (a-1)s
     	<  \frac{(a-1)d}{2}.
     \end{gather*}
      This contradicts \(|S_B\setminus \widetilde{S}|> |S_A\setminus \widetilde{S}|\) as \(|(S_A\cup S_B)\setminus \widetilde{S}| = (a-1)d\). Hence, there are always enough vertices to create \(\widehat{S}\).

    For the last step, we show that splitting \(S\) into \(\widehat{S}\) and \(S\setminus\widehat{S}\) does not increase the cost by analyzing the change for each pair of vertices \(\{u,v\}\in \binom{V}{2}\).
     If not \(u\in S\) and \(v\in S\), the pair is not affected. Further, it does not change if either \(\{u,v\}\subseteq \widehat{S}\) or \(\{u,v\}\subseteq (S\setminus \widehat{S})\).     
     For the remaining pairs of vertices, there are at most 
     \begin{align*}
        |\widehat{S}_A|\cdot|S_B\setminus \widehat{S}_B|+|\widehat{S}_B|\cdot |S_A\setminus \widehat{S}_A|
        =|\widehat{S}_A|\cdot |S_B| + |\widehat{S}_B|\cdot |S_A| - 2\left(|\widehat{S}_A|\cdot |\widehat{S}_B|\right)
     \end{align*}
     edges that are cut when splitting \(S\) into \(\widehat{S}\) and \(S\setminus \widehat{S}\). 
     At the same time, there are 
     \begin{align*}
        |\widehat{S}_A|\cdot|S_A\setminus \widehat{S}_A|+|\widehat{S}_B|\cdot |S_B\setminus \widehat{S}_B|
        =|\widehat{S}_A|\cdot |S_A| + |\widehat{S}_B|\cdot |S_B| - |\widehat{S}_A|^2-|\widehat{S}_B|^2
     \end{align*} 
     pairs of vertices that are not connected and placed in separate clusters in \(\mcP'\) but not in \(\mcP\). Hence, we have  \(\mcP\) is more expansive than \(\mcP'\) by at least
     \begin{align*}
      \cost{\mcP}-\cost{\mcP'}
       &\ge |\widehat{S}_A|\cdot |S_A| + |\widehat{S}_B|\cdot |S_B| - |\widehat{S}_A|\cdot |S_B| - |\widehat{S}_B|\cdot |S_A|\\
       &\hphantom{{}\ge} - \left(|\widehat{S}_A|^2  - 2\left(|\widehat{S}_A|\cdot |\widehat{S}_B|\right) + |\widehat{S}_B|^2\right)\\
       &\ge \left(|\widehat{S}_A|-|\widehat{S}_B|\right)\cdot \left(|S_A| -|S_B|\right) - \left(|\widehat{S}_A| - |\widehat{S}_B|\right)^2.
     \end{align*}
     This is non-negative as \(|\widehat{S}_A|\ge |\widehat{S}_B|\) and \(|\widehat{S}_A|-|\widehat{S}_B| \le |S_A|-|S_B|\). Hence, splitting a cluster like this never increases the cost.       
    \end{proof}

Unlike in forests, however, the color ratio yields no bound on the maximum cluster size in minimum-cost \fcs on bipartite graphs but just states there is a minimum-cost fair clustering with bounded cluster size. Let \(G=(R\cup B, \{\{r,b\}\mid r\in R\wedge b\in B\})\) be a complete bipartite graph with \(|R|=|B|\) such that all vertices in \(R\) are red and all vertices in \(B\) are blue. Then, all \fcs in \(G\) have the same cost, including the one with a single cluster \(S=R\cup B\). This holds because of a similar argument as employed in the last part of \Cref{lem:smallClustersBipartiteOneOne} since every edge that is cut by a clustering is compensated for with exactly one pair of non-adjacent vertices that is then no longer in the same cluster.

\section{Hardness Results}
\label{sec:hardness}
This section provides \NP-hardness proofs for \fcc under various restrictions.

\subsection{Forests and Trees}
\label{subsec:hardness_fortest}

With the knowledge of the fixed sizes of clusters in a minimum-cost clustering, we are able to show that the problem is surprisingly hard, even when limited to certain instances of forests and trees. 

To prove the hardness of \fcc under various assumptions, we reduce from the strongly \NP-complete \thrPart problem \cite{Garey_Johnson_1979}.
\optPDef{\thrPart}
{\(n=3p\) with \(p\in\N\), positive integers \(a_1,a_2,\ldots,a_n\), and \(B\in\N\) such that \(\frac{B}{4}<a_i<\frac{B}{2}\) as well as \(\sum_{i=1}^n a_i = pB\).}
{Decide if there is a partition of the numbers \(a_i\) into triples such that the sum of each triple is \(B\).}

Our first reduction yields hardness for many forms of forests.

\begin{theorem}
\label{thm:forest_hard}
    \emph{\fcc} on forests with two colors in a ratio of \(1:c\) is \emph{\NP}-hard. It remains \emph{\NP}-hard when arbitrarily restricting the shape of the trees in the forest as long as for every \(a\in\N\) it is possible to form a tree with \(a\) vertices.
\end{theorem}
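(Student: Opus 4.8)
The plan is to reduce from \thrPart. Given an instance with numbers $a_1, \dots, a_n$ (where $n = 3p$), target $B$, and $\frac{B}{4} < a_i < \frac{B}{2}$, I will build a forest $F$ with two colors in ratio $1:B$ (so $c = B$). First I would create, for each $i \in [n]$, a tree $T_i$ on $a_i$ vertices colored entirely red, using the prescribed shape (this is where the "arbitrary shape" clause enters: we only need that a tree of that shape on $a_i$ vertices exists for every $a_i \in \N$). Then I would add $p$ isolated blue vertices. The total number of red vertices is $\sum_i a_i = pB$ and the number of blue vertices is $p$, so the color ratio is exactly $1:B$, and the overall vertex count is $n = pB + p = p(B+1)$. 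By \autoref{lem:smallClustersForest} (applicable since $\gcd(1,B)=1$ and $1 + B \ge 3$), every minimum-cost \fc of $F$ has all clusters of size $d = B+1$, each containing exactly one blue vertex and exactly $B$ red vertices; in particular there are exactly $p$ clusters.

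Next I would invoke \autoref{lem:costByCuts}. The forest $F$ has $m = \sum_i (a_i - 1) = pB - n$ edges (each red tree $T_i$ contributes $a_i - 1$ edges, the blue vertices are isolated). For any fair clustering $\mcP$ with inter-cluster cost $\chi$, \autoref{lem:costByCuts} gives $\cost{\mcP} = \frac{(d-1)}{2}\,n' - m + 2\chi$ where $n' = p(B+1)$ is the vertex count and $d = B+1$; this is an increasing affine function of $\chi$ alone, so minimizing cost is equivalent to minimizing $\chi$. Since the blue vertices are isolated, $\chi$ counts exactly the red tree-edges cut by $\mcP$. Now the key combinatorial claim: $\chi = 0$ is achievable (no red edge cut) if and only if the numbers $a_i$ can be partitioned into groups each summing to $B$. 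Indeed, if $\chi = 0$ then each red tree $T_i$ lies entirely within one cluster; since each cluster has exactly $B$ red vertices, the trees assigned to a given cluster have sizes summing to $B$, yielding a valid partition (and because $\frac{B}{4} < a_i < \frac{B}{2}$, each such group is automatically a triple). Conversely, a valid \thrPart partition tells us how to group the red trees into $p$ bundles of $B$ red vertices each; pairing each bundle with one blue vertex gives a fair clustering cutting no red edge, so $\chi = 0$. Setting the cost threshold to $\tau = \frac{(d-1)}{2}\,n' - m$ (the value of $\cost{\mcP}$ at $\chi = 0$), we get: $F$ admits a fair clustering of cost $\le \tau$ iff the \thrPart instance is a YES-instance. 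Since the construction is polynomial and \thrPart is strongly \NP-hard, \fcc on such forests is \NP-hard.

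The main obstacle — and the only place requiring care — is the combinatorial claim that $\chi = 0$ forces a clean partition of the trees. One must be sure that no cluster can "straddle" a red tree partially: this is exactly what $\chi = 0$ rules out, since cutting even one edge of $T_i$ contributes at least $1$ to $\chi$. Once $\chi = 0$, every tree is intact, each cluster's red content is a union of whole trees summing to exactly $B$, and the size restriction $\frac{B}{4} < a_i < \frac{B}{2}$ (inherited from the \thrPart instance, though not even strictly needed for the reduction, only for matching the "triple" flavor) guarantees each group has three elements. I would also remark that the shape restriction on the trees is harmless: the reduction never uses any structural property of $T_i$ beyond its vertex count and connectivity, so any family of admissible shapes works, and the same argument with isolated-path trees (shape = path) already yields the degree-$2$ forest hardness listed in \autoref{table:resultsDegree}.
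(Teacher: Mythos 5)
Your proposal is correct and follows essentially the same route as the paper: the same reduction from \thrPart (red trees of size $a_i$ plus $p$ isolated blue vertices), the same use of \autoref{lem:smallClustersForest} to pin the cluster size at $B+1$, and the same observation that the cost threshold is met exactly when the inter-cluster cost is zero, which forces each cluster to be a union of whole trees summing to $B$ (hence a triple, by the bounds on the $a_i$). The only cosmetic difference is that you phrase the threshold via \autoref{lem:costByCuts} as "cost is affine increasing in $\chi$," whereas the paper writes out the explicit intra-cluster count; these are the same calculation.
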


\begin{proof}
    We reduce from \thrPart. For every \(a_i\), we construct an arbitrarily shaped tree of \(a_i\) red vertices. Further, we let there be \(p\) isolated blue vertices. Note that the ratio between blue and red vertices is \(1:B\). We now show that there is a \fc \(\mcP\) such that  
    \begin{equation*}
        \cost{\mcP} = p\cdot\frac{B(B+1)}{2}-p(B-3)
    \end{equation*}
     if and only if the given instance is a yes-instance for \thrPart. 

    If we have a yes-instance of \thrPart, then there is a partition of the set of trees into \(p\) clusters of size \(B\). By assigning the blue vertices arbitrarily to one unique cluster each, we hence obtain a fair partition. As there are no edges between the clusters and each cluster consists of \(B+1\) vertices and \(B-3\) edges, this partition has a cost of \(p\cdot\frac{B(B+1)}{2}-p(B-3)\).

    For the other direction, assume there is a \fc of cost \(\frac{B(B+1)}{2}-p(B-3)\).
    By \autoref{lem:smallClustersForest}, each of the clusters consists of exactly one blue and \(B\) red vertices. Each cluster requires \(\frac{B(B+1)}{2}\) edges, but the graph has only \(p(B-3)\) edges. The intra-cluster cost alone is hence at least \(p\cdot\frac{B(B+1)}{2}-p(B-3p)\). This means that the inter-cluster cost is 0, i.e., the partition does not cut any edges inside the trees. Since all trees are of size greater than \(\frac{B}{4}\) and less than \(\frac{B}{2}\), this implies that each cluster consists of exactly one blue vertex and exactly three uncut trees with a total of \(B\) vertices. This way, such a clustering gives a solution to \thrPart, so our instance is a yes-instance.  

    As the construction of the graph only takes polynomial time in the instance size, this implies our hardness result.  
\end{proof}

Note that the hardness holds in particular for forests of paths, i.e., for forests with maximum degree 2.

With the next theorem, we adjust the proof of \autoref{thm:forest_hard} to show that the hardness remains if the graph is connected.

\begin{theorem}
\label{thm:tree_hard}
    \emph{\fcc} on trees with diameter 4 and two colors in a ratio of \(1:c\) is \emph{\NP}-hard.
\end{theorem}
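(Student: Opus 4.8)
The plan is to reduce from \thrPart, realising the ``octopus'' tree sketched in \autoref{fig:treeOctopus_main}. Given numbers \(a_1,\dots,a_\ell\) with \(\ell=3p\) and target \(B\) (I may first multiply every \(a_i\) and \(B\) by a fixed constant, which gives an equivalent instance, so that \(B\) exceeds any constant we need, say \(B\ge 9\)), I create one blue star with center \(b_0\) and leaves \(b_1,\dots,b_{p-1}\), one red star with center \(\rho_i\) and \(a_i-1\) leaves for each \(i\), and connect \(b_0\) to every \(\rho_i\). The result is a tree; a red leaf is at distance \(4\) from a red leaf of another star and at distance \(3\) from a blue leaf, so the diameter is exactly \(4\). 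There are \(p\) blue vertices and \(\sum_i a_i = pB\) red vertices, so the color ratio is \(1:B\). By \autoref{lem:smallClustersForest}, every minimum-cost \fc has \(p\) clusters of size \(d=B+1\), each with one blue and \(B\) red vertices, and by \autoref{lem:costByCuts} its cost equals \(\frac{(B-2)}{2}\,p(B+1)+2\chi+1\), where \(\chi\) is the number of cut edges. So it suffices to identify the clusterings minimising \(\chi\); I will show the minimum is \(4p-4\) and is attained exactly by clusterings encoding a \thrPart solution, setting the decision threshold to \(K=\frac{(B-2)}{2}\,p(B+1)+2(4p-4)+1\).

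For the lower bound \(\chi\ge 4p-4\): cutting \(\chi\) edges of a tree on \(n\) vertices leaves \(1+\chi\) connected pieces, and every cluster is a union of such pieces. The cluster of \(b_0\) contributes at least one piece. Each other cluster contains some \(b_j\) with \(j\ge 1\) whose only neighbour \(b_0\) lies elsewhere, so \(b_j\) forms an isolated piece; moreover its \(B\) red vertices come from at least three distinct red stars (two stars have fewer than \(B\) vertices since \(a_i<B/2\)), and red vertices of different stars lie in different pieces because every path between them passes through \(b_0\). Hence each of the \(p-1\) such clusters contributes at least four pieces, so \(1+\chi\ge 1+4(p-1)\). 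For the matching upper bound, if \thrPart has a solution I group the three full red stars of each triple with one blue vertex: this cuts the \(p-1\) blue--blue edges and the \(3p-3\) edges from \(b_0\) to centers of stars outside \(b_0\)'s cluster, and no star-internal edge, totalling \(4p-4\); the clustering thus has cost exactly \(K\).

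The delicate part is the converse. A \fc of cost at most \(K\) has clusters of size \(B+1\) and \(\chi\le 4p-4\), hence \(\chi=4p-4\), so every inequality above is tight: \(b_0\)'s cluster is a single connected piece and each other cluster splits into exactly four pieces. I then argue that no red star is split across clusters, so each cluster is a blue vertex together with exactly three entire red stars of total size \(B\), i.e.\ a \thrPart solution. \textbf{This is where I expect the main obstacle:} ruling out ``leftover'' configurations in which a red star donates a few isolated leaves to neighbouring clusters while the piece count stays tight. I would control this by letting \(t\) be the number of red-star centers inside \(b_0\)'s cluster; connectivity of that cluster forces all of its red vertices to lie in those \(t\) stars, and since \(a_i>B/4\) we get \(tB/4<\sum_{i\in I}a_i\le B+D\), where \(D=t-3\) is the number of displaced leaves (a quantity obtained from the tightness bookkeeping over the other clusters). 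This forces \(t\le 4\) once \(B\ge 9\). The case \(t=3\) gives \(D=0\), hence no splits and the clean structure directly. The only surviving case \(t=4\) forces, via \(B/4<a_i<B/2\), that \(B\equiv 3\pmod 4\) and a handful of star sizes are pinned down so rigidly (four stars of size \(\lceil B/4\rceil\) and two of size \(2\lceil B/4\rceil-1\)) that these six stars regroup into two valid triples, so a \thrPart solution still exists. Combining the two directions gives optimal \fcc cost \(=K\) iff \thrPart is a yes-instance, and since the construction is polynomial, \NP-hardness follows.
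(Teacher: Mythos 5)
Your reduction is the same one the paper uses (the octopus tree of Figure~\ref{fig:treeOctopus_main}, the same color ratio $1:B$, and — after simplification — the identical cost threshold $\frac{pB^2-pB}{2}+7p-7$), and the forward direction is identical. Where you genuinely diverge is the converse. The paper argues directly on costs: it writes the inter-cluster cost as $\chi=\chi_r+4p-a-1$ with $a$ the number of red centers in $b_0$'s cluster and $\chi_r$ the number of cut star-internal edges, derives $\chi_r\le a-3$ from the cost bound, and combines this with a packing inequality on the leaves of the $a$ chosen stars to force $a=3$ and $\chi_r=0$ simultaneously, so every star is intact and the clustering literally reads off a \thrPart solution. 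You instead count connected pieces: $1+\chi$ components, at least $4$ per non-central cluster (isolated blue leaf plus red vertices from at least three stars), giving $\chi\ge 4p-4$; tightness then yields the bookkeeping identity that the total number of displaced leaves equals $t-3$. This is arguably cleaner and more robust for the lower bound, but it buys you an extra surviving case $t=4$ that the paper's accounting excludes outright, and you must show that in this case the instance is \emph{still} a yes-instance even though the clustering itself does not directly encode a triple partition. I checked your sketch of that case and it is sound: $t=4$ forces exactly one displaced leaf, $\sum_{i\in I}a_i=B+1$ with all four $a_i=(B+1)/4$ (so $B\equiv 3\pmod 4$), the receiving cluster pins two further stars to size $(B-1)/2$, and these six numbers regroup into two triples $\bigl\{\frac{B+1}{4},\frac{B+1}{4},\frac{B-1}{2}\bigr\}$ summing to $B$. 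So both directions hold and the proof is correct; the price of your more combinatorial lower bound is this additional case analysis, which you should write out in full since it is the crux.
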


\begin{proof}
    We reduce from \thrPart. For every \(a_i\), we construct a star of \(a_i\) red vertices. Further, we let there be a star of \(p\) blue vertices. We obtain a tree of diameter 4 by connecting the center \(v\) of the blue star to all the centers of the red stars. The construction is depicted in \autoref{fig:treeOctopus}.
    \begin{figure}
        \centering
        \includegraphics[height=.3\textwidth]{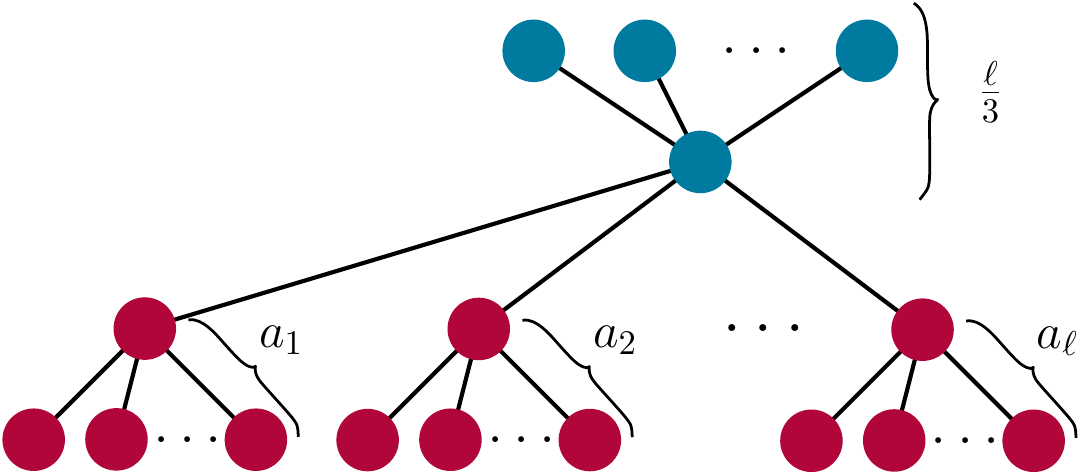}
        \caption{The tree with diameter 4 in the reduction from \thrPart to \fcc.
        The notation follows that of \autoref{thm:tree_hard}.}
    \label{fig:treeOctopus}
    \end{figure}
    Note that the ratio between blue and red vertices is \(1:B\). 
    We now show that there is a \fc \(\mcP\) such that 
    \begin{align*}
        \cost{\mcP} \le \frac{pB^2-pB}{2}+7p-7
    \end{align*}
     if and only if the given instance is a yes-instance for \thrPart. 

    If we have a yes-instance of \thrPart, then there is a partition of the set of stars into \(p\) clusters of size \(B\), each consisting of three stars. By assigning the blue vertices arbitrarily to one unique cluster each, we hence obtain a fair partition.
    We first compute the inter-cluster cost \(\chi\). We call an edge \emph{blue} or \emph{red} if it connects two blue or red vertices, respectively. We call an edge \emph{blue-red} if it connects a blue and a red vertex. All \(p-1\) blue edges are cut. Further, all edges between \(v\) (the center of the blue star) and red vertices are cut except for the three stars to which \(v\) is assigned. This causes \(3p-3\) more cuts, so the inter-cluster cost is $\chi = 4p-4$.
    Each cluster consists of \(B+1\) vertices and \(B-3\) edges, except for the one containing \(v\) which has \(B\) edges. The intra-cluster cost is hence 
    \begin{gather*}
        \psi = p \left(\frac{B(B+1)}{2}-B+3\right)-3 = \frac{pB^2-pB}{2}+3p-3.
    \end{gather*}
     Combining the intra- and inter-cluster costs yields the desired cost of 
    \begin{gather*}
        \cost{\mcP} = \chi + \psi = \frac{pB^2-pB}{2}+7p-7.
    \end{gather*}

    For the other direction, assume there is a \fc of cost at most \(\frac{pB^2-pB}{2}+7p-7\).
    As there are \(p(B+1)\) vertices, \autoref{lem:smallClustersForest} gives that there are exactly \(p\) clusters, each consisting of exactly one blue and \(B\) red vertices. Let \(a\) denote the number of red center vertices in the cluster of \(v\). We show that \(a=3\). To this end, let \(\chi_r\) denote the number of cut red edges. We additionally cut \(p-1\) blue and \(3p-a\) blue-red edges. The inter-cluster cost of the clustering hence is $\chi = \chi_r + 4p -a -1$.
    Regarding the intra-cluster cost, there are no missing blue edges and as \(v\) is the only blue vertex with blue-red edges, there are \((p-1)B+B-a=pB-a\)
     missing blue-red edges. Last, we require \(p\cdot \frac{B (B-1)}{2}\) red edges, but the graph has only \(pB-3p\) red edges and \(\chi_r\) of them are cut. Hence, there are at least 
    \(p\cdot \frac{B (B-1)}{2}  - pB + 3p + \chi_r\)
     missing red edges, resulting in a total intra-cluster cost of 
     $\psi \ge p\cdot \frac{B (B-1)}{2}+3p+\chi_r-a$.
    This results in a total cost of  
    \begin{gather*}
        \cost{\mcP} = \chi + \psi \ge \frac{pB^2-pB}{2}+7p+2\chi_r-2a-1.
    \end{gather*}
    As we assumed \(\cost{\mcP} \le \frac{pB^2-pB}{2}+7p-7\), we have
    $2\chi_r-2a+6 \le 0$,
    which implies \(a\ge 3\) since \(\chi_r\ge 0\). 
    Additionally, $\chi_r \ge \frac{aB}{4}-(B-a)$,
    because there are at least \(\frac{B}{4}\) red vertices connected to each of the \(a\) chosen red centers but only a total of \(B-a\) of them can be placed in their center's cluster. Thus, we have
	$\frac{aB}{2}-2B+6 = \frac{(a-4)B}{2}+6\le 0$,
    implying \(a < 4\) and proving our claim of \(a=3\). Further, as \(a=3\), we obtain \(\chi_r \le 0\), meaning that no red edges are cut, so each red star is completely contained in a cluster. Given that every red star is of size at least \(\frac{B}{4}\) and at most \(\frac{B}{2}\), this means each cluster consists of exactly three complete red stars with a total number of \(B\) red vertices each and hence yields a solution to the \thrPart instance.

    As the construction of the graph only takes polynomial time in the instance size and the constructed tree is of diameter 4, this implies our hardness result.  
\end{proof}

The proofs of \Cref{thm:forest_hard,thm:tree_hard} follow the same idea as the hardness proof of \cite[Theorem~2]{Feldmann_Foschini_2015}, which also reduces from \thrPart to prove a hardness result on the \kbalpart problem.
There, the task is to partition the vertices of an uncolored graph into \(k\) clusters of equal size \cite{Feldmann_Foschini_2015}. 

\optPDef{\kbalpart}
{Graph $G = (V, E)$, \(k\in[n]\).}
{Find a partition $\mcP$ of \(V\) that minimizes \(|\{\{u,v\}\in E\mid \mcP[u]\neq\mcP[v]\}|\) under the constraint that \(|\mcP|=k\) and \(|S|\le \lceil\frac{n}{k}\rceil\) for all \(S\in\mcP\) .}

\kbalpart is related to \fcc on forests in the sense that the clustering has to partition the forest into clusters of equal sizes by \Cref{lem:smallClustersForest,lem:smallClustersBipartiteOneOne}.
Hence, on forests we can regard \fcc as the fair variant of \kbalpart.
By \cite[Theorem~8]{Feldmann_Foschini_2015}, \kbalpart is \NP-hard on trees of degree 5. In their proof, Feldmann and Foschini~\cite{Feldmann_Foschini_2015} reduce from \thrPart. We slightly adapt their construction to transfer the result to \fcc.

\begin{theorem}
\label{thm:treeDeg5NPhard}
    \emph{\fcc} on trees of degree at most 5 with two colors in a ratio of \(1:c\) is \emph{\NP}-hard.
\end{theorem}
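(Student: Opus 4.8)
The plan is to reduce from \thrPart, adapting the construction of Feldmann and Foschini~\cite[Theorem~8]{Feldmann_Foschini_2015} for \kbalpart to the fair setting. Recall that in their reduction for \kbalpart on trees of degree $5$, each input number $a_i$ is encoded by a gadget (a small tree whose size is controlled by $a_i$) and these gadgets are attached to a ``backbone'' structure — typically a balanced binary tree or a caterpillar of bounded degree — so that the whole object is a tree of maximum degree $5$, while separating the gadgets appropriately corresponds to a valid $3$-partition. My construction will mirror this but color the vertices: I will make all the number-gadget vertices red, and introduce $p$ blue vertices (one per desired cluster), attached to the backbone in such a way that the degree bound of $5$ is preserved and the blue vertices are far apart. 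Scaling so that there are $pB$ red vertices and $p$ blue vertices gives a color ratio $1:B$, hence $1:c$ with $c=B$.

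The key steps, in order, are: (1) describe the gadget for each $a_i$ and the bounded-degree backbone connecting everything, verifying the maximum degree is at most $5$ and that the total vertex counts give the $1:B$ ratio; (2) invoke \autoref{lem:smallClustersForest} to conclude that any minimum-cost \fc has exactly $p$ clusters, each containing exactly one blue vertex and exactly $B$ red vertices; (3) invoke \autoref{lem:costByCuts} — since we are on a tree and all clusters have the fixed size $d=B+1$, the cost is $\frac{(d-3)}{2}n + 2\chi + 1$, so minimizing cost is equivalent to minimizing the inter-cluster cost $\chi$, i.e.\ the number of cut edges; (4) compute the exact minimum number of cut edges $\chi_0$ attainable when the clustering respects the gadget boundaries and correctly distributes the backbone, and argue (using the structural properties of the gadget sizes, e.g.\ that each $a_i\in(B/4,B/2)$ so a cluster of $B$ red vertices consists of exactly three gadgets) that $\chi \le \chi_0$ forces the clustering to cut only backbone/blue edges and never split a gadget, which in turn yields a valid $3$-partition; conversely a yes-instance of \thrPart gives a clustering achieving $\chi_0$; (5) set the cost threshold to the value corresponding to $\chi=\chi_0$ via \autoref{lem:costByCuts}, and note polynomial-time constructibility.

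The main obstacle I expect is step (4): carefully accounting for the number of backbone edges that any fair clustering is \emph{forced} to cut, and showing this forced amount exactly equals what an honest $3$-partition-respecting clustering pays, so that any ``cheating'' clustering — one that splits a number-gadget in order to save on backbone cuts — provably cannot come in under the threshold. In the \kbalpart proof of Feldmann–Foschini this is handled by making the backbone edges ``cheap'' to cut relative to the savings, but here every edge has unit weight, so the argument must instead be a combinatorial lower bound on cut edges analogous to the $\chi_r \ge \frac{aB}{4}-(B-a)$ estimate in the proof of \autoref{thm:tree_hard}: if a cluster's $B$ red vertices are drawn from more than three gadgets, or from a partial gadget, then the number of gadget-internal edges it must cut is strictly larger than any backbone saving. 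Getting the constants in the backbone design to make this inequality tight (while keeping degree $\le 5$) is the delicate part; once the forced cut count is pinned down, plugging into \autoref{lem:costByCuts} to obtain the threshold is routine.
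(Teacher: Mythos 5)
Your overall strategy matches the paper's: reduce from \thrPart, adapt the Feldmann--Foschini degree-$5$ construction, fix the cluster sizes via \autoref{lem:smallClustersForest}, and reduce cost minimization to cut counting via \autoref{lem:costByCuts}. However, the step you yourself flag as ``the delicate part'' is precisely the heart of the proof, and your plan leaves it unresolved. The paper does not re-derive a combinatorial lower bound analogous to $\chi_r \ge \frac{aB}{4}-(B-a)$; it uses the \emph{exact} Feldmann--Foschini gadget (a center vertex joined to the ends of five paths: one of length $a_i$, three of length $\sfrac{B}{4}$, and one of length $\sfrac{B}{4}-1$, with consecutive gadgets linked through the short paths, after scaling so $B$ is a multiple of $4$) and then simply \emph{cites} their Lemma~9, which states that partitioning the red vertices into sets of size $B$ requires at least $2n-1$ cuts and that the unique minimum cut set is the $n-1$ inter-gadget edges together with the $n$ edges attaching the $a_i$-paths to the centers. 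Without either importing that lemma or supplying the gadget design that makes it true, your step (4) is an unproven claim, and the reduction does not go through.

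A second, concrete inconsistency: you propose $pB$ red vertices and $p$ blue vertices with $p = \sfrac{\ell}{3}$ clusters, i.e., the gadgets consist only of the $a_i$-paths. But the Feldmann--Foschini lower bound depends essentially on each gadget carrying $B$ additional padding vertices (the three $\sfrac{B}{4}$-paths, the $(\sfrac{B}{4}-1)$-path, and the center), which is why the paper ends up with $\frac{4\ell}{3}B$ red vertices, $\frac{4\ell}{3}$ blue vertices, and $\frac{4\ell}{3}$ clusters. Bare $a_i$-paths wired into a connected bounded-degree tree do not admit the ``no cheating'' argument --- indeed the paper explicitly leaves the complexity of \fcc on a single path with ratio $1:c$ open for exactly this reason. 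Relatedly, scattering the $p$ blue vertices along the backbone complicates the forced-cut count; the paper instead attaches a single blue path at one leaf, so that the blue contribution to $\chi$ is a fixed $\frac{4\ell}{3}-1$ regardless of the red-side decisions. So the skeleton of your argument is right, but the construction as you specify it would not support the threshold argument you intend to make.
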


\begin{proof}
    We reduce from \thrPart, which remains strongly \NP-hard when limited to instances where \(B\) is a multiple of 4 since for every instance we can create an equivalent instance by multiplying all integers by 4. Hence, assume a \thrPart instance such that \(B\) is a multiple of \(4\). We construct a graph for \fcc by representing each \(a_i\) for \(i\in [n]\) by a gadget \(T_i\). Each gadget has a center vertex that is connected to the end of five paths: one path of length \(a_i\), three paths of length \(\frac{B}{4}\), and one path of length \(\frac{B}{4}-1\). Then, for \(i\in[n-1]\), we connect the dangling ends of the paths of length \(\frac{B}{4}-1\) in the gadgets \(T_i\) and \(T_{i+1}\) by an edge. So far, the construction is similar to the one by Feldmann and Foschini~\cite{Feldmann_Foschini_2015}. We color all vertices added so far in red. Then, we add a path of  \(\frac{4n}{3}\) blue vertices and connect it by an edge to an arbitrary vertex of degree 1. The resulting graph is depicted in \autoref{fig:octopusDeg5}.

    \begin{figure}
        \includegraphics[width=\textwidth]{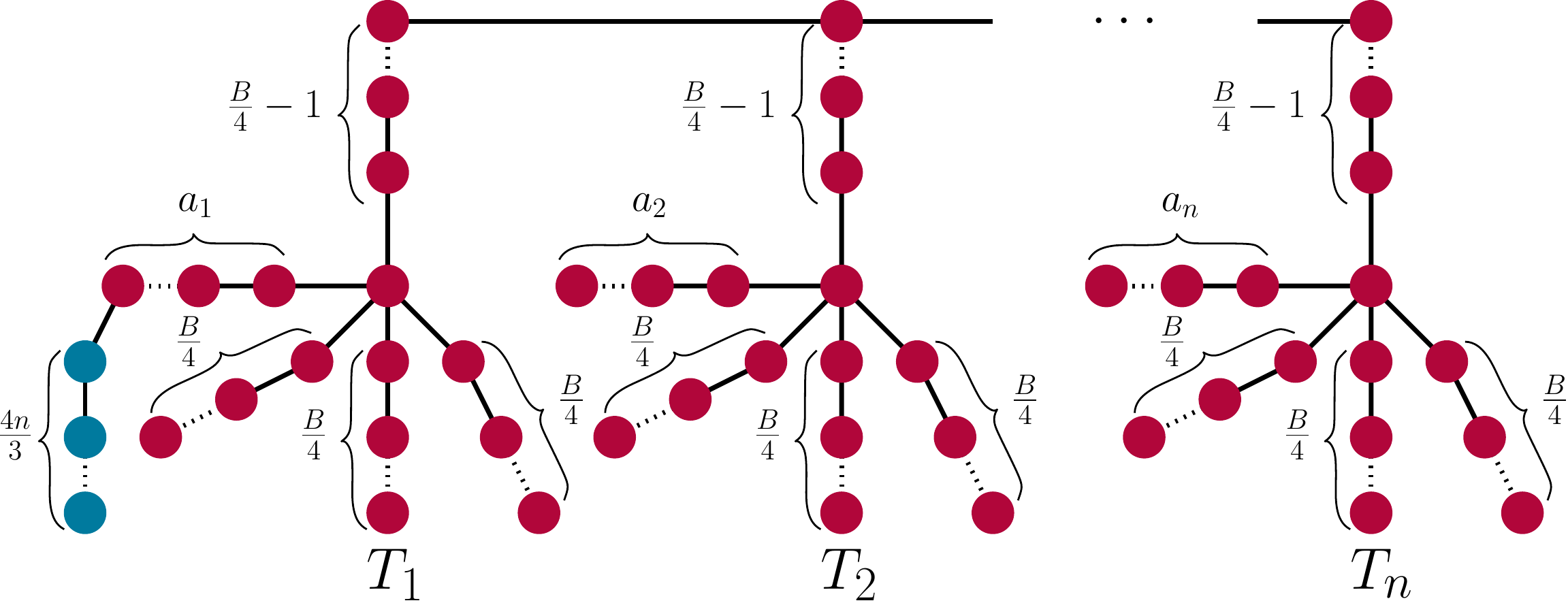}
        \caption{Tree with maximum degree 5 in the reduction from \thrPart to \fcc (\autoref{thm:treeDeg5NPhard}).}
    \label{fig:octopusDeg5}
    \end{figure}
    
    Note that the construction takes polynomial time and we obtain a graph of degree 5. We now prove that it has a \fc \(\mcP\) such that 
    \begin{equation*}
        \cost{\mcP}\le \frac{(B-2)n}{2}+\frac{20n}{3}-3
    \end{equation*}
    if and only if the given instance is a yes-instance for \thrPart.

    Assume we have a yes-instance for \thrPart. We cut the edges connecting the different gadgets as well as the edges connecting the \(a_i\)-paths to the center of the stars. Then, we have \(n\) components of size \(B\) and 1 component of size \(a_i\) for each \(i\in [n]\). The latter ones can be merged into \(p=\frac{n}{3}\) clusters of size \(B\) without further cuts. Next, we cut all edges between the blue vertices and assign one blue vertex to each cluster. Thereby, note that the blue vertex that is already connected to a red cluster should be assigned to this cluster. This way, we obtain a \fc with inter-cluster cost $\chi = n-1+n+\frac{4n}{3}-1 =\frac{10n}{3}-2$,
    which, by \autoref{lem:costByCuts}, gives
	$\cost{\mcP} = \frac{(B-2)n}{2}+\frac{20n}{3}-3$.
	
    For the other direction, let there be a minimum-cost \fc \(\mcP\) of cost at most \(\frac{(B-2)n}{2}+\frac{20n}{3}-3\).    
    As \(\sum_{i=1}^n a_i = \frac{nB}{3}\), the graph consists of \(\frac{4n}{3}\cdot B\) red and \(\frac{4n}{3}\) blue vertices. By \autoref{lem:smallClustersForest}, \(\mcP\) hence consists of \(\frac{4n}{3}\) clusters, each consisting of one blue vertex and \(B\) red vertices. Thus, \(\mcP\) has to cut the \(\frac{4n}{3}-1\) edges on the blue path. Also, \(\mcP\) has to partition the red vertices into sets of size \(B\). By \cite[Lemma~9]{Feldmann_Foschini_2015} this requires at least \(2n-1\) cuts. This bounds the inter-cluster cost by $\chi \ge 2n-1+\frac{4n}{3}-1=\frac{10n}{3}-2$,
	leading to a \cccost of \(\frac{(B-2)n}{2}+\frac{20n}{3}-3\) as seen above, so we know that no more edges are cut. Further, the unique minimum-sized set of edges that upon removal leaves no red components of size larger than \(B\) is the set of the \(n-1\) edges connecting the gadgets and the \(n\) edges connecting the \(a_i\) paths to the center vertices \cite[Lemma~9]{Feldmann_Foschini_2015}. Hence, \(\mcP\) has to cut exactly these edges. As no other edges are cut, the \(a_i\) paths can be combined to clusters of size \(B\) without further cuts, so the given instance has to be a yes-instance for \thrPart.
\end{proof}

\subsection{Paths}
\label{subsec:hardness_paths}

\autoref{thm:forest_hard} yields that \fcc is \NP-hard even in a forest of paths. The problem when limited to instances of a single connected path is closely related to the \textsc{Necklace Splitting} problem \cite{Alon_1987,Alon_West_1986}.

\optPDef{\textsc{Discrete Necklace Splitting}}
{Opened necklace \(N\), represented by a path of \(n\cdot k\) beads, each in one of \(t\) colors such that for each color \(i\) there are \(a_i\cdot k\) beads of that color for some \(a_i\in\N\).}
{Cut the necklace such that the resulting intervals can be partitioned into \(k\) collections, each containing the same number of beads of each color.}
The only difference to \fcc on paths, other than the naming, is that the number of clusters \(k\) is explicitly given. From \Cref{lem:smallClustersForest,lem:smallClustersBipartiteOneOne} we are implicitly given this value also for \fcc, though. 
However, Alon and West~\cite{Alon_West_1986} do not constructively minimize the number of cuts required for a fair partition but non-constructively prove that there is always a partition of at most \((k-1)\cdot t\) cuts, if there are \(t\) colors and the partition is required to consist of exactly \(k\) sets with the same amount of vertices of each color. Thus, it does not directly help us when solving the optimization problem.

Moreover, \fcc on paths is related to the 1-regular 2-colored variant of the \textsc{Paint Shop Problem for Words} (\textsc{PPW}). For \textsc{PPW}, a word is given as well as a set of colors, and for each symbol and color a requirement of how many such symbols should be colored accordingly. The task is to find a coloring that fulfills all requirements and minimizes the number of color changes between adjacent letters \cite{Epping_Hochstattler_Oertel_2004}.

\optPDef{\textsc{Paint Shop Problem for Words} (\textsc{PPW})}
{Word \(w=w_1,w_2,\ldots,w_n\in \Sigma^*\), number of colors \(k\in\N_{>0}\), and requirement function \(r: \Sigma\times [k]\rightarrow \N\) such that for each symbol \(s\) used in \(w\) with \(w[s]\) occurrences we have \(\sum_{i=1}^k r(s,i)= w[s]\).}
{Find an assignment function \(f\colon [n] \rightarrow [k]\) of colors to the letters in \(w\) such that for each symbol \(s\in\Sigma\) and color \(i\in[k]\) the coloring fulfills the requirement function, i.e., \(|\{j\in[n]\mid w_j=s \wedge f(j)=i\}| = r(s,i)\). The assignment \(f\) should minimze the number of color changes \(|\{j\in [n-1]\mid f(j)\neq f(j+1)\}|\).}
Let for example \(w=aabab\) and \(r(a, 1) = 2, r(a,2)=r(b, 1)=r(b,2)=1\). Then, the assignment \(f\) with \(f(1)=f(2)=f(3)=1\) and \(f(4)=f(5)=2\) fulfills the requirement and has 1 color change.

\textsc{PPW} instances with a word containing every symbol exactly twice and two \textsc{PPW}-colors, each requiring one of each symbol, are called \emph{1-regular 2-colored} and are shown to be \NP-hard and even \APX-hard \cite{Bonsma_Epping_Hochstattler_2006}. With this, we prove \NP-hardness of \fcc even on paths.

\begin{theorem}
\label{thm:pathsNPhard}
    \emph{\fcc} on paths is \emph{\NP-hard}, even when limited to instances with exactly 2 vertices of each color. 
 \end{theorem}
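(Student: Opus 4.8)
The plan is to reduce from the 1-regular 2-colored \textsc{Paint Shop Problem for Words}, which is \NP-hard as recalled above. Recall that such an instance is a word $w = w_1 w_2 \cdots w_n$ in which every symbol of an alphabet $\Sigma$ occurs exactly twice, together with two \textsc{PPW}-colors each requiring exactly one occurrence of every symbol; a feasible assignment is a map $f\colon [n]\to\{0,1\}$ that gives each symbol one $0$ and one $1$, and the objective is to minimize the number of indices $j\in[n-1]$ with $f(j)\neq f(j+1)$. Since words over at most two distinct symbols are trivial, I may assume $|\Sigma|\ge 3$, i.e.\ $n\ge 6$.

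Given $w$, I build the \fcc instance whose graph is the path $P$ on vertices $v_1,\dots,v_n$ (edges $\{v_j,v_{j+1}\}$ for $j\in[n-1]$) and whose fairness-coloring assigns to $v_j$ the symbol $w_j$. Thus $P$ has $k=|\Sigma|=n/2$ colors, each shared by exactly two vertices, matching the claimed restriction, and the construction is clearly polynomial. The color ratio is $1{:}1{:}\cdots{:}1$ with $\gcd=1$ and $\sum_i c_i = k\ge 3$, so \autoref{lem:smallClustersForest} applies: every minimum-cost \fc of $P$ consists entirely of clusters of size $d = k = n/2$. Since $P$ has $n$ vertices, such a clustering has exactly two clusters $S_0,S_1$, and fairness forces each $S_b$ to contain exactly one vertex of every color.

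The correspondence is then direct in both directions. From a minimum-cost \fc $\{S_0,S_1\}$ of $P$, set $f(j)=b$ whenever $v_j\in S_b$; since every symbol has one occurrence in $S_0$ and one in $S_1$, this $f$ is a feasible \textsc{PPW} assignment, and its number of color changes equals the number of cut path-edges, i.e.\ the inter-cluster cost $\chi$ of the clustering. Conversely, a feasible $f$ yields the fair $2$-clustering $S_b = \{v_j : f(j)=b\}$, with $|S_0|=|S_1|=n/2$, whose inter-cluster cost equals the number of color changes of $f$. Because $P$ is a tree, \autoref{lem:costByCuts} gives, for every such equal-split $2$-clustering, $\cost{\mcP} = \tfrac{(d-3)n}{2} + 2\chi + 1$ with $d=n/2$; together with \autoref{lem:smallClustersForest} this shows that the optimum \fcc cost on $P$ equals $\tfrac{(d-3)n}{2} + 2\chi^\star + 1$, where $\chi^\star$ is the minimum number of color changes over feasible \textsc{PPW} assignments. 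Hence $w$ admits an assignment with at most $t$ color changes if and only if $P$ admits a fair clustering of cost at most $\tfrac{(d-3)n}{2} + 2t + 1$, establishing \NP-hardness.

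I expect the only real subtlety to be the appeal to \autoref{lem:smallClustersForest}: it is this structural result that pins the optimal clustering down to exactly two equal halves, which is what makes the inter-cluster cost count \emph{precisely} the adjacent-position disagreements of a \textsc{PPW} assignment, so that \autoref{lem:costByCuts} converts the \cccost into an affine function of the \textsc{PPW} objective. One must also check that the hypotheses of that lemma hold ($\gcd=1$, $\sum_i c_i\ge 3$) and handle the degenerate case of at most two symbols separately, but those instances are polynomial-time solvable and hence irrelevant to the hardness claim.
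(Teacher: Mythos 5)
Your proposal is correct and follows essentially the same route as the paper: reduce from 1-regular 2-colored \textsc{PPW} by turning the word into a path with one fairness-color per symbol, invoke \autoref{lem:smallClustersForest} to force exactly two clusters each containing every color once, and use \autoref{lem:costByCuts} to identify the \cccost with an affine function of the number of color changes. Your explicit verification of the lemma's hypotheses and the separate treatment of the degenerate case $|\Sigma|\le 2$ are minor refinements the paper leaves implicit.
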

 
 \begin{proof}
    We reduce from 1-regular 2-colored \textsc{PPW}. Let \(w=s_1s_2,\ldots,s_\ell\). We represent the \(\frac{\ell}{2}\) different symbols by \(\frac{\ell}{2}\) colors and construct a path of length \(\ell\), where each type of symbol is represented by a unique color. 
    By \autoref{lem:smallClustersForest}, any optimum \fcc solution partitions the paths into two clusters, each containing every color exactly once, while minimizing the number of cuts (the inter-cluster cost) by \autoref{lem:costByCuts}. As this is exactly equivalent to assigning the letters in the word to one of two colors and minimizing the number of color changes, we obtain our hardness result.
 \end{proof}
 
 \APX-hardness however is not transferred since though there is a relationship between the number of cuts (the inter-cluster cost) and the \cccost, the two measures are not identical. In fact, as \fcc has a PTAS on forests by \autoref{thm:ptas}, \APX-hardness on paths would imply $\P = \NP$.

On a side note, observe that for every \fcc instance on paths we can construct an equivalent \textsc{PPW} instance (though not all of them are 1-regular 2-colored) by representing symbols by colors and \textsc{PPW}-colors by clusters.

We note that it may be possible to efficiently solve \fcc on paths if there are e.g. only two colors. There is an \NP-hardness result on \textsc{PPW} with just two letters in \cite{Epping_Hochstattler_Oertel_2004}, but a reduction from these instances is not as easy as above since its requirements imply an unfair clustering.

\subsection{Beyond Trees}
\label{subsec:hardness_general}

By \autoref{thm:tree_hard}, \fcc is \NP-hard even on trees with diameter 4. Here, we show that if we allow the graph to contain circles, the problem is already \NP-hard for diameter 2. Also, this nicely contrasts that \fcc is solved on trees of diameter 2 in linear time, as we will see in \autoref{subsec:easyInstances}.  

\begin{theorem}
\label{thm:generaGraphsDiam2Hardness}
    \emph{\fcc} on graphs of diameter 2 with two colors in a ratio of \(1:1\) is \emph{\NP}-hard.
 \end{theorem}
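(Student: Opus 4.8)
The plan is to reduce from (unfair) \cc, which is \NP-hard already on graphs of diameter~$2$, by a \emph{twin-doubling} that forces a $1:1$ color ratio. Given a diameter-$2$ graph \(G=(V,E)\) viewed as a \cc instance, I build \(G'\) on vertex set \(V\times\{r,b\}\), color \((v,r)\) red and \((v,b)\) blue, and make \((u,x)\) and \((v,y)\) adjacent exactly when \(u=v\) or \(\{u,v\}\in E\). The numbers of red and of blue vertices are both \(|V|\), so the ratio is \(1:1\). The graph \(G'\) still has diameter~$2$: for a non-adjacent pair \((u,x),(v,y)\) we have \(u\neq v\) and \(\{u,v\}\notin E\), but a common neighbor \(w\) of \(u,v\) in \(G\) (which exists because \(G\) has diameter at most~$2$) gives the common neighbor \((w,z)\) in \(G'\); the twin pair \((v,r),(v,b)\) is adjacent. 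Since \cc on a complete graph is trivial we may assume \(G\) is not complete, so \(G'\) has diameter exactly~$2$.

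Next I would analyze optimal \fcs of \(G'\). The vertices \((v,r)\) and \((v,b)\) are \emph{true twins} (adjacent, with identical neighborhoods otherwise), so a standard exchange argument---carried out so as to preserve the \(1:1\) color balance of every cluster, by swapping whole twin pairs or combining it with a compensating color-preserving move---shows there is a minimum-cost \fc of \(G'\) in which each twin pair lies in a single cluster. Such a \emph{doubled} clustering is precisely \(\{T\times\{r,b\}\mid T\in\mcP\}\) for a partition \(\mcP\) of \(V\), and it is automatically fair. For its cost: each twin pair is an intra-cluster edge and costs \(0\), while for every two distinct \(u,v\in V\) the four pairs \(\{(u,x),(v,y)\}\) have the same adjacency status and the same ``same-cluster'' status, so together they contribute \(4\) to \(\cost{G',\mcP'}\) exactly when \(\{u,v\}\) is a disagreement of \((G,\mcP)\). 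Hence \(\cost{G',\mcP'}=4\,\cost{G,\mcP}\), so the minimum \cccost of \(G'\) equals four times the minimum \cccost of \(G\), and a yes/no cost threshold for the latter translates into one for the former.

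I expect the main obstacle to be the first ingredient: establishing that \cc in its complete, unweighted, min-disagree form is \NP-hard when the input graph has diameter~$2$. I would either invoke this from the literature or prove it by revisiting a known \NP-hardness reduction for \cc and padding it with a small, highly symmetric gadget (e.g.\ a clique of twins glued on so that every pair of original vertices acquires a common neighbor) chosen so that the optimum changes only by an additive amount that is easy to compute; the pitfall is that crude paddings such as a single universal vertex push the optimum towards one giant cluster and break the correspondence. The second, more routine, point is making the twin-exchange argument fully rigorous under the fairness constraint, where the identical neighborhoods of twins keep the cost bookkeeping manageable. A self-contained alternative, closer to the rest of the paper, would be to imitate the \thrPart reductions of \Cref{thm:forest_hard,thm:tree_hard} on a diameter-$2$ gadget, using \Cref{lem:smallClustersBipartiteOneOne} to pin down the cluster sizes and \Cref{lem:costByCuts} to rewrite the \cccost as a count of cut edges; there the difficulty is that the many edges needed to make the gadget have diameter~$2$ must be arranged so that they contribute equally to every clustering under consideration.
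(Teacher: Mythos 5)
Your construction is exactly the one the paper uses (it is the mirror/twin reduction of Ahmadi et al.\ from unfair \cc to \fcc with color ratio \(1:1\), applied to a diameter-2 instance, with the same verification that the doubled graph keeps diameter 2), and the missing ingredient you flag---\NP-hardness of complete min-disagree \cc on diameter-2 graphs---is indeed handled in the paper by a literature citation (to a hardness result for \ce on diameter-2 graphs). So the proposal is correct and takes essentially the same route; the only difference is that you re-derive the correctness of the mirror reduction (the twin-exchange and the factor-4 cost identity) where the paper simply cites it.
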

 \begin{proof}
    Cluster Editing, which is an alternative formulation of \cc, is \NP-hard on graphs of diameter 2 \cite{Bastos_Ochi_Protti_Subramanian_Martins_Pinheiro_2016}. 
    Further, Ahmadi et al.\ \cite{Ahmadi_Galhotra_Saha_Schwartz_2020} give a reduction from \cc to \fcc with a color ratio of \(1:1\). They show that one can solve \cc on a graph \(G=(V,E)\) by solving \fcc on the graph \(G'=(V\cup V', E\cup E'\cup \widetilde{E})\) that mirrors \(G\). The vertices in \(V\) are colored blue and the vertices in \(V'\) are colored red. Formally, \(V'=\{u'\mid u\in V\}\) and \(E'=\{\{u',v'\} \mid \{u,v\}\in E\}\). Further, \(\widetilde{E}\) connects every vertex with its mirrored vertex as well as the mirrors of adjacent vertices, i.e., \(\widetilde{E}= \{\{u,u'\}\mid u\in V\} \cup \{\{u,v'\} \mid u\in V\wedge v'\in V' \wedge \{u,v\}\in E\}\), see \autoref{fig:mirrorNPGeneral}.
    \begin{figure}
        \centering
        \includegraphics[height=.3\textwidth]{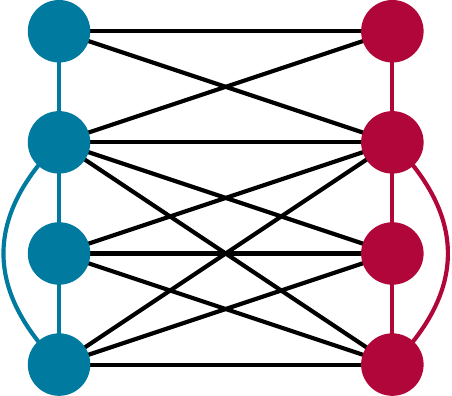}
    \caption{Graph as constructed by Ahmadi et al.\ \cite{Ahmadi_Galhotra_Saha_Schwartz_2020} for the reduction from \cc to \fcc. The blue vertices and edges correspond to the original graph \(G=(V,E)\), red vertices and edges to its mirror, i.e., \(V'\) and \(E'\), and black edges to \(\widetilde{E}\).}\label{fig:mirrorNPGeneral}
    \end{figure} 

    Observe that if \(G\) has diameter 2 then \(G'\) also has diameter 2 as follows. As every pair of vertices \(\{u,v\}\in\binom{V}{2}\) is of maximum distance 2 and the vertices as well as the edges of \(G\) are mirrored, every pair of vertices \(\{u',v'\}\in\binom{V'}{2}\) is of maximum distance 2. Further, every vertex and its mirrored vertex have a distance of 1. For every pair of vertices \(u\in V, v'\in V'\) we distinguish two cases. If \(\{u,v\}\in E\), then \(\{u,v'\}\in \widetilde{E}\), so the distance is 1. Otherwise, as the distance between \(u\) and \(v\) is at most 2 in \(G\), there is \(w\in V\) such that \(\{u,w\}\in E\) and \(\{v,w\}\in E\). Thus, \(\{u,w'\}\in \widetilde{E}\) and \(\{w',v'\}\in E'\), so the distance of \(u\) and \(v'\) is at most 2.
    
    As \cc on graphs with diameter 2 is \NP-hard and the reduction by Ahmadi et al.\ \cite{Ahmadi_Galhotra_Saha_Schwartz_2020} constructs a graph of diameter 2 if the input graph is of diameter 2, we have proven the statement.
 \end{proof}

Further, we show that on general graphs \fcc is \NP-hard, even if the colors of the vertices allow for no more than 2 clusters in any fair clustering. This contrasts our algorithm in \autoref{subsec:fewClustersAlgo} solving \fcc on forests in polynomial time if the maximum number of clusters is constant. To this end, we reduce from the \NP-hard \textsc{Bisection} problem \cite{Garey_Johnson_1979}, which is the \(k=2\) case of \kbalpart.

\optPDef{\textsc{Bisection}}
{Graph \(G=(V,E)\).}
{Find a partition \(\mcP=\{A,B\}\) of \(V\) that minimizes \(|\{\{u,v\}\in E\mid u\in A \wedge v\in B\}|\) under the constraint that \(|A|=|B|\).}  

 \begin{theorem}
 \label{thm:hardness_large1_C}
    \emph{\fcc} on graphs with two colors in a ratio of \(1:c\) is \emph{\NP}-hard, even if \(c=\frac{n}{2}-1\) and the graph is connected.
\end{theorem}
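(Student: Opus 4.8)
The plan is to reduce from \textsc{Bisection}. Given a connected instance $G=(V,E)$ with $n:=|V|$ even and $m:=|E|$ (we will need $G$ to be sparse, see below), pick an arbitrary $v_0\in V$ and build $G'$ from $G$ by adding two new vertices $x_1,x_2$, each joined only to $v_0$. Colour every original vertex red and the two new vertices blue. Then $G'$ is connected, has $N:=n+2$ vertices, and the colour ratio is $1:\tfrac n2$, so $c=\tfrac n2=\tfrac N2-1$ (with the theorem's ``$n$'' being our $N$). The structural key is that, since there are only two blue vertices and fairness forces every non-empty cluster to contain a $\tfrac2N$-fraction of blue vertices, every fair clustering of $G'$ is \emph{either} the single cluster $V(G')$ \emph{or} consists of exactly two clusters, each with one blue and $\tfrac n2$ red vertices, hence both of size $d=\tfrac N2$.

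Next I would pin down the cost of the two-cluster solutions via \autoref{lem:costByCuts}. A two-cluster fair clustering $\mcP=\{C_1,C_2\}$ with $x_1\in C_1$, $x_2\in C_2$ induces a bisection $(A,B):=(C_1\cap V,\,C_2\cap V)$ of $V$, and exactly one of $\{x_1,v_0\},\{x_2,v_0\}$ is cut; so its inter-cluster cost is $\chi=\mathrm{cut}_G(A,B)+1$, and since $G'$ has $m+2$ edges and all clusters have size $\tfrac N2$, \autoref{lem:costByCuts} gives $\cost{\mcP}=\tfrac{N(N-2)}{4}-m+2\,\mathrm{cut}_G(A,B)$. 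Conversely, any bisection $(A,B)$ yields the fair two-clustering $\{A\cup\{x_1\},\,B\cup\{x_2\}\}$ of exactly that cost. The single-cluster solution has the fixed cost $\binom N2-(m+2)$, and a short computation shows this exceeds $K':=\tfrac{N(N-2)}{4}-m+2K$ precisely when $K<\tfrac{N^2}{8}-1$. Hence, for instances with $K<\tfrac{N^2}{8}-1$, the graph $G$ has a bisection with at most $K$ crossing edges if and only if $G'$ has a fair clustering of cost at most $K'$: the forward direction uses the two-clustering from an optimal bisection, and for the backward direction a clustering of cost $\le K'$ cannot be the (too expensive) single cluster, hence is a two-clustering whose induced bisection has cut at most $K$.

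Finally I would note that this suffices: it is enough to start from a class of \textsc{Bisection} instances that is \NP-hard and on which the relevant threshold satisfies $K=\smallO(n^2)$. This already holds for connected graphs of bounded maximum degree, where $m=\bigO(n)$, so every bisection cut — in particular any non-trivial $K\le m$ — is $\bigO(n)\ll\tfrac{N^2}{8}-1$; alternatively one first sparsifies a general instance. The construction is clearly polynomial, and since $G'$ is connected the ``even if the graph is connected'' clause comes for free.

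The main obstacle is exactly ruling out the single, all-vertices cluster: unlike in the forest setting of \autoref{lem:smallClustersForest}, there is no structural result bounding the cluster size in general graphs, so this ``trivial'' fair clustering genuinely competes with the intended balanced ones and must be shown suboptimal by the quantitative argument through \autoref{lem:costByCuts}; this is what forces the reduction to use sufficiently sparse \textsc{Bisection} instances. The remaining correspondence between balanced two-clusterings of $G'$ and bisections of $G$ is routine.
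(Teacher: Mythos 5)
Your reduction is correct, but it is a genuinely different construction from the paper's. The paper also reduces from \textsc{Bisection}, but instead of attaching two pendant blue vertices it attaches two \emph{cliques}, each consisting of one blue and $|V|$ red vertices, to an arbitrary $v\in V$. The cliques make the single all-vertices cluster so expensive that it is beaten by \emph{every} two-cluster fair solution (the cost gap is $\Theta(n^2)$ and independent of the bisection width), so the paper can start from \textsc{Bisection} on arbitrary graphs; the price is an additional argument that no optimal solution peels vertices off a clique, which your gadget-free construction avoids entirely. Your version instead rules out the single cluster only when the threshold satisfies $K<\tfrac{N^2}{8}-1$, which forces you to start from a restricted class of \textsc{Bisection} instances. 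That is the one point you should nail down: the claim that \textsc{Bisection} remains \NP-hard on connected bounded-degree (e.g.\ $3$-regular) graphs is true but needs a citation (it is due to Bui, Chaudhuri, Leighton, and Sipser), and it is not the version of \textsc{Bisection} the paper defines; the alternative remark that one could ``first sparsify a general instance'' should be dropped or made precise, since a bisection-preserving sparsification is not routine. With that reference in place, your argument is complete, and arguably cleaner than the paper's: the correspondence between balanced two-clusterings of $G'$ and bisections of $G$, the cost formula via \autoref{lem:costByCuts}, and the threshold computation are all verified correctly.
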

\begin{proof}
    We reduce from \textsc{Bisection}. Let \(G=(V,E)\) be a \textsc{Bisection} instance and assume it has an even number of vertices (otherwise it is a trivial no-instance). The idea is to color all of the vertices in \(V\) red and add two cliques, each consisting of one blue and \(|V|\) red vertices to enforce that a minimum-cost \fcc consists of exactly two clusters and thereby partitions the vertices of the original graph in a minimum-cost bisection. The color ratio is \(2:3|V|\) which equals \(1:\frac{|V'|}{2}-1\) with \(V'\) being the set of the newly constructed graph. 
    We have to rule out the possibility that a minimum-cost \fcc is just one cluster containing the whole graph. We do this by connecting the new blue vertices \(v_1,v_2\) to only one arbitrary red vertex \(v\in V\). We illustrate the scheme in \autoref{fig:bisectionRed}. 
    \begin{figure}
        \centering
        \includegraphics[height=.25\textwidth]{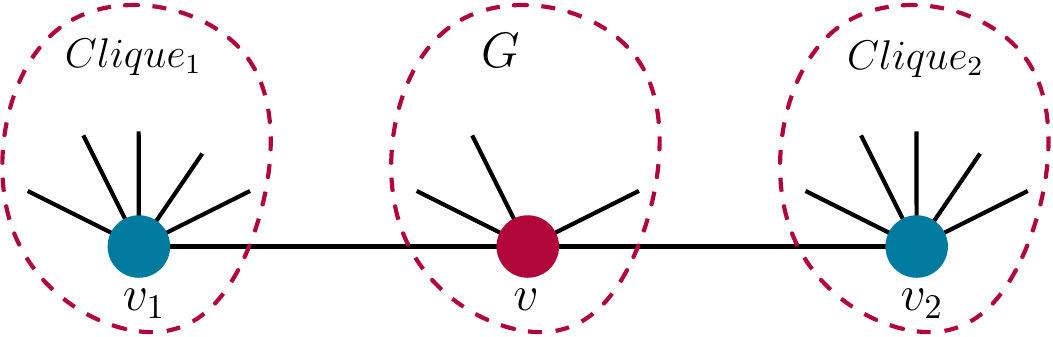}
    \caption{Graph constructed for the reduction from \textsc{Bisection} to a \fcc instance with just 2 large clusters. The middle part corresponds to the input graph \(G\) and is colored red. \(Clique_1\) and \(Clique_2\) are both cliques of \(|V|\) red vertices and one blue vertex each.}\label{fig:bisectionRed}
    \end{figure} 
    We first argue that every clustering with two clusters is cheaper than placing all vertices in the same cluster. Let \(n=|V|\) as well as \(m=|E|\). Let \(\mcP\) be a clustering that places all vertices in a single cluster. Then, 
    \begin{gather*}
        \cost{\mcP} = \frac{(3n+2)(3n+1)}{2} - \left(m+2+2\cdot\frac{n(n+1)}{2}\right)
        	= \frac{7n^2}{2} +\frac{7n}{2}-m -1,     
    \end{gather*}
    as the cluster is of size \(3n+2\), there is a total of \(m+2\) plus the edges of the cliques, and no edge is cut. 
    Now assume we have a clustering \(\mcP'\) with an inter-cluster cost of \(\chi'\) that puts each clique in a different cluster. Then, 
    \begin{align*}
        \cost{\mcP'} &= \chi'+ 2\cdot\frac{(\frac{3n}{2}+1)(\frac{3n}{2})}{2} - \left(m-\chi' + \frac{n(n+1)}{2}\right)\\
        &= \frac{7n^2}{4} +n-m + 2\chi'
         \le \frac{9n^2}{4} +n-m+2,     
    \end{align*}
    since there are at most \(\frac{n}{2}\cdot \frac{n}{2}\) inter-cluster edges between vertices of \(V\) and one inter-cluster edge from \(v\) to either \(v_1\) or \(v_2\), so \(\chi\le \frac{n^2}{4}+1\). 
    Placing all vertices in the same cluster is hence more expensive by 
    \begin{gather*}
        \cost{\mcP}-\cost{\mcP'} \ge \frac{7n^2}{2} 
        	+\frac{7n}{2}-m -1 - \left(\frac{9n^2}{4} +n-m+2\right)
        	= \frac{5n^2}{4}+\frac{5n}{2}-3
    \end{gather*}
    than any clustering with two clusters. This is positive for \(n\ge 2\). Thus, \fcc will always return at least two clusters. Also, due to the fairness constraint and there being only two blue vertices, it creates exactly two clusters. 

    Further, it does not cut vertices from one of the two cliques for the following reason. As the clusters are of fixed size, by \autoref{lem:costByCuts} we can focus on the inter-cluster cost to argue that a minimum-cost \fcc only cuts edges in \(E\). 
    First, note that it is never optimal to cut vertices from both cliques as just cutting the difference from one clique cuts fewer edges. This also implies that at most \(\frac{n}{2}\) red vertices are cut from the clique as otherwise, the other cluster would have more than the required \(\frac{3n}{2}\) red vertices. So, assume \(0<a\le \frac{n}{2}\) red vertices are cut from one clique. Any such solution has an inter-cluster cost of \(a\cdot(n+1-a)+ \chi_E\), where \(\chi_E\) is the number of edges in \(E\) that are cut to split \(V\) into two clusters of size \(\frac{n}{2}+a\) and \(\frac{n}{2}-a\) as required to make a fair partition. We note that by not cutting the cliques and instead cutting off \(a\) vertices from the cluster of size \(\frac{n}{2}+a\), we obtain at most \(a\cdot \frac{n}{2} + \chi_E\) cuts. As \(\frac{n}{2}<n+1-a\), this implies that no optimal solution cuts the cliques.
    Hence, each optimal solution partitions the \(V\) in a minimum-cost bisection. 
    
    Thus, by solving \fcc on the constructed graph we can solve \textsc{Bisection} in \(G\). As further, the constructed graph is of polynomial size in \(|V|\), we obtain our hardness result.
\end{proof}

\section{Algorithms}
\label{sec:algorithms}
The results from \autoref{sec:hardness} make it unlikely that there is a general polynomial time algorithm solving \fcc on trees and forests. However, we are able to give efficient algorithms for certain classes of instances.

\subsection{Simple Cases}
\label{subsec:easyInstances}

First, we observe that \fcc on bipartite graphs is equivalent to the problem of computing a maximum bipartite matching if there are just two colors that occur equally often. This is due to there being a minimum-cost \fc such that each cluster is of size 2.

\begin{theorem}
\label{thm:bipartiteOneOneBipMatching}
    Computing a minimum-cost \fc with two colors in a ratio of $1:1$ 
    is equivalent to the maximum bipartite matching problem under linear-time reductions,
    provided that the input graph has a minimum-cost \fc in which each cluster has cardinality at most $2$.
\end{theorem}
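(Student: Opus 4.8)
The plan is to establish two linear-time reductions. For the direction from \fcc to bipartite matching, suppose $G = (V, E)$ with $n = |V|$, $m = |E|$ has a minimum-cost fair clustering all of whose clusters have at most two vertices. Since the two colors appear in ratio $1:1$, every fair set has even size with exactly half of its vertices of each color; hence $n$ is even, $|R| = |B| = n/2$ (writing $R$, $B$ for the red and blue classes), and every cluster of such a solution is a pair consisting of one red and one blue vertex, so there are exactly $n/2$ clusters. Restricting attention to clusterings $\mcP$ into $n/2$ red--blue pairs, \autoref{lem:costByCuts} with $d = 2$ gives $\cost{\mcP} = \frac{n}{2} - m + 2\chi$, where $\chi$ is the number of edges of $G$ cut by $\mcP$. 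Thus minimizing the cost is the same as maximizing the number of edges lying inside clusters.

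Next I would note that only edges between $R$ and $B$ can possibly lie inside a cluster; every red--red or blue--blue edge is cut by every such $\mcP$. Let $H$ be the bipartite graph on vertex set $R \cup B$ keeping exactly the $R$--$B$ edges of $G$. A clustering into red--blue pairs is precisely a bijection $\sigma \colon R \to B$, and the edges it keeps inside clusters form the matching $M_\sigma = \{\{r, \sigma(r)\} : \{r,\sigma(r)\} \in E(H)\}$ of $H$; conversely, because $|R| = |B|$, any matching $M$ of $H$ extends to a bijection by pairing up the unmatched vertices of the two sides arbitrarily, keeping exactly $|M|$ edges inside clusters. Hence the maximum number of inside-cluster edges equals the maximum matching number $\nu(H)$, so $\min_\mcP \chi = m - \nu(H)$ and $\min_\mcP \cost{\mcP} = \frac{n}{2} + m - 2\nu(H)$. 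Constructing $H$ from $G$ and turning a maximum matching of $H$ back into an optimal clustering each take $O(n+m)$ time, giving the reduction (and, via the standard matching-to-flow reduction, the running time claimed in \autoref{subsec:contrib_feasible}).

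For the converse, given a bipartite graph $H = (X \cup Y, E_H)$ for which a maximum matching is sought, I would pad the smaller side with isolated vertices so that $|X| = |Y|$, color $X$ red and $Y$ blue, and take this as an \fcc instance $G$. Then $G$ is bipartite with respect to the coloring, so \autoref{lem:smallClustersBipartiteOneOne} guarantees a minimum-cost fair clustering with clusters of size at most $2$; that is, $G$ satisfies the hypothesis of the theorem, and by the computation above any optimal clustering of $G$ exhibits a maximum matching of $G$, which is a maximum matching of $H$ since the padding vertices are isolated. This construction is again linear, completing the equivalence.

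I expect the only delicate point to be the completion argument --- verifying that an arbitrary matching of $H$ genuinely extends to a full red--blue pairing without destroying any kept edge (this is exactly where $|R| = |B|$ is used) --- together with the bookkeeping of which edges of $G$ are forced to be cut regardless of the pairing; the rest is a direct substitution into \autoref{lem:costByCuts}.
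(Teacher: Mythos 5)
Your proof is correct and follows essentially the same route as the paper's: both directions reduce to maximum matching on the red--blue edge subgraph via \autoref{lem:costByCuts} with $d=2$. You are in fact slightly more careful than the paper in the converse direction, where you pad the smaller side with isolated vertices to enforce the $1:1$ ratio and invoke \autoref{lem:smallClustersBipartiteOneOne} to verify the theorem's hypothesis.
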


\begin{proof}
    Let the colors be red and blue.
    By assumption, there is an optimum clustering for which all clusters are of size at most 2. Due to the fairness constraint, each such cluster consists of exactly 1 red and 1 blue vertex. By \autoref{lem:costByCuts}, the lowest cost is achieved by the lowest inter-cluster cost, i.e., when the number of clusters where there is an edge between the two vertices is maximized. This is exactly the matching problem on the bipartite graph \(G'=(R\cup B, E')\), with \(R\) and \(B\) being the red and blue vertices, respectively, and \(E'=\{\{u,v\}\in E\mid u\in R\wedge v\in B\}\).
    After computing an optimum matching, each edge of the matching defines a cluster and unmatched vertices are packed into fair clusters arbitrarily.

    For the other direction, if we are given an instance \(G'=(R\cup B, E')\) for bipartite matching, we color all the vertices in \(R\) red and the vertices in \(B\) blue. Then, a minimum-cost \fc is a partition that maximizes the number of edges in each cluster as argued above. As each vertex is part of exactly one cluster and all clusters consist of one vertex in \(R\) and one vertex in \(B\), this corresponds to a maximum bipartite matching in \(G'\). 
\end{proof}

By \autoref{lem:smallClustersBipartiteOneOne}, the condition of \autoref{thm:bipartiteOneOneBipMatching}
is met by all bipartite graphs.
The recent maxflow breakthrough~\cite{Chen22MaxFlowAlmostLinear} also gives an $m^{1+o(1)}$-time algorithm to compute bipartite matchings,
this then transfers also to \fcc with color ratio \(1:1\).
For \fcc on forests, we can do better as the reduction in \autoref{thm:bipartiteOneOneBipMatching}
again results in a forest, for which bipartite matching can be solved in linear time by standard techniques.
We present the algorithm here for completeness.

\begin{theorem}
\label{thm:fcc_on_forests}
    \emph{\fcc} on forests with a color ratio $1:1$ can be solved in time \(\mathcal{O}(n)\).
\end{theorem}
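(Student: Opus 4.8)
The plan is to reduce \fcc on a forest with color ratio $1:1$ to bipartite matching on a forest, exactly as in \autoref{thm:bipartiteOneOneBipMatching}, and then observe that all the work can be done in linear time because the auxiliary graph is again a forest. First I would invoke \autoref{lem:smallClustersBipartiteOneOne}: in a forest with two colors in ratio $1:1$ there is a minimum-cost \fc in which every cluster has size $2$ (indeed size $4$ suffices, but size $2$ is available), and by the fairness constraint each such cluster is one red and one blue vertex. By \autoref{lem:costByCuts} with $d=2$, we have $\cost{\mcP} = \frac{1}{2}(d-3)n + 2\chi + 1 = -\frac{n}{2} + 2\chi + 1$ on a forest, so minimizing cost is the same as minimizing the inter-cluster cost $\chi$, i.e.\ maximizing the number of clusters $\{r,b\}$ that are joined by an edge of the forest. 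That is precisely a maximum matching in the bipartite forest $F' = (R \cup B, E')$ with $E' = \{\{u,v\} \in E \mid u \in R,\ v \in B\}$; since $F$ is a forest, $F'$ is a forest as well.

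Next I would spell out the linear-time matching routine on a forest so the theorem is self-contained, since the excerpt promises the algorithm "for completeness." The standard approach is a greedy leaf-stripping / rooted DP: root each tree of $F'$ arbitrarily, process vertices in post-order, and at each vertex $v$ with children $v_1,\dots,v_t$, match $v$ to the first child $v_i$ that is still unmatched (equivalently, that has no matched child forcing it up), mark both matched, and continue. Equivalently one runs the textbook tree-DP computing, for each vertex, the best matching in its subtree in the two states "root matched" / "root free," which clearly runs in $\bigO(n)$. This yields a maximum matching $M$ in $F'$. Each edge of $M$ becomes a size-$2$ cluster $\{r,b\}$; the remaining unmatched vertices — equal numbers of red and blue by a counting argument ($|R|=|B|$ and $M$ covers equally many of each) — are paired up arbitrarily into the remaining fair clusters. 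Reconstructing the partition and writing it out is linear, so the whole procedure is $\bigO(n)$.

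Finally I would check correctness: the partition produced is fair by construction (every cluster is one red and one blue), and its inter-cluster cost is $\chi = |R| - |M|$ (the number of clusters that are non-edges), which is minimized exactly when $|M|$ is maximized; combined with \autoref{lem:costByCuts} this gives a minimum-cost \fc. I do not expect a genuine obstacle here — the statement is essentially a corollary of \autoref{thm:bipartiteOneOneBipMatching} plus \autoref{lem:smallClustersBipartiteOneOne} plus the folklore fact that maximum matching in a forest is linear-time. The only mild care needed is the bookkeeping: confirming that restricting to the bichromatic edges does not lose optimality (it does not, because any monochromatic edge inside a size-$2$ cluster is impossible once clusters are forced to be bichromatic, and monochromatic edges between clusters are cut regardless of how we match), and making sure the leftover vertices can always be completed to fair pairs, which follows from $|R| = |B|$. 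Hence the main "difficulty," such as it is, lies in presenting the linear-time forest-matching DP cleanly rather than in any new idea.
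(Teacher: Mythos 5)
Your proposal is correct and follows essentially the same route as the paper: invoke \autoref{thm:bipartiteOneOneBipMatching} (justified by \autoref{lem:smallClustersBipartiteOneOne}) to reduce to maximum bipartite matching on the bichromatic sub-forest, then compute that matching by the standard linear-time rooted tree DP and pair the leftover vertices arbitrarily. No substantive differences from the paper's argument.
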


\begin{proof}
	We apply \autoref{thm:bipartiteOneOneBipMatching} to receive a sub-forest of the input for which we have to compute a maximum matching.
    We do so independently for each of the trees by running the following dynamic program.
    We visit all vertices, but each one only after we have already visited all its children (for example by employing topological sorting).
    For each vertex \(v\), we compute the maximum matching \(M_v\) in the subtree rooted at \(v\) as well as the maximum matching \(M_v'\) in the subtree rooted at \(v\) assuming \(v\) is not matched.
    We directly get that \(M_v'\) is simply the union of the matchings \(M_u\) for each child \(u\) of \(v\).
    Further, either \(M_v = M_v'\) or in \(M_v\) there is an edge between \(v\) and some child \(u\).
    In the latter case, \(M_v\) is the union of \(\{u,v\}, M_u',\) and the union of all \(M_w\) for all children \(w\neq u\).
    Trying out all possible choices of \(u\) and comparing them among another and to \(M_v'\) yields \(M_v\).
    In the end, the maximum matching in the tree with root \(r\) is \(M_r\).
    
    Each vertex is visited once. 
    If the matchings are not naively merged during the process but only their respective sizes are tracked and the maximum matching is retrieved after the dynamic program by using a back-tracking approach, the time complexity per vertex is linear in the number of its children. 
    Thus, the dynamic program runs in time in \(\mathcal{O}(n)\).
\end{proof}

Next, recall that \autoref{thm:tree_hard} states that \fcc on trees with a diameter of at least 4 is \NP-hard. With the next theorem, we show that we can efficiently solve \fcc on trees with a diameter of at most 3, so our threshold of 4 is tight unless $\P = \NP$.

\begin{theorem}
\label{thm:treeDiam3Linear}
    \emph{\fcc} on trees with a diameter of at most 3 can be solved in time $O(n)$.
\end{theorem}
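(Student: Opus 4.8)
The plan is to exploit the extremely rigid structure of trees of diameter at most $3$. A tree of diameter at most $2$ is a star; a tree of diameter exactly $3$ is a "double star", i.e.\ two adjacent center vertices $u,v$, with $u$ additionally adjacent to some leaves and $v$ additionally adjacent to some leaves. So I would first dispose of the trivial cases: if the color distribution does not admit \emph{any} fair clustering (i.e.\ the $c_i$ do not divide the color counts appropriately), output ``infeasible''; and if $d=\sum_i c_i$ does not divide $n$, likewise. Otherwise, by \autoref{lem:smallClustersForest} (for $d\ge 3$) or \autoref{lem:smallClustersBipartiteOneOne} (for the $1:1$ case, where we may also restrict to clusters of size $2$ or $4$, but on a tree of diameter $\le 3$ a cluster of size $4$ would have to be the whole tree, a case we can check separately), we know there is a minimum-cost \fc in which every cluster has size exactly $d$, hence there are exactly $n/d$ clusters. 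By \autoref{lem:costByCuts} it then suffices to find such a fair partition minimizing the inter-cluster cost $\chi$, equivalently maximizing the number of tree edges kept inside clusters.

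The key observation is that a tree of diameter $\le 3$ has at most two non-leaf vertices, so at most two edges of the tree can possibly connect two non-leaves, and every other edge is incident to a leaf. Thus the number of "kept" edges in any clustering is tiny: each cluster is an induced subgraph of a tree of diameter $\le 3$, so it contains at most $d-1$ edges, but more importantly the whole partition keeps at most... well, the real point is that since only $u$ and $v$ have degree $>1$, a cluster keeps a set of edges forming a star around $u$, a star around $v$, or (for the one cluster containing both $u$ and $v$) the union plus possibly the edge $uv$. So the optimization reduces to: decide how many leaves of $u$ and how many leaves of $v$ (and whether the edge $uv$) go into the cluster(s) containing $u$, resp.\ $v$, subject to the fairness/size constraints, to keep as many edges as possible. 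Concretely I would branch on the $O(1)$ choices of which color(s) $u$ and $v$ have and whether $u,v$ land in the same cluster; within each branch, the kept-edge count is maximized greedily by filling $u$'s cluster with as many of $u$'s leaves as the color budget $c_{c(u)}$ (and the other budgets) permit, similarly for $v$, and the remaining leaves — which are isolated once their incident edge is cut — are packed into fair clusters arbitrarily at no extra inter-cluster cost. All of this is a constant number of cases, each solvable by counting how many leaves of each color hang off $u$ and off $v$, which is an $O(n)$ scan of the tree.

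The main obstacle, and the only place where care is needed, is the bookkeeping of the fairness constraint when distributing leaves: one must verify that after assigning a chosen number of $u$-leaves and $v$-leaves to their clusters, the \emph{leftover} leaves can indeed be completed to fair clusters of size $d$ — this is purely an arithmetic feasibility check on the residual color counts (each residual color count must be a nonnegative multiple-compatible quantity summing correctly), and it holds automatically because the global counts are fair multiples of $c_i$ and we only moved whole vertices respecting per-cluster color budgets. I would also separately handle the degenerate cases where $n=d$ (a single cluster, cost computed directly) and the $1:1$ subcase allowing a size-$4$ cluster only when $n=4$. Assembling: the algorithm scans the tree once to classify vertices, iterates over the $O(1)$ structural branches computing the maximum number of retained edges and hence $\chi$ via \autoref{lem:costByCuts}, takes the best, and reconstructs an actual clustering in another $O(n)$ pass — total $O(n)$.
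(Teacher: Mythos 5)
Your proposal is correct and follows essentially the same route as the paper: reduce to clusters of size exactly $d$ via \autoref{lem:smallClustersForest} (resp.\ \autoref{lem:smallClustersBipartiteOneOne} for $1:1$), use \autoref{lem:costByCuts} to optimize only the inter-cluster cost, exploit the star/double-star structure of diameter-$\le 3$ trees, branch on whether the two centers share a cluster, and greedily fill the centers' clusters with their leaf neighbors subject to the per-color budgets. The only cosmetic deviation is that the paper dispatches the $1:1$ case to its matching reduction while you handle it inside the same framework (and your aside that a size-$4$ cluster must be the whole tree is not needed, since the lemma already lets you restrict to size-$2$ clusters); neither point affects correctness.
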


\begin{proof}
    Diameters of 0 or 1 are trivial and the case of two colors in a ratio of \(1:1\) is handled by \autoref{thm:bipartiteOneOneBipMatching}. So, assume \(d>2\) to be the minimum size of a fair cluster. A diameter of two implies that the tree is a star. In a star, the inter-cluster cost equals the number of vertices that are not placed in the same cluster as the center vertex. By \autoref{lem:smallClustersForest}, every clustering of minimum cost has minimum-sized clusters. As in a star, all these clusterings incur the same inter-cluster cost of \(n-d+1\) they all have the same \cccost by \autoref{lem:costByCuts}. Hence, outputting any \fc with minimum-sized clusters solves the problem. Such a clustering can be computed in time in $\bigO(n)$.

    If we have a tree of diameter 3, it consists of two adjacent vertices \(u,v\) such that every vertex \(w\in V\setminus\{u,v\}\) is connected to either \(u\) or \(v\) and no other vertex, see \autoref{fig:diameter3Tree}. 
    \begin{figure}
        \centering
        \includegraphics[height=.2\textwidth]{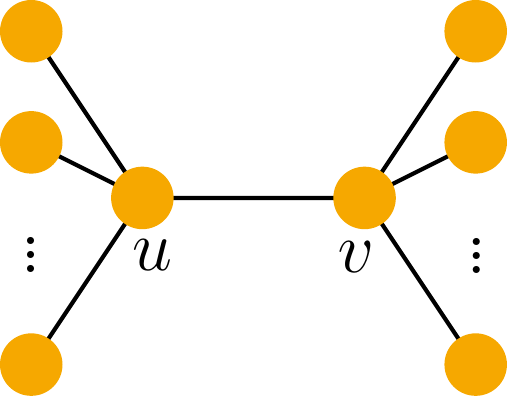}
    \caption{Shape of every tree with diameter 3.}
    \label{fig:diameter3Tree}
    \end{figure} 
    This is due to every graph of diameter 3 having a path of four vertices. Let the two in the middle be \(u\) and \(v\). The path has to be an induced path or the graph would not be a tree. We can attach other vertices to \(u\) and \(v\) without changing the diameter but as soon as we attach a vertex elsewhere, the diameter increases. Further, there are no edges between vertices in \(V\setminus\{u,v\}\) as the graph would not be circle-free.
    
    For the clustering, there are now two possibilities, which we try out separately. Either \(u\) and \(v\) are placed in the same cluster or not. In both cases, \autoref{lem:smallClustersForest} gives that all clusters are of minimal size \(d\). If \(u\) and \(v\) are in the same cluster, all clusterings of fair minimum sized clusters incur an inter-cluster cost of \(n-d+2\) as all but \(d-2\) vertices have to be cut from \(u\) and \(v\). In $\bigO(n)$, we greedily construct such a clustering \(\mcP_1\). If we place \(u\) and \(v\) in separate clusters, the minimum inter-cluster is achieved by placing as many of their respective neighbors in their respective clusters as possible. After that, all remaining vertices are isolated and are used to make these two clusters fair and if required form more fair clusters. Such a clustering \(\mcP_2\) is also computed in $\bigO(n)$.
    We then return the cheaper clustering. This is a \fc of minimum cost as either \(u\) and \(v\) are placed in the same cluster or not, and for both cases, \(\mcP_1\) and \(\mcP_2\) are of minimum cost, respectively.
\end{proof}

\subsection{Color Ratio 1\nwspace :\nwspace 2}
\label{sec:12forests}

We now give algorithms for \fcc on forests that do not require a certain diameter or degree.
As a first step to solve these less restricted instances, we develop an algorithm to solve \fcc on forests with a color ratio of \(1:2\). 

W.l.o.g., the vertices are colored blue and red with twice as many red vertices as blue ones. We call a connected component of size 1 a \emph{\bcomp} or \emph{\rcomp}, depending on whether the contained vertex is blue or red. Analogously, we apply the terms \emph{\brcomp, \rrcomp}, and \emph{\brrcomp} to components of size 2 and 3.

\subsubsection{Linear Time Attempt}

Because of \autoref{lem:smallClustersForest}, we know that in every minimum-cost \fc each cluster contains exactly 1 blue and 2 red vertices. 
Our high-level idea is to employ two phases.

In the first phase, we partition the vertices of the forest \(F\) in a way such that in every cluster there are at most 1 blue and 2 red vertices. We call such a partition a \emph{splitting} of \(F\). We like to employ a standard tree dynamic program that bottom-up collects vertices to be in the same connected component and cuts edges if otherwise there would be more than 1 blue or 2 red vertices in the component. We have to be smart about which edges to cut, but as only up to 3 vertices can be placed in the topmost component, we have only a limited number of possibilities we have to track to find the splitting that cuts the fewest edges.

After having found that splitting, we employ a second phase, which finds the best way to assemble a fair clustering from the splitting by merging components and cutting as few additional edges as possible. As, by \autoref{lem:costByCuts}, a fair partition with the smallest inter-cluster cost has a minimum \cccost, this would find a minimum-cost \fc. 

Unfortunately, the approach does not work that easily. We find that the number of cuts incurred by the second phase also depends on the number of \(br\)- and \rcomps.

\begin{lemma}\label{lem:assembling}
    Let \(F=(V,E)\) be an \(n\)-vertex forest with colored vertices in blue and red in a ratio of \(1:2\). 
    Suppose in each connected component (in the above sense) 
    there is at most 1 blue vertex and at most 2 red vertices.
    Let \(\#(br)\) and \(\#(r)\) be the number of \(br\)- and \rcomps, respectively.
    Then, after cutting \(\max (0, \frac{\#(br) -\#(r)}{2})\) edges, the remaining connected components can be merged such that all clusters consist of exactly 1 blue and 2 red vertices. Such a set of edges can be found in time in $\bigO(n)$.
    Further, when cutting less than \(\max(0, \frac{\#(br) -\#(r)}{2})\) edges, such merging is not possible.
\end{lemma}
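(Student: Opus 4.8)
The plan is to reduce everything to a counting argument over the five possible types of components. Write $n_b, n_r, n_{br}, n_{rr}, n_{brr}$ for the numbers of \bcomps, \rcomps, \brcomps, \rrcomps, and \brrcomps, respectively. Counting blue and red vertices and using the $1:2$ ratio immediately yields the identity $n_r + 2n_{rr} = 2n_b + n_{br}$; rearranged, $\#(br) - \#(r) = n_{br} - n_r = 2(n_{rr} - n_b)$. Hence $\tfrac{1}{2}(\#(br)-\#(r))$ is a non-negative integer exactly when $n_{rr} \ge n_b$, in which case it equals $n_{rr} - n_b$; otherwise $\max(0,\tfrac{1}{2}(\#(br)-\#(r))) = 0$. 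The whole proof is then a case distinction on the sign of $n_{rr}-n_b$.

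For the feasibility direction I would argue constructively. If $n_{br} \le n_r$ (equivalently $n_{rr} \le n_b$), cut nothing: pair every \rrcomp with a distinct \bcomp, pair every \brcomp with a distinct \rcomp, and fill each remaining \bcomp with two \rcomps; the identity above is precisely what guarantees that the \rcomps suffice and are used up exactly, so every resulting cluster is a \brrcomp. If $n_{br} > n_r$ (equivalently $n_{rr} > n_b$), first pair every \bcomp with a distinct \rrcomp, then cut the single edge of each of the remaining $n_{rr}-n_b$ \rrcomps, turning them into $2(n_{rr}-n_b)$ isolated red vertices; the identity now says there are exactly $n_{br}$ free red vertices, one for each \brcomp, and again all clusters become \brrcomps. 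In both cases the components and their types are read off by one linear-time traversal of $F$, and selecting the $\max(0,\tfrac{1}{2}(\#(br)-\#(r)))$ edges to cut is then immediate, giving the claimed $\bigO(n)$ bound.

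The harder direction — and the real content of the lemma — is the matching lower bound: any valid solution that cuts $t$ edges must satisfy $t \ge n_{rr}-n_b$ when $n_{rr} > n_b$. Here I would work with the \emph{pieces}, i.e.\ the connected components of $F$ after removing the $t$ cut edges, and classify the cuts by the component they lie in; since \bcomps and \rcomps are edgeless, every cut lies inside a \brcomp, an \rrcomp, or a \brrcomp, and I write $t_{rr}$ for the number of cuts inside \rrcomps. The two key observations are: (i) in the final grouping, any piece consisting of two red vertices must sit in a cluster together with a single piece consisting of one blue vertex (the remainder of that $brr$-cluster), so the number of ``lone-blue'' pieces is at least the number of ``two-red'' pieces, which in turn is at least $n_{rr}-t_{rr}$, the number of uncut \rrcomps; and (ii) a lone-blue piece is either one of the $n_b$ original \bcomps or is produced by a cut inside a \brcomp or \brrcomp, and each such cut produces at most one lone-blue piece, so there are at most $n_b + (t-t_{rr})$ of them. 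Chaining (i) and (ii) gives $n_{rr}-t_{rr} \le n_b + t - t_{rr}$, i.e.\ $t \ge n_{rr}-n_b$. The main obstacle is making observation (ii) airtight: one has to check case by case what cutting one or both edges of a \brrcomp (a three-vertex path, with the blue vertex either in the middle or at an end) can create, and confirm that no configuration yields two lone-blue pieces out of a single cut.
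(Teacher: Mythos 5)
Your proposal is correct and follows essentially the same route as the paper: the same counting identity from the $1:2$ ratio, the same greedy pairing ($b$ with $rr$, $br$ with $r$, then splitting surplus $rr$-components) for the upper bound, and a counting argument over component types for the lower bound. The only cosmetic difference is that your lower bound tracks lone-blue pieces versus two-red pieces, whereas the paper tracks $r$-components as the scarce resource needed by $br$-components; these are dual formulations of the same potential argument, and your version is in fact slightly more explicit about cuts inside \brrcomps.
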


\begin{proof}
    As long as possible, we arbitrarily merge \bcomps with \rrcomps as well as  \brcomps with \rcomps. For this, no edges have to be cut.
    Then, we split the remaining \rrcomps and merge the resulting \rcomps with one \brcomp each. This way, we incur \(\max(0,\frac{\#(br)-\#(r)}{2})\) more cuts and obtain a fair clustering as now each cluster contains two red and one blue vertex. This procedure is done in time in $\bigO(n)$.

    Further, there is no cheaper way.
    For each \brcomp to be merged without further cuts we require an \rcomp. There are \(\#(r)\) \rcomps and each cut creates either at most two \rcomps or one \rcomp while removing a \brcomp. Hence, \(\max(0,\frac{\#(br)-\#(r)}{2})\) cuts are required.
\end{proof}

For our approach to work, the first phase has to simultaneously minimize the number of cuts as well as the difference between \(br\)- and \rcomps. This is, however, not easily possible.
Consider the tree in \autoref{fig:tree_12_gadget}.

\begin{figure}
    \centering
    \includegraphics[height=.3\textwidth]{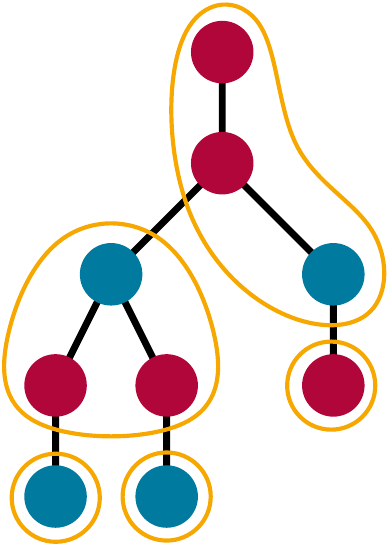}
    \hspace{.1\textwidth}
    \includegraphics[height=.3\textwidth]{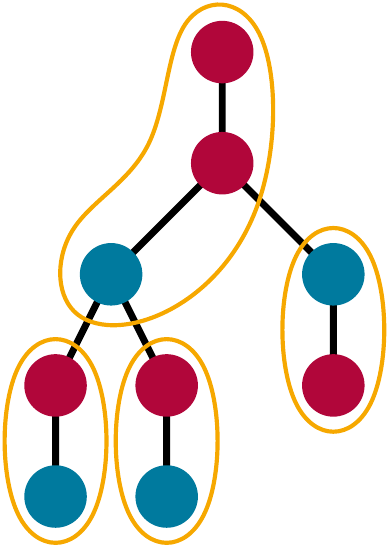}
    \caption{A tree for which the splitting with the minimum number of cuts (right) has 3 more \brcomps and 1 less \rcomp than a splitting with one more edge cut (left).}\label{fig:tree_12_gadget}
\end{figure}

There, with one additional cut edge we have three \brcomps less and one \rcomp more. Using a standard tree dynamic program, therefore, does not suffice as when encountering the tree as a subtree of some larger forest or tree, we would have to decide between optimizing for the number of cut edges or the difference between \(br\)- and \rcomps. There is no trivial answer here as the choice depends on how many \(br\)- and \rcomps are obtained in the rest of the graph.
For our approach to work, we hence have to track both possibilities until we have seen the complete graph, setting us back from achieving a linear running time.

\subsubsection{The Join Subroutine} 

In the first phase, we might encounter situations that require us to track multiple ways of splitting various subtrees. When we reach a parent vertex of the roots of these subtrees, we join these various ways of splitting. For this, we give a subroutine called \join. We first formalize the output by the following lemma, then give an intuition on the variables, and lastly prove the lemma by giving the algorithm. 

\begin{lemma}
\label{lem:join}
    Let \(R_1,R_2,\ldots,R_{\ell_1}\) for \(\ell_1\in\N_{>1}\) with \(R_i \in (\N\cup\{\infty\})^{\ell_2}\) for \(\ell_2\in\N, i\in[\ell_1]\) and \(f\) be a computable function \(f\colon [\ell_2]\times [\ell_2]\rightarrow 2^{[\ell_2]}\).
    For \(x\in [\ell_2]\), let
    \begin{align*}
        A_x &= \{M \in \left([\ell_2]\right)^{\ell_1}\mid x\in \widehat{f}(M[1], M[2],\ldots, M[\ell_2])\},
    \intertext{whereby for all \(x_1,x_2, \ldots \in [\ell_2]\)}
        \widehat{f}(x_1, x_2) &= f(x_1, x_2)
    \intertext{and for all \(2\le k \le \ell_2\)}
        \widehat{f}(x_1,x_2,\ldots, x_k) &= \bigcup_{x\in \widehat{f}(x_1,x_2,\ldots, x_{k-1})} f(x, x_k).
    \end{align*}
    Then, an array \(R\in (\N\cup\{\infty\})^{\ell_2}\) such that \(R[x] = \min_{M\in A_x} \sum_{i=1}^{\ell_1} R_i[M[i]]\) for all \(x\in[\ell_2]\) can be computed in time in $\bigO(\ell_1\cdot \ell_2^2\cdot T_f)$, where \(T_f\) is the time required to compute \(f\).
\end{lemma}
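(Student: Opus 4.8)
The plan is to compute $R$ by a left-to-right dynamic program over the arrays $R_1,\dots,R_{\ell_1}$. After processing the first $j$ of them I would maintain an array $S_j\in(\N\cup\{\infty\})^{\ell_2}$ whose entry $S_j[a]$ equals the minimum of $\sum_{i=1}^{j} R_i[M'[i]]$ over all tuples $M'\in[\ell_2]^{j}$ with $a\in\widehat f(M'[1],\dots,M'[j])$, where for $j=1$ the membership condition is read as $M'[1]=a$; thus $S_1:=R_1$. The transition, for $j=2,\dots,\ell_1$, first sets $S_j[x]=\infty$ for all $x\in[\ell_2]$, and then, for every pair $(a,b)\in[\ell_2]^2$, evaluates $f(a,b)$ and performs for each $x\in f(a,b)$ the relaxation $S_j[x]\gets\min(S_j[x],\,S_{j-1}[a]+R_j[b])$. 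The output is $R:=S_{\ell_1}$.

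Next I would prove correctness by induction on $j$ that $S_j$ has the meaning described above. The base case $j=1$ is immediate. For the inductive step I would unfold the recursive definition of $\widehat f$, namely $\widehat f(x_1,\dots,x_j)=\bigcup_{a\in\widehat f(x_1,\dots,x_{j-1})}f(a,x_j)$ (with the base clause $\widehat f(x_1,x_2)=f(x_1,x_2)$ covering $j=2$), to see that for a fixed tuple $M\in[\ell_2]^j$ one has $x\in\widehat f(M[1],\dots,M[j])$ if and only if there is an $a$ with $a\in\widehat f(M[1],\dots,M[j-1])$ and $x\in f(a,M[j])$. Since the cost $\sum_{i=1}^{j}R_i[M[i]]$ splits additively into $\sum_{i=1}^{j-1}R_i[M[i]]$ and $R_j[M[j]]$, and the prefix $M[1],\dots,M[j-1]$ may be chosen to minimize the first part independently once the ``link'' value $a$ and $b:=M[j]$ are fixed, the minimum over all admissible $M$ is exactly $\min_{(a,b):\,x\in f(a,b)}(S_{j-1}[a]+R_j[b])$, which is what the transition computes (with the convention $\infty+c=\infty$, so that $S_j[x]=\infty$ precisely when no admissible $M$ exists). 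Taking $j=\ell_1$ yields $R[x]=\min_{M\in A_x}\sum_{i=1}^{\ell_1}R_i[M[i]]$, and when $A_x=\emptyset$ the entry correctly remains $\infty$.

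Finally I would bound the running time. There are $\ell_1-1$ transitions; each one re-initializes $S_j$ in time $\bigO(\ell_2)$ and then iterates over the $\ell_2^2$ pairs $(a,b)$, spending $\bigO(T_f)$ per pair to compute $f(a,b)$ and $\bigO(|f(a,b)|)=\bigO(\ell_2)$ for the associated relaxations, the latter being absorbed into $\bigO(T_f)$ when $T_f$ accounts for writing down the (at most $\ell_2$-element) output of $f$. Summing over all transitions gives the claimed $\bigO(\ell_1\cdot\ell_2^2\cdot T_f)$.

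I do not anticipate a real obstacle; the main subtlety is bookkeeping. One must phrase the induction hypothesis in terms of the intermediate array $S_j$ (not in terms of full tuples) so that the additive decomposition of the cost lines up with the single relaxation step, and one must handle the boundary case that $\widehat f$ is only defined on at least two arguments by reading $S_1=R_1$ as the ``$M'[1]=a$'' base case. Both are routine once the right invariant is fixed.
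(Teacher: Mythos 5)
Your proposal is correct and follows essentially the same approach as the paper: an iterative left-to-right dynamic program that joins one array at a time by relaxing over all pairs $(a,b)\in[\ell_2]^2$ and all $x\in f(a,b)$, with the same inductive correctness argument (unfolding the recursion for $\widehat f$ and splitting the cost additively) and the same time analysis. The only cosmetic difference is that you re-initialize the intermediate array to $\infty$ at each step, whereas the paper's pseudocode copies the previous array forward; your variant matches the lemma statement more cleanly.
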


As we later reuse the routine, it is formulated more generally than required for this section. 
Here, for the \(1:2\) case, assume we want to join the splittings of the children \(u_1,u_2,\ldots, u_{\ell_1}\) of some vertex \(v\). 
For example, assume \(v\) has three children as depicted in \autoref{fig:joinExample}.

\begin{figure}
    \centering
    \includegraphics[height=.3\textwidth]{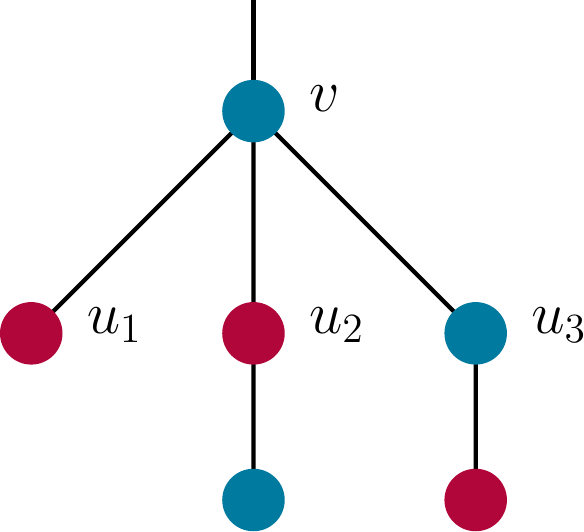}
    \caption{Exemplary graph for a \joinroutine.}
    \label{fig:joinExample}
\end{figure}

Then, for each child \(u_i\), let there be an array \(R_i\) such that \(R_i[x]\) is the minimum number of cuts required to obtain a splitting of the subtree \(T_{u_i}\) that has exactly \(x\) more \brcomps than \rcomps.
For our example, assume all edges between \(v\) and its children have to be cut.
We see, that \(R_1[-1]=1\) and \(R_1[x]=\infty\) for \(x\neq -1\), as the only possible splitting for the subtree of \(u_1\) cuts only the edge to \(v\) and has one more \rcomp than \brcomps.
Further, we have \(R_2[1]=1\) (by only cutting \(\{v,u_2\}\)), \(R_2[-1]=2\) (by cutting both edges of \(u_2\)), and \(R_2[x]=\infty\) for \(x\notin \{-1,1\}\). Last, note that \(R_3=R_2\). 

The function \(f\) returns the set of indices that should be updated when merging two possibilities. When a splitting of one child's subtree has \(x_1\) more \brcomps and a splitting of another child's subtree has \(x_2\) more \brcomps, then the combination of these splittings has \(x_1+x_2\) more \brcomps than \rcomps. Hence, the only index to update is \(f(x_1,x_2) = \{x_1 + x_2\}\). Later, we will require to update more than a single index, so \(f\) is defined to return a set instead of a single index.
Note that by the definition of \(f\) and \(\widehat{f}\), each value placed in \(R[x]\) by the routine corresponds to choosing exactly one splitting from each array \(R_i\) such that the total difference between \brcomps and \rcomps sums up to exactly \(x\). 

In our example, assume any splitting is chosen for each of the three subtrees. Let \(x_i\) denote the difference of \(br\)- and \rcomps of the chosen splitting for the subtree rooted at \(u_i\) for \(1\le i\le 3\). Then, \join sets \(R[x]\) for \(x=x_1+x_2+x_3\). 
If there are multiple ways to achieve an index \(x\), the one with the minimum number of cuts is stored in \(R[x]\).
In the example, we have 4 possibilities, as \(x_1 = -1\) and \(x_2,x_3\in \{-1,1\}\). Note that \(x_1=-1, x_2=-1, x_3=1\) and \(x_1=-1, x_2=1, x_3=-1\) both evaluate to \(x=-1\). Hence, only one of the two combinations is stored (the one with fewer cuts, here an arbitrary one as both variants imply 4 cuts).
For the resulting array \(R\), we have \(R[-3]=5, R[-1]=4, R[1]=3\), and \(R[x]=\infty\) for \(x\notin\{-3,-1,1\}\). Observe that the numbers of cuts in \(R\) correspond to the sums of the numbers of cuts in the subtrees for the respective choice of \(x_i\).

We now describe how the \joinroutine is computed.

\begin{proof}[Proof of \autoref{lem:join}]
    The algorithm works in an iterative manner. Assume it has found the minimum value for all indices using the first \(i-1\) arrays and they are stored in \(R^{i-1}\). It then \emph{joins} the \(i\)-th array by trying every index \(x_1\) in \(R^{i-1}\) with every index \(x_2\) in \(R_i\). Each time, for all indices \(x\in f(x_1,x_2)\), it sets \(R^i[x]\) to \(R^{i-1}[x_1]+R_i[x_2]\) if it is smaller than the current element there. Thereby, it tries all possible ways of combining the interim solution with \(R_i\) and for each index tracks the minimum that can be achieved. 
    Formally, we give the algorithm in \autoref{algo:join}. 

    \begin{algorithm}
    \setstretch{1.25}
        \KwIn{\(R_1,R_2,\ldots,R_{\ell_1}\) for \(\ell_1 \ge 2\) with \(R_i \in (\N\cup\{\infty\})^{\ell_2}\) for \(0\le i< \ell_1\), and a computable function \(f\colon [\ell_2]\times[\ell_2]\rightarrow 2^{[\ell_2]}\).}
        \KwOut{\(R\in (\N\cup\{\infty\})^{\ell_2}\) such that, for all \(x\in [\ell_2]\), \(R[x] = \min_{M\in A_x} \sum_{i=1}^{\ell_1} R_i[M[i]]\)
        with \(A_x = \{M \in \left([\ell_2]\right)^{\ell_1}\mid x\in \widehat{f}(M[1], M[2],\ldots, M[\ell_2])\}\),
        \(\widehat{f}(x_1,x_2,\ldots, x_k) = \bigcup_{x\in \widehat{f}(x_1,x_2,\ldots, x_{k-1})} f(x, x_k)\), and \(\widehat{f}(x_1, x_2) = f(x_1, x_2)\). }
        \(R\gets R_1\)\\
        \For{$i\gets 2$ \KwTo $\ell_1$}{
            \(R'\gets R\)\\
            \ForEach{$(x_1,x_2)\in \left([\ell_2]\right)^2$}{\label{algo:line_conv1}
                \ForEach{$x\in f(x_1,x_2)$}{\label{algo:line_conv2}
                    \(R'[x]\gets \min\left(R'[x], R[x_1]+R_i[x_2]\right)\)\label{algo:line_conv3}
                }
            }
            \(R\gets R'\)
        }
        \caption{The \join subroutine.}
    \label{algo:join}
    \end{algorithm}

    The algorithm terminates after $\bigO(k\cdot \ell^2\cdot T_f)$ iterations due to the nested loops. 
    We prove by induction that \(R\) is a solution of \join over the arrays \(R_1,\ldots,R_i\) after each iteration \(i\). 
    The first one simply tries all allowed combinations of the arrays \(R_1,R_2\) and tracks the minimum value for each index, matching our definition of \join.
    Now assume the statement holds for some \(i\).
    Observe that we only update a value \(R[x]\) if there is a respective \(M\in A_x\), so none of the values is too small. 
    To show that no value is too large, take any \(x\in[\ell_2]\) and let \(a\) be the actual minimum value that can be obtained for \(R[x]\) in this iteration. 
    Let \(j_1,j_2,\ldots, j_{i+1}\) with \(x\in\widehat{f}(j_1,j_2,\ldots,j_{i+1})\) be the indices that obtain \(a\). 
    Then, there is \(y\in [\ell_2]\) such that after joining the first \(i\) arrays the value at index \(y\) is \(a-R_{i+1}[j_{i+1}]\) and \(y\in\widehat{f}(j_1,j_2,\ldots,j_i)\). 
    This implies \(R[y]\le a-R_{i+1}\) by our induction hypothesis.
    Further, as both \(x\in\widehat{f}(j_1,j_2,\ldots,j_{i+1})\) and \(y\in\widehat{f}(j_1,j_2,\ldots,j_i)\), we have \(x\in f(y, j_{i+1})\). Thus, in this iteration, \(R[x]\) is set to at most \(R[y]+R_{i+1}[j_{i+1}]\le a\). With this, all values are set correctly.    
\end{proof}

Observe that in the case of \(f(x_1,x_2)= \{x_1 + x_2\}\), which is relevant to this section, the loop in lines~4-6
computes the \((\min,+)\)-convolution of the arrays \(R\) and \(R_i\). Simply trying all possible combinations as done in the algorithm has a quadratic running time. This cannot be improved without breaking the \textsc{MinConv Conjecture}, which states there is no algorithm computing the \((\min,+)\)-convolution of two arrays of length \(n\) in time in $\bigO(n^{2-\varepsilon})$ for any constant \(\varepsilon>0\)~\cite{Cygan_2019}.

\subsubsection{The Tracking Algorithm}

With the \joinroutine at hand, we are able to build a dynamic program solving \fcc on forests with two colors in a ratio of \(1:2\). We first describe how to apply the algorithm to trees and then generalize it to work on forests.
  
In the first phase, for each possible difference between the number of \brcomps and \rcomps, we compute the minimum number of cuts to obtain a splitting with that difference. 
In the second phase, we find the splitting for which the sum of edges cut in the first phase and the number of edges required to turn this splitting into a fair partition is minimal. This sum is the inter-cluster cost of that partition, so by \autoref{lem:costByCuts} this finds a fair partition with the smallest \cccost. 

\paragraph*{Splitting the tree.}
In the first phase, our aim is to compute an array \(D\), such that, for all integers \(-n \le x \le \frac{n}{3}\), \(D[x]\subseteq E\) is a minimum-sized set of edges such that \(x = br(T-D[x]) - r(T-D[x])\), where \(br(T-D[x])\) and \(r(T-D[x])\) are the number of \(br\)- and \rcomps in \(T-D[x]\), respectively. To mark the case if no such set exists, we expect \(D[x]=\N\) to have an infinitely large entry.
We fill the array in a dynamic programming way, by computing an array \(D_v^h\) for each vertex \(v\), and every possible \emph{head} \(h\in \heads\). 
Here, \(D_v^h[x]\), is a minimum-sized set of edges such that in the subtree \(T_v\) rooted at \(v\) upon removal we have exactly \(x\) more \brcomps than \rcomps. The head \(h\) refers to the colors in the topmost component, which is of particular interest as it might later contain vertices from outside \(T_v\) as well. Head \(h=r\) refers to a component with a red vertex, \(h=br\) with a blue and a red vertex so on. This component is empty (\(h=\emptyset\)) if the edge above \(v\) is cut. The head is not counted as an \brcomp or \rcomp for the computation of \(x\). \autoref{fig:tree12DP} gives examples of how a head is composed from the splittings of the children.

\begin{figure}
    \centering
    \begin{subfigure}{0.25\textwidth}
        \includegraphics[width=\textwidth]{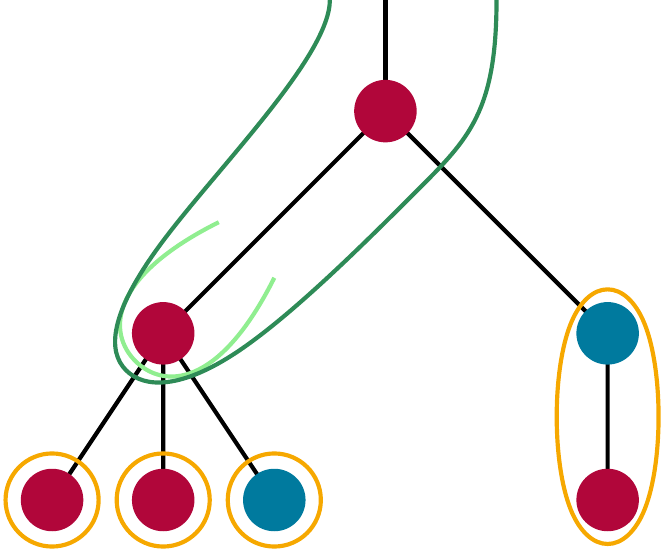}
        \caption{}\label{fig:tree12DPa}
    \end{subfigure}
    \hspace{.15\textwidth}
    \begin{subfigure}{0.25\textwidth}
        \includegraphics[width=\textwidth]{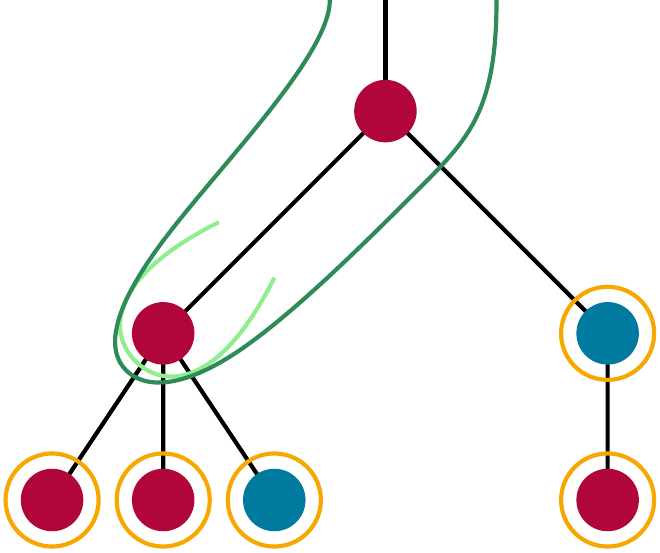}
        \caption{}\label{fig:tree12DPb}
    \end{subfigure}\\
    \bigskip
    \begin{subfigure}{0.25\textwidth}
        \includegraphics[width=\textwidth]{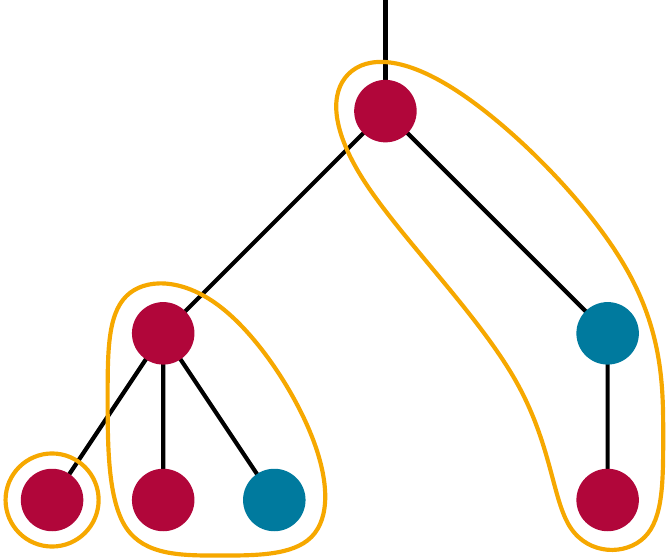}
        \caption{}\label{fig:tree12DPc}
    \end{subfigure}
    \hfill    
    \begin{subfigure}{0.25\textwidth}
        \includegraphics[width=\textwidth]{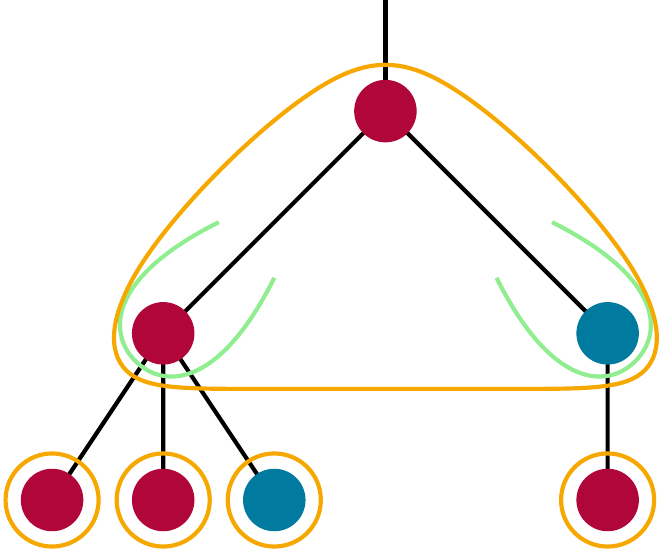}
        \caption{}\label{fig:tree12DPd}
    \end{subfigure}
    \hfill
    \begin{subfigure}{0.25\textwidth}
        \includegraphics[width=\textwidth]{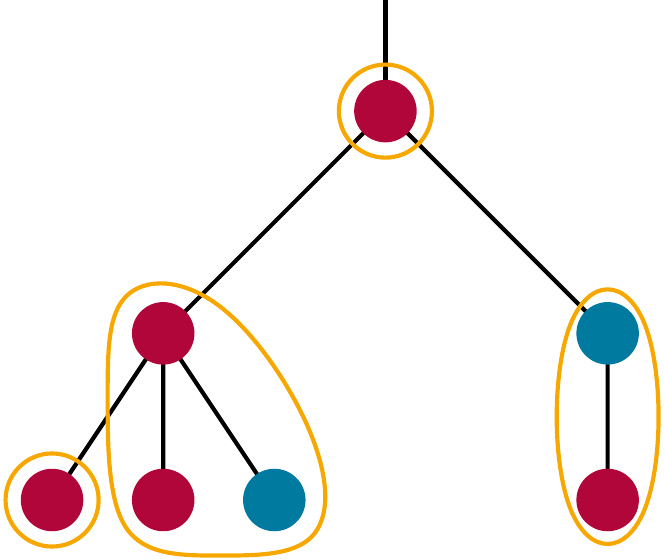}
        \caption{}\label{fig:tree12DPe}
    \end{subfigure}
    \caption{Exemplary subtree with various possibilities to obtain a head. \Cref{fig:tree12DPa,fig:tree12DPb} show splittings with an \(rr\)-head (dark green). The choice for the heads of the children (light green) is unambiguous as the only way to obtain an \(rr\)-head is to choose the \(r\)-head for the left child and an \(\emptyset\)-head for the right one. Both the left and the right variants have to be considered as they differ in the number of \brcomps minus the number of \rcomps.
    The splittings in \Cref{fig:tree12DPc,fig:tree12DPd,fig:tree12DPe} create an \(\emptyset\)-head, as they cut the edge above the root of the subtree, so no vertices of the subtree can be part of a component with vertices outside the subtree. Out of these 3 splittings, however, only \Cref{fig:tree12DPc,fig:tree12DPd} will be further considered as \autoref{fig:tree12DPe} obtains the same difference between \(br\)- and \rcomps as \autoref{fig:tree12DPc} but cuts one more edge. We note that other splittings obtain an \(\emptyset\)-head as well that are not listed here.}\label{fig:tree12DP}
\end{figure}

In the following, we only show how to compute \(\Delta_v^h[x] = |D_v^h[x]|\), the size of the set of edges to obtain a respective splitting. The set \(D_v^h[x]\) is, however, obtained by a simple backtracking approach in the same asymptotic running time. If \(D_v^h[x] = \N\), we have \(\Delta_v^h[x]=\infty\).
We initialize all values with \(\Delta_v^h[x]=\infty\), meaning we know of no set of edges which upon removal give that head and that difference between \(br\)- and \rcomps.
Then, for every red leaf \(v\) we set \(\Delta_v^r[0] = 0\) and \(\Delta_v^\emptyset[-1] = 1\).
For every blue leaf \(v\) we set \(\Delta_v^b[0] = 0\) and \(\Delta_v^\emptyset[0] = 1\). This concludes the computations for the leaves, as the only possibilities are to cut the edge above the leaf or not. 
Now suppose we have finished the computation for all children \(u_1, u_2, \ldots, u_k\) of some vertex \(v\). Observe that at most two children of \(v\) are placed in a head with \(v\). For every head \(h\in\heads\) that is formable at vertex \(v\), we try all possibilities to obtain that head. 

If \(h\in \{r,b\}\) and \(c(v)\) corresponds to \(h\), this is done by choosing \(\emptyset\) heads for all children. There is no unique splitting of the subtrees however, as for each subtree rooted at some child vertex \(u_i\) there is a whole array \(D_{u_i}^\emptyset\) of possible splittings with different numbers of \(br\)- and \rcomps. To find the best choices for all child vertices, we employ the \joinroutine that, when called with \(f(x_1,x_2) = \{x_1+x_2\}\) and a list of arrays, returns an array \(R\) such that, for all indices \(x\) \(R[x]\) is the minimum value obtained by summing up exactly one value from each of the input arrays such that the indices of the chosen values sum up to \(i\). We hence set \(\Delta_v^h = \join(\Delta_{u_1}^\emptyset,\ldots,\Delta_{u_{k}}^\emptyset)\). Here and in the following, we only call the \joinroutine with at least two arrays. If we would only input a single array, we go on as if the \joinroutine returned that array.
We note that here our indexing ranges from \(-n\) to \(\frac{n}{3}\) while the \joinroutine assumes positive indices. We hence implicitly assume that an index of \(x\) here maps to an index \(x+n+1\) in the subroutine.

If \(h=br\) or both \(h=rr\) and \(c(v)\) corresponds to \(r\), then the heads for all children should be \(\emptyset\) except for one child that we place in the same component as \(v\). It then has a head \(h'\in \{r,b\}\), depending on \(h\) and \(c(v)\). We have \(h'=r\) if \(h=rr\) and \(c(v)\) corresponds to \(R\) or \(h=rb\) and \(c(v)\) corresponds to \(b\). Otherwise, \(h'=b\).
For all \(i\in[k]\), we compute an array \(\Delta'_{u_i} = \join(\Delta_{u_1}^\emptyset, \ldots, \Delta_{u_{i-1}}^\emptyset, \Delta_{u_i}^{h'}, \Delta_{u_{i+1}}^\emptyset,\ldots, \Delta_{u_k}^\emptyset)\), referring to \(u_i\) having the non-empty head.
Lastly, for all \(-n\le x \le \frac{n}{3}\), we set \(\Delta_v^h[x]=\min_{i\in [k]} \Delta'_{u_i}[x]\). 

If \(h=\emptyset\), then we have to try out all different possibilities for the component \(v\) is in and, in each case, cut the edge above \(v\). 
First assume we want to place \(v\) in a \brrcomp. Then it has to be merged with to vertices, either by taking a head \(h'\in \{br,rr\}\) at one child or by taking heads \(h_1,h_2 \in \{r,b\}\) at two children. The exact choices for \(h',h_1,h_2\) of course depend on \(c(v)\).
We compute an array \(\Delta_{h'} = \join(\Delta_{u_1}^\emptyset, \ldots, \Delta_{u_{i-1}}^\emptyset, \Delta_{u_i}^{h'}, \Delta_{u_{i+1}}^\emptyset,\ldots, \Delta_{u_k}^\emptyset)\) for the first option.
For the second option, we compute the arrays\\ \(\Delta_{i,j}=\join(\Delta_{u_1}^\emptyset, \ldots, \Delta_{u_{i-1}}^\emptyset, \Delta_{u_i}^{h_1}, \Delta_{u_{i+1}}^\emptyset,\ldots, \Delta_{u_{j-1}}^\emptyset, \Delta_{u_j}^{h_2}, \Delta_{u_{j+1}}^\emptyset,\ldots, \Delta_{u_k}^\emptyset)\) for all pairs of children \(u_i,u_j\) of \(v\) such that \(i<j\) and \(\{v,u_i,u_j\}\) is a \brrcomp. 
We now have stored the minimum number of cuts for all ways to form a \brrcomp with \(v\) and for all possibilities for \(x\) in the arrays \(\Delta_{h'}\) and \(\Delta_{i,j}\) for all possibilities of \(i,j\). 
However, \(v\) may also be in an \(r\)-, \(b\)-, \(rr\)-, or \brcomp. Hence, when computing \(\Delta_v^\emptyset[x]\) we take the minimum value at position \(x\) not only among the arrays \(\Delta_{h'}\) and \(\Delta_{i,j}\) but also of the arrays \(\Delta_v^{r}, \Delta_v^{br}, \Delta_v^{rr}\), and \(\Delta_v^{br}\). 
Note that here we have to shift all values in \(\Delta_v^{r}\) to the left by one since by isolating \(v\) we create another \rcomp. An entry we have written into \(\Delta_v^{r}[x]\) hence should actually be placed in \(\Delta_v^{r}[x-1]\). Similarly, we have to shift \(\Delta_v^{br}\) to the right, since here we create a new \brcomp at the top of the subtree. 
Lastly, as long as \(v\) is not the root of \(T\), we have to increase all values in \(\Delta_v^\emptyset\) by one, reflecting the extra cut we have to make above \(v\). 

After all computations are completed by the correctness of the \joinroutine and an inductive argument, \(\Delta_v^h\) is correctly computed for all vertices \(v\) and heads \(h\). 
Note that in the \joinroutine, as \(f(x_1,x_2)\) returns the correct index for merging two subtrees, \(\widehat{f}(x_1,x_2,\ldots,x_k)\) gives the correct index of merging \(k\) subtrees.
In particular, \(\Delta_r^\emptyset\) is the array containing for each \(-n\le x\le \frac{n}{3}\) the minimum number of edges to cut such that the there are exactly \(x\) more \brcomps than \rcomps, where \(r\) is the root of \(T\). By adjusting the \joinroutine to track the exact combination that leads to the minimum value at each position, we also obtain an array \(D\) that contains not only the numbers of edges but the sets of edges one has to cut or is marked with \(\N\) if no such set exists. 

At each node, computing the arrays takes time $\bigO(n^5)$, which is dominated by computing $\bigO(n^2)$ arrays \(D_{u,w}\) in time $\bigO(n^3)$ each by \autoref{lem:join} since \(\ell_1,\ell_2 \in \bigO(n)\). This phase hence takes time in $\bigO(n^6)$.

\paragraph*{Assembling a fair clustering.} 
Let \(D\) be the set computed in the first phase. Note that each set of edges \(D[x]\) directly gives a splitting, namely the partition induced by the connected components in \(T-D[x]\).

By \autoref{lem:assembling}, the cheapest way to turn the splitting given by \(D[x]\) into a clustering of sets of 1 blue and 2 red vertices is found in linear time and incurs \(\frac{\max(0,x)}{2}\) more cuts. Hence, we find the \(-n \le x \le \frac{n}{3}\) for which \(|D[x]| + \max(0,\frac{x}{2})\) is minimal. We return the corresponding clustering as it has the minimum inter-cluster cost.

This phase takes only constant time per splitting if we tracked the number of components of each type in the first phase and is therefore dominated by the first phase. 

\paragraph*{Forests.}
Our algorithm is easily generalized to also solve \fcc on unconnected forests with two colors in a ratio of \(1:2\) by slightly adapting the first phase. We run the dynamic program as described above for each individual tree. This still takes overall time in $\bigO(n^6)$. For each tree \(T_i\) in the forest and every \(h\in\heads\), let then \(\Delta_{T_i}^\emptyset\) denote the array \(\Delta_r^\emptyset\) with \(r\) being the root of tree \(T_i\). To find a splitting of the whole forest and not just on the individual trees, we perform an additional run of the \joinroutine using these arrays \(\Delta_{T_i}\) and the function \(f(x_1,x_2)=\{x_1+x_2\}\). This gives us an array \(R\) such that \(R[x]\) is the minimum number of cuts required to obtain a splitting with exactly \(x\) more \brcomps than \rcomps for the whole tree rather than for the individual trees. Note that we choose the \(\emptyset\)-head at each tree as the trees are not connected to each other, so in order to find a splitting we do not yet have to consider how components of different trees are merged, this is done in the second phase. The first phase then outputs an array \(D\) that contains the set of edges corresponding to \(R\), which is obtained by a backtracking approach. As the additional subroutine call takes time in $\bigO(n^3)$, the asymptotic run time of the algorithm does not change. This gives the following result.

\begin{theorem}\label{thm:forestOneTwoAlgo}
    \emph{\fcc} on forests with two colors in a ratio of \(1:2\) can be solved in time in $\bigO(n^6)$.
\end{theorem}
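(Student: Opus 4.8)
The plan is to formalize the two-phase dynamic program sketched in the preceding paragraphs and then bound its running time. By \autoref{lem:smallClustersForest}, every minimum-cost \fc of a forest with color ratio $1:2$ consists solely of clusters of size $d=3$, each with one blue and two red vertices; by \autoref{lem:costByCuts} it therefore suffices to produce a fair clustering of minimum inter-cluster cost $\chi$. First I would run, for each tree $T$ of the forest rooted arbitrarily, the bottom-up dynamic program computing the arrays $\Delta_v^h$ for all vertices $v$ and all heads $h \in \heads$, where $\Delta_v^h[x]$ is the minimum number of edges to delete from $T_v$ so that the topmost component has color profile $h$ and the remaining components have exactly $x$ more \brcomps than \rcomps. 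The leaf cases are immediate, and at an internal vertex $v$ with children $u_1,\dots,u_k$ one uses that at most two children can share $v$'s component; every admissible head is then obtained by choosing the $\emptyset$-head for all but at most two children and invoking the \join subroutine of \autoref{lem:join} with $f(x_1,x_2)=\{x_1+x_2\}$ to combine the child arrays. The only bookkeeping subtleties are the index shifts performed when $v$ is isolated (shift $\Delta_v^r$ left and $\Delta_v^{br}$ right by one, since isolating $v$ creates a fresh \rcomp resp.\ \brcomp that must be counted) and the extra $+1$ added to $\Delta_v^\emptyset$ for the cut above $v$ whenever $v$ is not the root.

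Next I would stitch the per-tree results together: taking the arrays $\Delta_{T_i}^\emptyset$ at the roots and running one further \join with $f(x_1,x_2)=\{x_1+x_2\}$ yields an array $R$ (and, via backtracking inside \join, a corresponding edge-set array $D$) giving, for each $x$ with $-n \le x \le n/3$, the minimum number of cuts producing a splitting of the whole forest with $x$ more \brcomps than \rcomps. Phase two then applies \autoref{lem:assembling}: the splitting induced by $D[x]$ can be assembled into a valid fair clustering with exactly $\max(0, x/2)$ further cuts and no fewer, so I would return the clustering realizing $\min_x \bigl(|D[x]| + \max(0, x/2)\bigr)$. By \autoref{lem:costByCuts} this has minimum \cccost, which gives correctness; correctness of the arrays $\Delta_v^h$ themselves follows by induction over the rooted tree using the correctness statement of \join.

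For the running time, at each of the $n$ vertices the dominant work is computing $\bigO(n^2)$ arrays of the form $\Delta_{i,j}$ (one per unordered pair of children placed in $v$'s component), each produced by a \join call over $\ell_1 \in \bigO(n)$ arrays of length $\ell_2 \in \bigO(n)$, which costs $\bigO(\ell_1 \ell_2^2 T_f) = \bigO(n^3)$ by \autoref{lem:join} since here $T_f = \bigO(1)$. This yields $\bigO(n^5)$ per vertex and $\bigO(n^6)$ overall. The final forest-level \join and the entire second phase are $\bigO(n^3)$ and $\bigO(n)$ respectively, hence dominated; producing the edge sets rather than just their sizes costs only the same asymptotic amount via backtracking.

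The step I expect to be the real obstacle — and the reason the running time is polynomial rather than linear — is precisely the coupling between the two phases exposed by \autoref{lem:assembling} and illustrated in \autoref{fig:tree_12_gadget}: one cannot greedily minimize cuts in phase one, because a splitting with one extra cut may have a much smaller \brcomps-minus-\rcomps value and therefore need far fewer cuts in phase two. This forces the dynamic program to carry, at every vertex, the entire vector of attainable differences rather than a single optimum, and the delicate part of the correctness argument is verifying that the head-composition case analysis together with the index shifts faithfully tracks this difference across every merge. Once that is in place, the running-time bound is a routine consequence of \autoref{lem:join}.
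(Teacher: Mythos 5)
Your proposal is correct and follows essentially the same route as the paper: the same two-phase dynamic program with head-indexed arrays $\Delta_v^h[x]$ tracking the difference between \brcomps and \rcomps, combined via the \joinroutine with $f(x_1,x_2)=\{x_1+x_2\}$, the same forest-level stitching, the same use of \autoref{lem:assembling} and \autoref{lem:costByCuts} in the assembly phase, and the same $\bigO(n^5)$-per-vertex accounting giving $\bigO(n^6)$ overall. No gaps.
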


\subsection{Small Clusters}
\label{subsec:forestTracing}

To obtain an algorithm that handles more colors and different color ratios, we generalize our approach for the \(1:2\) color ratio case from the previous section.
We obtain the following.

\begin{theorem}
\label{thm:forestByColorsAlgo}
    Let \(F\) be a forest of \(n\) vertices, each colored in one of \(k\ge 2\) colors. Let the colors be distributed in a ratio of \(c_1:c_2:\ldots:c_k\) with \(c_i\in\N_{>0}\) for all \(i\in [k]\) and \(\gcd(c_1,c_2,\ldots,c_k)=1\).
    Then \emph{\fcc} on \(F\) can be solved in time in 
    $\bigO(n^{2 \nwspace \svars+\smax+2}\cdot \svars^\smax)$, where \(\svars=\prod_{i=1}^k (c_i+1)\) and \(\smax = \sum_{i=1}^k c_i\).
\end{theorem}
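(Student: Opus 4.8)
The plan is to lift the two–phase dynamic program behind \autoref{thm:forestOneTwoAlgo} from two colors in ratio $1:2$ to $k$ colors in ratio $c_1:c_2:\ldots:c_k$, replacing the scalar bookkeeping of the $1:2$ case by bookkeeping over vectors of component colorings. By \autoref{lem:smallClustersForest} (and \autoref{lem:smallClustersBipartite} for the degenerate ratio $1:1$) every minimum-cost \fc of $F$ has all clusters of size $d=\smax$, each containing exactly $c_i$ vertices of color $i$; by \autoref{lem:costByCuts} it therefore suffices to minimize the inter-cluster cost $\chi$ over such clusterings. The conceptual core of the reduction is a correspondence between these clusterings and \emph{splittings}: call $C\subseteq E$ a splitting if every connected component of $F-C$ has at most $c_i$ vertices of color $i$, and call the resulting \emph{profile} — the vector counting, for each admissible component coloring, how many such components arise — \emph{feasible} if those components can be grouped so that every group has coloring exactly $(c_1,\dots,c_k)$. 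Cutting exactly the inter-cluster edges of an optimal clustering yields a splitting whose profile is feasible and whose size equals $\chi_{\mathrm{opt}}$, while any splitting with a feasible profile yields a fair clustering of inter-cluster cost at most $|C|$; hence $\min\{\,|C| : C \text{ a splitting with feasible profile}\,\}=\chi_{\mathrm{opt}}$, and solving \fcc reduces to computing this minimum and back-tracking a witness.

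For the first phase I would run a tree dynamic program computing, for every vertex $v$, every \emph{head} $h$ (either $\emptyset$, meaning the edge above $v$ is cut, or a coloring with $0< h[i]\le c_i$ for all $i$ that is not yet complete), and every profile $P$, the minimum number of cut edges inside $T_v$ that simultaneously realize head $h$ at $v$ and profile $P$ among the already finished components of $T_v$. At a vertex $v$ with children $u_1,\dots,u_m$ one branches on which children lie in the top component together with $v$ — at most $\smax-1$ of them, since the top component has at most $\smax$ vertices, giving $\bigO(n^{\smax-1})$ choices — and on the heads of those children, which must sum to $h$ minus the color of $v$, giving $\bigO(\svars^{\smax-1})$ choices; the remaining children get head $\emptyset$, each contributing one extra cut and its own profile. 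The arrays belonging to these choices are combined by the \joinroutine of \autoref{lem:join}, instantiated with $f(P_1,P_2)=\{P_1+P_2\}$ (componentwise addition), together with the index shifts that account for components newly finished at $v$; the head $h=\emptyset$ is handled analogously by letting $v$'s top component become a finished component of any admissible coloring, complete or not. Trees of the forest are merged by one final call to \join on their root arrays, exactly as in the $1:2$ algorithm, and correctness follows by induction over the tree from the correctness of \join, mirroring the argument for \autoref{thm:forestOneTwoAlgo}.

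The second phase assembles the splitting. I would precompute the set of \emph{group types}, i.e., multisets of at most $\smax$ admissible component colorings summing to $(c_1,\dots,c_k)$; there are $\bigO(\svars^{\smax})$ of them. Starting from the all-zero profile and repeatedly adding group types, one marks in time $\bigO(n^{\svars}\cdot\svars^{\smax})$ exactly the feasible profiles, then outputs the minimum value stored in the root array over all feasible profiles and recovers the clustering by back-tracking through the \join calls and the group-type cover. By the reduction above this is a minimum-cost \fc. For the running time, the dominating term is the first phase: $\bigO(n)$ vertices, $\bigO(\svars)$ heads, $\bigO(n^{\smax})$ child-subset choices and $\bigO(\svars^{\smax})$ sub-head choices per head, and per configuration one call to \join on the $\bigO(n)$ child arrays of length $\bigO(n^{\svars})$, which by \autoref{lem:join} costs $\bigO(n^{2\svars+1}\cdot\svars)$ since $f$ is evaluated in time $\bigO(\svars)$; collecting factors and adding the second phase gives the claimed bound $\bigO(n^{2\svars+\smax+2}\cdot\svars^{\smax})$, polynomial precisely when the cluster size $d=\smax$ is constant.

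The main obstacle is twofold. First, as already in the $1:2$ case, the transition at a vertex requires an exhaustive and delicate case distinction — now over how the top component of $T_v$ is formed from up to $\smax$ pieces with up to $\svars$ possible colorings, together with the bookkeeping of which components become finished and the corresponding shifts of the profile index — and one must verify that \join is invoked with the correct families of arrays in every case. Second, one has to make the interface between the two phases precise: that it suffices to range over \emph{feasible} profiles, with no further ``re-cutting'' during assembly (unlike the explicit $\max(0,(\#(br)-\#(r))/2)$ term of \autoref{lem:assembling}), rests entirely on the observation that the inter-cluster edges of an optimal clustering already form a splitting with a feasible profile; getting this equivalence and the reconstruction of a concrete clustering right is where most of the care goes. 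A minor but necessary point is to check that the number of heads ($\bigO(\svars)$) and of profiles ($\bigO(n^{\svars})$) depend only on the color ratio, so that the whole algorithm is polynomial exactly in the regime where $d$ is bounded.
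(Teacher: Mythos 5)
Your proposal is correct and follows essentially the same route as the paper: reduce via \autoref{lem:smallClustersForest} and \autoref{lem:costByCuts} to minimizing cuts over splittings, run a tree DP indexed by vertex, head, and partition coloring that combines children with the \joinroutine under $f(P_1,P_2)=\{P_1+P_2\}$, and then test assemblability of each profile in a second phase whose cost is dominated by the first. The only cosmetic difference is that you characterize feasible profiles as sums of precomputed group types, whereas the paper grows assemblable profiles from the fully-fair base case by splitting set colorings; both yield the same characterization and the same overall bound.
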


Once more, the algorithm runs in two phases. First, it creates a list of possible splittings, i.e., partitions in which, for every color, every component has at most as many vertices of that color as a minimum-sized fair component has. In the second phase, it checks for these splittings whether they can be merged into a fair clustering.
Among these, it returns the one of minimum cost. We first give the algorithm solving the problem on trees and then generalize it to also capture forests.
\paragraph*{Splitting the forest.}
For the first phase in the 1:2 approach, we employed a dynamic program that kept track of the minimum number to obtain a splitting for each possible cost incurred by the reassembling in the second phase.
Unfortunately, if we are given a graph with \(k\ge 2\) colors in a ratio of \(c_1:c_2:\ldots:c_k\), then the number of cuts that are required in the second phase is not always as easily bounded by the difference of the number of two component types like \(r\)- and \brcomps in the \(1:2\) case.
However, we find that it suffices to track the minimum number of cuts required to obtain any possible coloring of a splitting. 

We first bound the number of possible colorings of a splitting. As during the dynamic program we consider splittings of a subgraph of \(G\) most of the time, we also have to count all possible colorings of splittings of less than \(n\) vertices. 

\begin{lemma}
\label{lem:numberColoringsOfPartitions}
    Let \(U\) be a set of \(n\) elements, colored in \(k\in\N_{>1}\) colors, and let \(d_1, d_2,\ldots,d_k\in\N\).
    Let \(\mcS\) be the set of all possible partitions of subsets of \(U\) such that for every color \(i\) there are at most \(d_i\) vertices of that color in each cluster. 
    Let \(\mathcal{C}\) be the set of all colorings of partitions in \(\mcS\).
    Then, \(|\mathcal{C}| \le (n+1)^{\svars-1}\), where \(\svars = \prod_{i=1}^k (d_i+1)\).
\end{lemma}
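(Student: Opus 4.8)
The plan is to identify each partition-coloring with a bounded integer vector and then simply count those vectors. First I would record the constraints coming from membership in $\mcS$: if $\mcP \in \mcS$ partitions some $U' \subseteq U$, then every part $S \in \mcP$ is nonempty, and its set-coloring is a vector $C_S = (|S \cap U_1|, \dots, |S \cap U_k|)$ satisfying $0 \le C_S[i] \le d_i$ for all $i \in [k]$ and $C_S \neq (0,0,\dots,0)$. Let $V = \{v \in \N^k : v_i \le d_i \text{ for all } i \in [k]\} \setminus \{(0,\dots,0)\}$ be the set of all admissible nonzero vectors; then $|V| = \prod_{i=1}^k (d_i+1) - 1 = \svars - 1$.

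Next I would argue that the coloring of any $\mcP \in \mcS$, in the sense of \Cref{def:coloringPartition}, is completely determined by the function $g_\mcP \colon V \to \N$ with $g_\mcP(v) = |\{S \in \mcP : C_S = v\}|$. Indeed, the array $C_\mcP$ merely lists the counts $g_\mcP(v)$ over the realizable set-colorings, and the entry corresponding to the all-zero coloring (if it is among the realizable ones) is always $0$ since no part is empty. Hence distinct partition-colorings induce distinct functions $g_\mcP$, and it suffices to bound the number of functions $V \to \N$ that can arise this way.

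The counting step is then immediate: since $\mcP$ partitions a subset of $U$ and each of its parts contains at least one of the at most $n$ elements of $U$, we have $|\mcP| \le n$, so $g_\mcP(v) \le |\mcP| \le n$ for every $v \in V$. Thus every $g_\mcP$ lies in $\{0,1,\dots,n\}^V$, a set of cardinality $(n+1)^{|V|} = (n+1)^{\svars - 1}$, which gives $|\mathcal{C}| \le (n+1)^{\svars - 1}$ as claimed.

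I do not expect a genuine obstacle here; the argument is essentially bookkeeping against the definitions. The one point that needs care is the precise reading of \Cref{def:coloringPartition}, so that the empty set's coloring is correctly excluded from the count of coordinates — this is exactly what yields the exponent $\svars - 1$ rather than the weaker $\svars$.
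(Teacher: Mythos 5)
Your proof is correct and uses essentially the same counting argument as the paper: identify the coloring of a partition with an array indexed by the admissible set-colorings, bound each entry by $n$, and save one coordinate to get the exponent $\svars-1$. The only difference is how that $-1$ is justified: the paper keeps all $\svars$ set-colorings (including the all-zero one) and claims the last entry is determined by the others because the sets form a partition, whereas you discard the all-zero coloring outright since no part of a partition is empty. Your variant is in fact the more robust reading here, because the lemma concerns partitions of arbitrary \emph{subsets} of $U$, for which the ``last entry is determined'' argument is delicate, while the observation that the empty coloring always has count $0$ holds unconditionally.
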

\begin{proof}
    The number of sets with different colorings is at most $\svars$
    as there are \(0\) to \(d_i\) many vertices of color \(i\) in each component. 
    Thus, a coloring of a partition \(\mcP\) using only these sets is characterized by an array of size \(\svars\) with values in \([n]\cup\{0\}\) as no component occurs more than \(n\) times. There are 
        \((n+1)^{\svars}\) 
    ways to fill such an array. However, as the set colorings together have to form a partition, the last entry is determined by the first \(\svars - 1\) entries, giving only 
    \((n+1)^{\svars -1}\) possibilities.
\end{proof}

With this, we employ a dynamic program similar to the one presented in \autoref{sec:12forests} but track the minimum cut cost for all colorings of splittings. It is given by the following lemma.

\begin{lemma}
\label{lem:splittingForests}
Let \(F=(V,E)\) be a forest with vertices in \(k\) colors. Further,
let \(d_1,d_2,\ldots,d_k\in\N\) and
\(\mcS\) be the set of all possible partitions of \(V\) such that there are at most \(d_i\) vertices of color \(i\) in each cluster for \(i\in [k]\). 
Let \(\mathcal{C}\) be the set of all colorings of partitions in \(\mcS\).
Then, in time in $\bigO(n^{2\nwspace\svars+\smax+2}\cdot \svars^\smax)$ with \(\svars=\prod_{i=1}^k (d_i+1)\) and \(\smax = \sum_{i=1}^k d_i\), for all \(C\in\mathcal{C}\), we find a minimum-sized set \(D_C\subseteq E\) such that the connected components in \(F-D_C\) form a partition of the vertices with coloring \(C\) or certify that there is no such set. 
\end{lemma}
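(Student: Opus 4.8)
The plan is to run a bottom‑up dynamic program over a rooted version of each tree of \(F\), generalizing the \(1:2\) algorithm of \autoref{sec:12forests}: instead of the single integer that algorithm tracked (a difference of two component counts), we track a full \emph{coloring of a partition} in the sense of \autoref{def:coloringPartition}. Fix a tree \(T\), root it arbitrarily, and write \(T_v\) for the subtree of a vertex \(v\). Call a color‑count vector \(h=(h_1,\dots,h_k)\) with \(0\le h_i\le d_i\) a \emph{head}, with \(\emptyset\) denoting the all‑zero vector and \(\mathbf{1}_i\) the head having a single color‑\(i\) vertex. For a head \(h\ne\emptyset\) with \(h_{c(v)}\ge 1\) and a partition coloring \(C\), let \(\Delta_v^h[C]\) be the least number of edges of \(T_v\) whose deletion makes the connected component of \(v\) have color‑count exactly \(h\) while the other components of \(T_v\) form a partition with coloring \(C\); it is \(\infty\) if this is impossible, and also \(\infty\) whenever \(h_{c(v)}=0\). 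Separately store \(\Delta_v^\emptyset[C]\): the least number of edges deleted inside \(T_v\), \emph{plus one} for the edge above \(v\) unless \(v\) is the root, after which all of \(T_v\) has been split into valid components with partition coloring \(C\). By \autoref{lem:numberColoringsOfPartitions} with the same \(d_i\), only \(\bigO(n^{\svars-1})\) colorings \(C\) are relevant, and there are at most \(\svars\) heads, so every vertex stores \(\bigO(\svars\cdot n^{\svars-1})\) numbers.

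For the base case, a leaf \(v\) of color \(i\) has \(\Delta_v^{\mathbf{1}_i}[C]=0\) for \(C\) the empty partition and \(\Delta_v^\emptyset[C]=1\) for \(C\) the partition consisting of one color‑\(i\) singleton; all remaining entries are \(\infty\). For an internal vertex \(v\) with children \(u_1,\dots,u_m\) and a fixed target head \(h\), note that the component of \(v\) consists of \(v\) together with the components of some subset \(S\) of children whose edges to \(v\) are \emph{not} cut; since this merged component has at most \(\smax\) vertices and each \(u_j\in S\) contributes at least one, \(|S|\le\smax-1\). We thus enumerate all ordered tuples of at most \(\smax-1\) children (there are \(\bigO(n^{\smax-1})\) of them) and, for each, all assignments of heads \(h_j\ne\emptyset\) to its members with \(\sum_j h_j = h-\mathbf{1}_{c(v)}\) (at most \(\bigO(\svars^{\smax-1})\) of them). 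For one choice we call \autoref{lem:join} on the list that uses \(\Delta_{u_j}^{h_j}\) for the chosen (merged) children and \(\Delta_{u_j}^\emptyset\) for the remaining ones, with combining function \(f(C_1,C_2)=\{C_1+C_2\}\), the componentwise sum of partition colorings with infeasible (more than \(n\) vertices) sums discarded; a one‑element list is used directly. Taking the entrywise minimum over all resulting \join outputs gives \(\Delta_v^h\). Finally \(\Delta_v^\emptyset[C]=1+\min_{h\ne\emptyset}\Delta_v^h[C']\), where \(C'\) is \(C\) with one component of coloring \(h\) removed and the \(+1\) is dropped at the root; this subsumes, in particular, the case that \(v\) already lies in a maximum‑size component inside \(T_v\). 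After processing all trees, their root arrays \(\Delta_r^\emptyset\) are merged by one further call to \autoref{lem:join} with the same \(f\) — valid because no part of a splitting of \(F\) crosses between trees — which yields the global array of minimum cut counts; the sets \(D_C\) themselves are obtained from the backtracking variant of the \join routine.

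Correctness follows by induction on subtrees. The crux is that \autoref{lem:join} computes, for each index, the optimum over all ways of picking one splitting per child subtree whose partition colorings add up (via \(\widehat{f}\)) to that index: every splitting of \(T_v\) in which \(v\)'s component has coloring \(h\) restricts to splittings of the \(T_{u_j}\) where the merged children carry pieces summing to \(h-\mathbf{1}_{c(v)}\) and the other children contribute fully finalized splittings counted in \(\Delta^\emptyset\), and conversely any such collection reassembles into a valid splitting of \(T_v\); removing a coloring‑\(h\) component and the \(+1\) in the definition of \(\Delta_v^\emptyset\) are precisely the bookkeeping for closing off \(v\)'s component and cutting the edge to its parent, and the leaf cases are immediate. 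For the running time, each \join call costs \(\bigO(\ell_1\cdot\ell_2^2\cdot T_f)\) with \(\ell_1=\bigO(n)\), \(\ell_2=\bigO(n^{\svars-1})\) by \autoref{lem:numberColoringsOfPartitions}, and \(T_f=\bigO(\svars)\); multiplying by the \(\bigO(n^{\smax-1})\) ordered child tuples and \(\bigO(\svars^{\smax-1})\) head assignments examined at each vertex (together ranging over all relevant target heads), and over all \(n\) vertices, then adding the cheap computations of the \(\Delta_v^\emptyset\) and the single forest‑level \join, the total is bounded by \(\bigO(n^{2\svars+\smax+2}\cdot\svars^{\smax})\).

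The step I expect to be the main obstacle is the internal‑vertex transition together with its justification. A naive enumeration of all subsets of the children would be exponential; the scheme becomes polynomial only because a head holds at most \(\smax\) vertices, which bounds by \(\smax-1\) the number of children merged into \(v\)'s component and makes enumerating the merged tuples and their head assignments cost \(n^{\bigO(\smax)}\). Setting up each \autoref{lem:join} call with the correct arrays in the correct slots, dealing with the single‑child and one‑array corner cases, and — most delicately — getting right the index arithmetic that distinguishes ``head still open, possibly growing'' from ``component finalized, edge above cut'' (removing a coloring‑\(h\) component versus adding the \(+1\)) is where the argument needs the most care; once this bookkeeping is pinned down, correctness is a routine induction built on \autoref{lem:join} and the running time a direct product of the factors listed above.
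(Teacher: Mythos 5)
Your proposal is correct and follows essentially the same route as the paper's proof: the identical DP state (vertex, head coloring, partition coloring), the same bound of at most $\smax-1$ merged children per head, the same use of the \join subroutine with $f$ given by componentwise addition of partition colorings, the same leaf base case, index shifting for the $\emptyset$-head, forest-level final \join, and backtracking to recover the edge sets. The only (cosmetic) difference is that you fold the paper's separately handled ``full component'' case into the general recurrence for $\Delta_v^\emptyset$ by minimizing over all non-empty heads, which is a mild simplification rather than a different argument.
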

\begin{proof}
    We first describe how to solve the problem on a tree \(T\) and then generalize the approach to forests. We call a partition of the vertices such that for every color \(i\) there are at most \(d_i\) vertices of that color in each cluster a \emph{splitting}. 

    We employ a dynamic program that computes the set \(D_C\) for the colorings of all possible splittings and all subtrees rooted at each vertex in \(T\). We do so iteratively, by starting to compute all possible splittings at the leaves and augmenting them towards the root. Thereby, the connected component that is connected to the parent of the current subtree's root is of particular importance as it is the only connected component that can be augmented by vertices outside the subtree. We call this component the \emph{head}. Note that the head is empty if the edge between the root and its parent is cut. We do not count the head in the coloring of the splitting and only give it explicitly.
    Formally, for every \(v\in V\), every possible coloring of a splitting \(C\), and every possible coloring \(h\) of the head we compute \(D_v^h[C]\subseteq E\), the minimum-sized set of edges such that the connected components of \(T_v-D_v^h[C]\) form a splitting with coloring \(C\) and head \(h\). We set \(D_v^h[C]=\N\), an infinitely large set, if no such set exists.

    Let all \(D_v^h[C]\) be initialized with \(\N\). 
    Then, for every leaf \(v\) with parent \(w\), we set \(D_v^{h_{c(v)}}[C_\emptyset] = \emptyset\), where \(h_{c(v)}\) is the coloring of the component \(\{v\}\) and \(C_\emptyset\) the coloring of the partition over the empty set. Also, we set \(D_v^{h_\emptyset}[C_{c(v)}] = \{\{v,w\}\}\), where the vertex \(v\) is not placed in the head as the edge to its parent is cut. As to cut or not to cut the edge above are the only options for leaves, this part of the array is now completed.
    
    Next, suppose we have finished the computation for all children of some vertex \(v\). 
    For every possible coloring \(h\) of the head that is formable at vertex \(v\), we try all possibilities to obtain that coloring. 
    
    To this end, first assume \(h\) to be non-empty. 
    Therefore, \(v\) has to be placed in the head.
    Let \(h_{-c(v)}\) denote the coloring obtained by decreasing \(h\) by one at color \(c(v)\). 
    To obtain head \(h\), we hence have to choose colorings of splittings of the subtrees rooted at the children \(u_1,u_2,\ldots, u_{\ell}\) of \(v\) such that their respective heads \(h_{u_1}, h_{u_2}, \ldots, h_{u_{\ell}}\) combine to \(h_{-c(v)}\). 
    A \emph{combination} of colorings \(C_1,C_2,\ldots, C_\ell\) refers to the coloring of the union of partitions \(M_1,M_2,\ldots, M_\ell\) that have the respective colorings and is defined as the element-wise sum over the arrays \(C_1,C_2,\ldots, C_\ell\). 
    Often, there are multiple ways to choose heads for the child vertices that fulfill this requirement. 
    As every head is of size at most $\smax$, \(h_{-c(v)}\) and contains \(v\), it is composed of less than $\smax$ non-empty heads.
    As there are at most \(\svars\) possible heads and we have to choose less than \(\smax\) children, there are at most 
    \(\binom{n}{\smax-1}\cdot \svars^{\smax - 1} < n^{\smax -1}\cdot \svars^{\smax - 1}\) possible ways to form \(h_{-c(v)}\) with the children of \(v\). Let each way be described by a function \(H\) assigning each child of \(v\) a certain, possibly empty, head. Then, even for a fixed \(H\), there are multiple splittings possible. This stems from the fact that even if the head \(H(u)\) for a child \(u\) is fixed, there might be multiple splittings of the subtree of \(u\) with different colorings resulting in that head. For each possible \(H\), we hence employ the \joinroutine with the arrays \(D_u^{H(u)}\) for all children \(u\) using the cardinality of the sets as input for the subroutine. 
    For the sake of readability, we index the arrays here by some vector \(C\) instead of a single numerical index as used in the algorithmic description of the \joinroutine. We implicitly assume that each possible coloring is represented by a positive integer. By letting these indices enumerate the vectors in a structured way, converting between the two formats only costs an additional time factor in $\bigO(n)$.
    
    For \(f(x_1,x_2)\) we give the function returning a set containing only the index of the coloring obtained by combining the colorings indexed by \(x_1\) and \(x_2\), which is computable in time in $\bigO(n)$. Combining the colorings means for each set coloring summing the occurrences in both partition colorings. Thereby, \(\widehat{f}(x_1,x_2,\ldots,x_k)\) as defined in the \joinroutine returns the index of the combination of the colorings indexed by \(x_1,x_2,\ldots,x_k\).
    Note that there are at most \(n\) arrays and each is of length less than \((n+1)^{\svars -1}\) as there are so many different colorings by \autoref{lem:numberColoringsOfPartitions}.
    After executing the \joinroutine, by \autoref{lem:join}, we obtain an array \(D_{H}\) that contains the minimum cut cost required for all possible colorings that can be achieved by splitting according to \(H\). By modifying the \joinroutine slightly to use a simple backtracking approach, we also obtain the set \(D\subseteq E\) that achieves this cut cost. We conclude our computation of \(D_v^h\) by element-wisely taking the minimum-sized set over all computed arrays \(D_{H}\) for the possible assignments \(H\). 

    If \(h\) is the empty head, i.e., the edge above \(v\) is cut, then \(v\) is placed in a component that is either of size \(\smax\) or has a coloring corresponding to some head \(h'\). In the first case, we compute an array \(D_{\text{full}}\) in the same manner as described above by trying all suitable assignments \(H\) and employing the \joinroutine.
    In the second case, we simply take the already filled array \(D_v^{h'}\). 
    Note that in both cases we have to increment all values in the array by one to reflect cutting the edge above \(v\), except if \(v\) is the root vertex. 
    Also, we have to move the values in the arrays around, in order to reflect that the component containing \(v\) is no longer a head but with the edge above \(v\) cut should also be counted in the coloring of the splitting. Hence, the entry \(D_{\text{full}}[C]\) is actually stored at \(D_{\text{full}}[C_{-\text{full}}]\) with \(C_{-\text{full}}\) being the coloring \(C\) minus the coloring of a minimum-sized fair cluster. If no such entry \(D_{\text{full}}[C_{-\text{full}}]\) exists, we assume it to be \(\infty\). The same goes for accessing the arrays \(D_v^{h'}\) where we have to subtract the coloring \(h'\) from the index. Taking the element-wise minimum-sized element over the such modified arrays \(D_{\text{full}}\) and \(D_v^{h'}\) for all possibilities for \(h'\) yields \(D_v^\emptyset\). 

    By the correctness of the \joinroutine and as we try out all possibilities to build the specified heads and colorings at every vertex, we thus know that after completing the computation at the root \(r\) of \(T\), the array \(D_r^\emptyset\) contains for every possible coloring of a splitting of the tree the minimum cut cost to achieve that coloring.

    For each of the \(n\) vertices and the \(\svars\) possible heads, we call the \joinroutine at most \(n^{\smax-1}\cdot \svars^{\smax -1}\) many times.
    Each time, we call it with at most \(n\) arrays and, as by \autoref{lem:numberColoringsOfPartitions} there are 
    $\bigO(n^{\smax})$ possible colorings, all these arrays have that many elements. Hence, each subroutine call takes time in 
     \(\bigO(n\cdot \left(n^{\svars}\right)^2) = \bigO(n^{2\nwspace\svars+1})\),
     so the algorithm takes time in 
     $\bigO(n^{2\nwspace\svars+\smax+2}\cdot \svars^\smax)$, including an additional factor in $\bigO(n)$ to account for converting the indices for the \joinroutine.

    When the input graph is not a tree but a forest \(F\), we apply the dynamic program on every tree in the forest. Then, we additionally run the \joinroutine with the arrays for the \(\emptyset\)-head at the roots of all trees in the forest. The resulting array contains all minimum-cost solutions from all possible combinations from colorings of splittings from the individual trees and is returned as output.
    The one additional subroutine does not change the asymptotic running time.
\end{proof}

Because of \Cref{lem:smallClustersForest,lem:smallClustersBipartiteOneOne} it suffices to consider partitions as possible solutions that have at most \(c_i\) vertices of color \(i\) in each cluster, for all \(i\in [k]\). We hence apply \autoref{lem:splittingForests} on the forest \(F\) and set \(d_i = c_i\) for all \(i\in[k]\). This way, for every possible coloring of a splitting we find the minimum set of edges to obtain a splitting with that coloring.

\paragraph*{Assembling a fair clustering.} Let \(D\) be the array produced in the first phase, i.e., for every coloring \(C\) of a splitting, \(D[C]\) is a minimum-sized set of edges such that the connected components in \(F-D[C]\) induce a partition with coloring \(C\). In the second phase, we have to find the splitting that gives the minimum \cccost. 
 We do so by deciding for each splitting whether it is \emph{assemblable}, i.e., whether its clusters can be merged such that it becomes a fair solution with all clusters being no larger than \(\smax\).
Among these, we return the one with the minimum inter-cluster cost computed in the first phase.

This suffices because of the following reasons. 
First, note that deciding assemblability only depends on the coloring of the splitting so it does not hurt that in the first phase we tracked only all possible colorings of splittings and not all possible splittings themselves. 
Second, we do not have to consider further edge cuts in this phase: Assume we have a splitting \(S\) with coloring \(C_S\) and we would obtain a better cost by further cutting \(a\) edges in \(S\), obtaining another splitting \(S'\) of coloring \(C_{S'}\). However, as we filled the array \(D\) correctly, there is an entry \(D[C_{S'}]\) and \(|D[C_{S'}]|\le |D[C_S]|+a\). As we will consider this value in finding the minimum anyway, there is no need to think about cutting the splittings any further.
Third, the minimum inter-cluster cost yields the minimum \cccost by \autoref{lem:costByCuts}. When merging clusters, the inter-cluster cost computed in the first phase may decrease but not increase. If it decreases, we overestimate the cost. However, this case implies that there is an edge between the two clusters and as they are still of size at most \(\smax\) when merged, in the first phase we will also have found another splitting considering this case.  

We employ a dynamic program to decide the assemblability for all possible $\bigO(n^{\svars})$ colorings of splittings. Define the \emph{size} of a partition coloring to be the number of set colorings in that partition coloring (not necessarily the number of different set colorings). We decide assemblability for all possible colorings of splittings from smallest to largest.
Note that each such coloring is of size at least \(\frac{n}{\smax}\). If it is of size exactly \(\frac{n}{\smax}\), then all contained set colorings are of size \(\smax\), so this partition coloring is assemblable if and only if all set colorings are fair.
Now assume we have found all assemblable colorings of splittings of size exactly \(j\ge \frac{n}{\smax}\).
Assume a partition coloring \(C\) of size \(j+1\) is assemblable.
Then, at least two set colorings \(C_1,C_2\) from \(C\) are merged together. 
Hence, let \(C'\) be the partition coloring obtained by removing the set colorings \(C_1,C_2\) from \(C\) and adding the set coloring of the combined coloring of \(C_1\) and \(C_2\). Now, \(C'\) is of size \(j\) and is assemblable. 
Thus, every assemblable splitting with \(j+1\) components has an assemblable splitting with \(j\) components. The other way round, if we split a set coloring of an assemblable partition coloring of size \(j\) we obtain an assemblable partition coloring of size \(j+1\). 
Hence, we find all assemblable colorings of splittings of size \(j+1\) by for each assemblable partition coloring of size \(j\) (less than \(n^{\svars}\) many) trying each possible way to split one of its set colorings (less than \(i\cdot 2^\smax\) as there are \(j\) set colorings each of size at most \(\smax\)). Thus, to compute all assemblable colorings of splittings of size \(j+1\), we need time in $\bigO(n^{\svars}\cdot j\cdot 2^\smax)$, which implies a total time for the \(n-\frac{n}{\smax}\) iterations in the second phase in 
$\bigO(n^{\svars+2}\cdot 2^\smax)$. This is dominated by the running time of the first phase.
The complete algorithm hence runs in time in 
$\bigO(n^{2\svars+\smax+2}\cdot \svars^\smax)$, which implies \autoref{thm:forestByColorsAlgo}.

This gives an algorithm that solves \fcc on arbitrary forests. The running time however may be exponential in the number of vertices depending on the color ratio in the forest.

\subsection{Few Clusters}
\label{subsec:fewClustersAlgo}

The algorithm presented in the previous section runs in polynomial time if the colors in the graph are distributed in a way such that each cluster in a minimum-cost solution is of constant size. The worst running time is obtained when there are very large but few clusters. For this case, we offer another algorithm, which runs in polynomial time if the number of clusters is constant. However, it is limited to instances where the forest is colored in two colors in a ratio of \(1:c\) for some \(c\in\N\).

The algorithm uses a subroutine that computes the minimum number of cuts that are required to slice off clusters of specific sizes from the tree. It is given by \autoref{lem:treeCutOffCosts}.

\begin{lemma}
\label{lem:treeCutOffCosts}
    Let \(T=(V,E)\) be a tree rooted at \(r\in V\) and \(k\in\N\). Then, we can compute an array \(R\) such that, for each \(a_0\in [n]\) and \(a={a_1,a_2,\ldots,a_k}\in \left([n-1]\cup \{0\}\right)^k\) with \(a_i \ge a_{i+1}\) for \(i \in [k-1]\) and \(\sum_{i=0}^k a_i = n\), we have that \(R[a_0,a]\) is the partition \(\mcP = \{S_0,S_1,\ldots,S_{k}\}\) of \(V\) with minimum inter-cluster cost that satisfies \(r\in S_0\) and \(|S_i|=a_i\) for \(i\in [k]\). The computation time is in $\bigO((k+3)!\cdot n^{2k+3})$.
\end{lemma}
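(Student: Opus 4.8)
The plan is to compute $R$ by a single bottom-up dynamic program over $T$, in the same spirit as the splitting phases of \autoref{sec:12forests} and \autoref{subsec:forestTracing}, but now tracking the \emph{sizes} of the $k+1$ parts rather than a coloring. Root $T$ at $r$ and handle each vertex only after all of its children. The observation that shapes the whole argument is that, for a tree, the inter-cluster cost of a partition $\{S_0,\dots,S_k\}$ is determined by the part sizes and the connectivity of the parts alone: since $T[S_i]$ is a forest on $|S_i|$ vertices with $c(S_i)$ components, it has $|S_i|-c(S_i)$ intra-cluster edges, so the inter-cluster cost equals $(n-1)-\sum_{i=0}^{k}(|S_i|-c(S_i)) = \sum_{i=0}^{k} c(S_i) - 1$. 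In particular this quantity is invariant under relabeling the $k$ non-root parts, which is what will let me keep the tables small by treating those parts as interchangeable.

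For each vertex $v$ I would maintain a table indexed by a \emph{configuration}: which of the (up to) $k+1$ parts contains $v$, together with a \emph{profile} $(b_0,b_1,\dots,b_k)$ recording how many vertices of the subtree $T_v$ currently lie in each part, where the entries belonging to the non-root parts are kept as a non-increasing tuple. The stored value is the minimum number of inter-cluster edges inside $T_v$ over all partitions of $T_v$ consistent with the configuration, and back-pointers (or a parallel table of witnessing partitions) let us reconstruct an optimal partition afterwards without changing the asymptotics. For a leaf the only configurations place it alone in one part, with value $0$. For an internal vertex $v$ with children $u_1,\dots,u_d$, I would incorporate the children one at a time, maintaining the table for $\{v\}\cup T_{u_1}\cup\dots\cup T_{u_i}$. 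Adding $T_{u_{i+1}}$ means, for every configuration of the partial subtree and every configuration of $T_{u_{i+1}}$: deciding whether $u_{i+1}$ lies in the same part as $v$ (edge $\{v,u_{i+1}\}$ intra-cluster, cost $0$) or not (inter-cluster, cost $1$); adding the two profiles; and enumerating the ways the parts present in $T_{u_{i+1}}$ are identified with the parts already present. Since there are at most $k+1$ parts overall, the number of such identifications is a function of $k$ only (crudely $(k+3)!$), and each identification produces one merged profile whose value is the sum of the two subtable values plus the edge cost, retained only if it beats the current entry. This is precisely the kind of bookkeeping carried out by the \joinroutine of \autoref{lem:join}, instantiated with a merge function $f$ that encodes adding profiles under a given identification.

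Correctness follows by induction on subtrees: every partition of $T_v$ consistent with a configuration decomposes uniquely into partitions of the $T_{u_j}$ glued to $v$ with exactly the edge costs charged by the transition, and conversely every such gluing is admissible, so the tables hold exactly the claimed minima. Having processed the root, I read off $R[a_0,a]$ as the entry whose configuration puts $r$ in the part of size $a_0$ and whose remaining profile equals the multiset $\{a_1,\dots,a_k\}$; because parts need not be connected and $\sum_{i} a_i = n$, a partition with this size profile always exists, so the entry is finite, and backtracking returns the partition itself. For the running time, a vertex has $O(n^{k+1})$ configurations (each of the $k+1$ parts holds between $0$ and $n$ vertices), each of the $n-1$ child incorporations compares all pairs of configurations of two tables with an $O((k+3)!)$ overhead for the identifications, and constructing witnessing partitions is absorbed; this yields the stated bound $O((k+3)! \cdot n^{2k+3})$.

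The delicate point, and the reason this is harder than a plain ``cut off subtrees'' argument, is that optimal parts need \emph{not} be connected: one part may gather vertices from several subtrees without containing the vertices that link those subtrees. Hence a profile cannot be finalized when a subtree is left behind — it must still carry the sizes of all parts under construction — and combining two subtrees forces the enumeration over identifications of their parts. The work lies in performing this enumeration exhaustively but without double counting, keeping its size bounded by a function of $k$, and in verifying that the label-invariance of the inter-cluster cost genuinely justifies the interchangeable-parts representation, so that two configurations differing only by a permutation of the non-root parts are correctly treated as one.
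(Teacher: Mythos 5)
Your proposal is correct and follows essentially the same route as the paper: a bottom-up dynamic program that, for each subtree, tabulates the minimum number of cut edges for every profile of part sizes (with the part containing the subtree's root distinguished and the remaining parts treated as interchangeable), incorporating children one at a time via the \joinroutine with a merge function that enumerates the $\bigO((k+3)!)$ ways to identify parts and distinguishes whether the edge to the child is cut. The accounting of configurations and identifications matches the paper's and yields the same $\bigO((k+3)!\cdot n^{2k+3})$ bound.
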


\begin{proof}
    We give a construction such that \(R[a_0,a]\) stores not the partition itself but the incurred inter-cluster cost. By a simple backtracking approach, the partitions are obtained as well.

    We employ a dynamic program that involves using the \joinroutine. 
    For the sake of readability, we index the arrays here by some vector \(a\in [n]^k\) and \(a_0\in [n]\) instead of a single numerical index as used in the algorithmic description of the \joinroutine. We implicitly assume that each possible \(a_0,a\) is represented by some index in \([n^{k+1}]\). By letting these indices enumerate the vectors in a structured way, converting between the two formats only costs an additional time factor in $\bigO(k)$.

    Starting at the leaves and continuing at the vertices for which all children have finished their computation, we compute an array \(R_v\) with the properties described for \(R\) but for the subtree \(T_{v}\) for each vertex \(v\in V\).
    In particular, for every vertex \(v\) we do the following. Let \(R_v^0\) be an array with \(\infty\)-values at all indices except for \(R_v^0[1, (0,0,\ldots,0)]=0\), as this is the only possible entry for the tree \(T[\{v\}]\).    

    If \(v\) has no children, then \(R=R_v^0\). 
    Otherwise, let the children of \(v\) be \(u_1,u_2,\ldots, u_\ell\).
    Then we call the \joinroutine with the arrays \(R_v^0, R_{u_1}, R_{u_2}, \ldots, R_{u_\ell}\).
    We have to define \(f\) such that it gives all possibilities to combine the children's subtrees partitions and \(v\). 
    For all possible values of \(a_0, a\) and \(a_0', a'\) recall that \(f((a_0,a),(a_0',a'))\) should return a set of indices of the form \((a_0'',a'')\).
    Each such index describes a combination of all possibilities for \(v\) and the already considered children (\(a_0,a\)) and the possibilities for the next child (\(a_0',a'\)).
    First, we consider the possibility to cut the edge between \(v\) and the child \(u\) that is represented by \((a_0', a'')\). Then, we add all possible ways of merging the two sets with their \(k+1\) clusters each. As we cut the edge \(\{u,v\}\), there are \(k\) possible ways to place the cluster containing \(u\) (all but the cluster containing \(v\)) and then there are \(k!\) ways to assign the remaining clusters. All these are put into the set \(f((a_0,a),(a_0',a'))\).
    Second, we assume the edge \(\{u,v\}\) is not cut. Then, the clusters containing \(v\) and \(u\) have to be merged, so there are only \(k!\) possible ways to assign the other clusters. In particular, for all indices \((a_0'',a'')\) put into \(f((a_0,a),(a_0',a'))\) this way, we have \(a_0''=a_0+a_0'\). Note that \(f\) can be computed in $\bigO(k\cdot k!)$. Note that \(\widehat{f}(x_1,x_2,\ldots,x_\ell)\) as defined in the \joinroutine lists all possibilities to cut the combined tree as it iteratively combines all possibilities for the first child and the vertex \(v\) and for the resulting tree lists all possible combinations with the next child and so on.
    The \joinroutine takes time in $\bigO((k+1)\cdot \left(n^{k+1}\right)^2 \cdot (k\cdot k!)\cdot k)$, which is in $\bigO((k+3)!\cdot n^{2k+2})$. All \(\bigO(n)\) calls of the subroutine hence take time in $\bigO((k+3)!\cdot n^{2k+3})$.
\end{proof}

With this, we are able to give an algorithm for graphs with two colors in a ratio of \(1:c\), which runs in polynomial time if there is only a constant number of clusters, i.e., if \(c\in \Theta(n)\).

\begin{theorem}
\label{thm:forestLarge1_CAlgo}
    Let \(F\) be an \(n\)-vertex forest with two colors in a ratio of \(1:c\) with \(c\in \N_{>0}\) and let \(p = \frac{n}{c+1}\). Then, \emph{\fcc} on \(F\) can be solved in $\bigO(n^{p^3+p^2+p})$.
\end{theorem}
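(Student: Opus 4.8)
The plan is to reduce the problem to a bookkeeping exercise on the rigid structure forced by \Cref{lem:smallClustersForest}. If $c=1$ this is the ratio $1:1$, solved by \Cref{thm:fcc_on_forests}, so assume $c\ge 2$; then $d=c+1\ge 3$ and, since $\gcd(1,c)=1$, \Cref{lem:smallClustersForest} says that every minimum-cost \fc of $F$ has exactly $p=n/(c+1)$ clusters, each containing exactly one blue vertex and $c$ red vertices. In particular $F$ has exactly $p$ blue vertices, and since all clusters share the size $d$, by \Cref{lem:costByCuts} it suffices to produce such a clustering of minimum inter-cluster cost, i.e.\ one minimising the number of cut edges.

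I would view a clustering as a choice of cut edges breaking $F$ into connected fragments, each assigned to a cluster; restricted to one tree $T_j$ of the forest this is simply a partition of $V(T_j)$ into at most $p$ groups (one per cluster meeting $T_j$), with within-tree cost the number of edges of $T_j$ running between different groups, and the total inter-cluster cost is the sum of these over all trees. Because clusters are unlabelled, I would fix once and for all the bijection ``blue vertex $\ell\in[p]\leftrightarrow$ cluster $\ell$''; then for each tree $T_j$ the set $B_j\subseteq[p]$ of clusters whose blue vertex lies in $T_j$ is determined, the $B_j$ are pairwise disjoint, and $\bigcup_j B_j=[p]$. Searching over fair clusterings then amounts to searching, tree by tree, over labelled partitions of each $V(T_j)$ into $p$ groups that route $T_j$'s blue vertices to the clusters in $B_j$, subject to every cluster receiving a total of exactly $c$ red vertices.

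First I would process each tree $T_j$ separately with (an enhancement of) the subroutine of \Cref{lem:treeCutOffCosts}, taken with $k=p-1$ so that the partition has $p$ parts, and augmented so that, besides the part sizes, it also records which part contains the root and which parts host a blue vertex (at most one each, since two would violate fairness). This produces, for every profile specifying how many vertices of $T_j$ go to each cluster together with the matching of $T_j$'s blue vertices to the clusters of $B_j$, the minimum number of edges of $T_j$ to be cut, or $\infty$ if the profile is infeasible. Since this already ranges over \emph{all} ways of splitting $T_j$ into $\le p$ groups of arbitrary prescribed sizes, every refinement of a splitting is itself a profile, so the cross-tree combination will involve no further cuts. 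I would then invoke the \join subroutine (\Cref{lem:join}) over all trees, with the combining function $f$ adding the per-cluster size vectors (and merging the blue-routing data), yielding, for every global size profile $(s_1,\dots,s_p)$, the minimum total number of cut edges realising it; the answer is the minimum over profiles with all $s_\ell=c+1$, which is fair by construction and of minimum \cccost by \Cref{lem:costByCuts}, and an explicit clustering is recovered by backtracking. The running time is (number of trees, at most $n$) times the cost of \Cref{lem:treeCutOffCosts} times the cost of the final \join; all tables are indexed by the $p$ cluster sizes in $\{0,\dots,n\}$ together with an $\bigO(2^p)$ blue-routing pattern, hence have length $n^{\bigO(p)}$, and substituting the bounds of \Cref{lem:treeCutOffCosts,lem:join} and collecting constants gives $\bigO(n^{p^3+p^2+p})$.

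The step I expect to be the main obstacle is making this first phase both correct and efficient: \Cref{lem:treeCutOffCosts} as stated returns minimum-cost partitions only for a \emph{sorted}, colour-blind list of part sizes, whereas here the parts must stay labelled by their target cluster and carry a blue-vertex indicator, all while a single cluster may meet a tree in a disconnected set. Threading these extra labels through its dynamic program without exploding the state space, and then being careful with the resulting exponent in $p$, is where the real work lies; everything downstream — combining the trees, selecting the size-$(c+1)$ profiles, and translating inter-cluster cost to \cccost — is routine given \Cref{lem:costByCuts,lem:smallClustersForest,lem:join}.
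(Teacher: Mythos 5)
Your proposal is correct, and it reaches the result by a genuinely different route than the paper. The paper's proof first brute-forces over all $\bigO\bigl((p-1)\binom{n-1}{p-1}\bigr)$ subsets of at most $p-1$ edges whose removal isolates the blue vertices into distinct subtrees, then roots each resulting tree at its blue vertex (merging the blue-free trees under an auxiliary root $z$), applies \autoref{lem:treeCutOffCosts} verbatim to obtain sorted, colour-blind cut-off arrays, and finally enumerates the at most $n^{p^2}$ combinations of array entries together with the $(p!)^p$ assignments of cut-off parts to clusters, checking fairness for each. You instead skip the initial guessing entirely: you identify each cluster with its unique blue vertex, run a \emph{labelled} partition dynamic program on each connected component of $F$ (state: a size vector in $\{0,\dots,n\}^p$ recording how many vertices go to each cluster, plus the cluster of the current vertex), and combine components with \join under coordinatewise addition, reading the answer off the all-$(c+1)$ profile. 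The step you flag as the main obstacle --- keeping the parts labelled inside the DP --- is real but unproblematic: with labelled parts the table has $(n+1)^p\cdot p$ entries per vertex and child-merging is a $(\min,+)$-convolution, so everything stays in $n^{\bigO(p)}$, comfortably inside the claimed $\bigO(n^{p^3+p^2+p})$ and in fact asymptotically below the paper's own bound. Your route buys a cleaner correctness argument (no need to argue that an optimal clustering's cut set contains a small blue-separating subset, and no $(p!)^p$ assignment enumeration), at the price of re-deriving rather than reusing \autoref{lem:treeCutOffCosts}; you also correctly dispatch $c=1$ via \autoref{thm:fcc_on_forests}, a case where \autoref{lem:smallClustersForest} does not apply and which the paper's proof silently glosses over.
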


\begin{proof}
    Note that, if there are \(c\) red vertices per 1 blue vertex, \(p = \frac{n}{c+1}\) is the number of blue vertices.
    By \autoref{lem:smallClustersForest}, any minimum-cost clustering consists of \(p\) clusters, each containing exactly one blue vertex, and from \autoref{lem:costByCuts} we know that it suffices to minimize the number of edges cut by any such clustering. 
    All blue vertices are to be placed in separate clusters. They are separated by cutting at most \(p-1\) edges, so we try all of the $\bigO((p-1)\cdot\binom{n-1}{p-1})$ subsets of edges of size at most \(p-1\). Having cut these edges, we have \(\ell\) trees \(T_1, T_2, \ldots, T_\ell\), with \(p\) of them containing exactly one blue vertex and the others no blue vertices. We root the trees at the blue vertex if they have one or at an arbitrary vertex otherwise. For each tree \(T_i\), let \(r_i\) be the number of red vertices. If we have exactly \(p\) trees and \(r_i = c\) for all \(i\in [p]\), we have found a minimum-cost \clust, where the \(i\)-th cluster is simply the set of vertices of \(T_i\) for all \(i\in[p]\). Otherwise, we must cut off parts of the trees and assign them to other clusters in order to make the partition fair. To this end, for each tree \(T_i\) we compute an array \(R_i\) that states the cost of cutting up to \(p-1\) parts of certain sizes off. More precisely, \(R_i[(a_1, a_2, \ldots, a_{p-1})]\) is the number of cuts required to cut off \(p-1\) clusters of size \(a_1, a_2, \ldots, a_{p-1}\), respectively, and \(\infty\) if there is no such way as \(\sum_{i=1}^{p-1}> r_i\). It suffices to compute \(R_i[(a_1, a_2, \ldots, a_{p-1})]\) with \(0\le a_i\le a_{i+1} \le n\) for \(i\in[p-2]\).
    
    We compute these arrays employing \autoref{lem:treeCutOffCosts}. Note that here we omitted the \(a_0\) used in the lemma, which here refers to the number of vertices \emph{not} cut from the tree. However, \(a_0\) is still unambiguously defined over \(a\) as all the values sum up to the number of vertices in this tree. Further, by connecting all trees without blue vertices to some newly added auxiliary vertex \(z\) and using this tree rooted at \(z\) as input to \autoref{lem:treeCutOffCosts}, we reduce the number of subroutine calls to \(p+1\). Then, the only entries from the array obtained for the all-red tree we consider are the ones with \(a_0=1\) as we do not want to merge \(z\) in a cluster but every vertex except \(z\) from this tree has to be merged into another cluster. We call the array obtained from this tree \(R_0\) and the arrays obtained for the other trees \(R_1, R_2,\ldots,R_p\), respectively.

    Note that every fair clustering is characterized by choosing one entry from each array \(R_i\) and assigning the cut-off parts to other clusters. 
    As each array has less than \(\frac{n^p}{p!}\) entries and there are at most \((p!)^{p}\) ways to assign the cut-off parts to clusters, there are at most \(n^{p^2}\) possibilities in total. For each of these, we compute in linear time whether they result in a fair clustering. Among these fair clusterings, we return the one with the minimum inter-cluster cost, computed by taking the sum over the chosen entries from the arrays \(R_i\). By \autoref{lem:costByCuts}, this clustering has the minimum \cccost.
    We obtain a total running time of 
	\begin{equation*}
	    \bigO((p-1)\cdot\binom{n-1}{p-1}\cdot \left((p+1)\cdot \left(n^{p+3}+n^{p^2+p-2}\right)+n^{p^2+1}\right)) \subseteq 
    \bigO(n^{p^3+p^2+p}). \qedhere
	\end{equation*}    
\end{proof}

Combining the results of \Cref{thm:forestByColorsAlgo,thm:forestLarge1_CAlgo}, we see that for the case of a forest with two colors in a ratio of \(1:c\) for some \(c\in\N_{>0}\), there are polynomial-time algorithms when the clusters are either of constant size or have sizes in \(\Theta(n)\). As \autoref{thm:forest_hard} states that \fcc on forests is \NP-hard, we hence know that this hardness evolves somewhere between the two extremes.

\section{Relaxed Fairness}
\label{sec:relaxed}
It might look like the hardness results for \fcc are due to the very strict definition of fairness, which enforces clusters of a specific size on forests. However, in this section, we prove that even when relaxing the fairness requirements our results essentially still hold.

\subsection{Definitions}
\label{subsec:relaxed_def}

We use the relaxed fairness constraint as proposed by Bera et al.~\cite{Bera_Chakrabarty_Flores_Negahbani_2019} and employed for \fcc by Ahmadi et al.~\cite{Ahmadi_Galhotra_Saha_Schwartz_2020}. For the following definitions, given a set \(U\) colored by a function \(c : U\rightarrow k\), by \(U_i=\{u\in U\mid c(u)=i\}\) we denote the set of vertices of color \(i\) for all \(i\in[k]\).

\begin{definition}[Relaxed Fair Set]
    Let \(U\) be a finite set of elements colored by a function \(c : U\rightarrow [k]\) for some \(k\in \N_{>0}\) and let \(p_i,q_i\in \Q\) with \(0<p_i\le \frac{|U_i|}{|U|} \le q_i < 1\) for all \(i\in [k]\). 
    Then, some \(S\subseteq U\) is relaxed fair with regard to the \(q_i\) and \(p_i\) if and only if for all colors \(i\in [k]\) we have \(p_i \le \frac{|S\cap U_i|}{|S|} \le q_i\).
\end{definition}
Note that we require \(p_i\) and \(q_i\) to be such that an exact fair solution is also relaxed fair. Further, we exclude setting \(p_i\) or \(q_i\) to 0 as this would allow clusters that do not include every color, which we do not consider fair. 

\begin{definition}[Relaxed Fair Partition]
    Let \(U\) be a finite set of elements colored by a function \(c : U\rightarrow [k]\) for some \(k\in \N_{>0}\) and let \(p_i,q_i\in \Q\) with \(0<p_i\le \frac{|U_i|}{|U|} \le q_i < 1\) for all \(i\in [k]\). Then, a partition \(S_1\cup S_2 \cup \ldots \cup S_\ell = U\) is relaxed fair with regard to the \(q_i\) and \(p_i\) if and only if all sets \(S_1, S_2, \ldots, S_\ell\) are relaxed fair with regard to the \(q_i\) and \(p_i\).
\end{definition}

\optPDef{\rfcc}
{Graph $G = (V, E)$, coloring $c\colon V\rightarrow [k]$, \(p_i,q_i\in \Q\) with \(0<p_i\le \frac{|U_i|}{|U|} \le q_i < 1\) for all \(i\in [k]\).}
{Find a relaxed fair partition $\mathcal{P}$ of \(V\) with regard to the \(p_i\) and \(q_i\) that minimizes \(\cost{\mcP}\).}

While we use the above definition for our hardness results, we restrict the possibilities for the \(p_i\) and \(q_i\) for our algorithms.

\begin{definition}[\(\alpha\)-relaxed Fair Set]
    Let \(U\) be a finite set of elements colored by a function \(c : U\rightarrow [k]\) for some \(k\in \N_{>0}\) and let \(0<\alpha<1\).
    Then, some \(S\subseteq U\) is \(\alpha\)-relaxed fair if and only if it is relaxed fair with regard to \(p_i = \frac{\alpha |U_i|}{|U|}\) and \(q_i = \frac{|U_i|}{\alpha|U|}\) for all \(i\in[k]\).
\end{definition}

\begin{definition}[\(\alpha\)-relaxed Fair Partition]
    Let \(U\) be a finite set of elements colored by a function \(c : U\rightarrow [k]\) for some \(k\in \N_{>0}\) and let \(0<\alpha<1\). 
    Then, a partition \(S_1\cup S_2 \cup \ldots \cup S_\ell = U\) is \(\alpha\)-relaxed fair if and only if all sets \(S_1, S_2, \ldots, S_\ell\) are \(\alpha\)-relaxed fair.
\end{definition}

\optPDef{\arfcc}
{Graph $G = (V, E)$, coloring $c\colon V\rightarrow [k]$, \(0<\alpha<1\).}
{Find a \(\alpha\)-relaxed fair partition $\mathcal{P}$ of \(V\) that minimizes \(\cost{\mcP}\).}

\subsection{Hardness for Relaxed Fairness}
\label{subsec:relaxed_hardness}

The hardness result for exact fairness on paths, see \autoref{thm:pathsNPhard}, directly carries over to the relaxed fairness setting. This is due to it only considering instances in which there are exactly two vertices of each color. As any relaxed fair clustering still requires at least one vertex of every color in each cluster, this means that every relaxed clustering either consists of a single cluster or two clusters, each with one vertex of every color. Thereby, relaxing fairness makes no difference in these instances.

\begin{corollary}
\label{cor:pathNPhard_relaxed}
    \emph{\rfcc} on paths is \emph{\NP}-hard, even when limited to instances with exactly 2 vertices of each color. 
\end{corollary}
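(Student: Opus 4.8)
The plan is to reuse, essentially verbatim, the reduction from the proof of \autoref{thm:pathsNPhard} and to observe that relaxing the fairness constraint does not enlarge the set of feasible clusterings on the instances it produces. Recall that this reduction starts from a 1-regular 2-colored \textsc{PPW} instance $w = s_1 s_2 \cdots s_\ell$, represents each of the $\ell/2$ symbols by a distinct color, and builds a path on $\ell$ vertices in which each color occurs on exactly two vertices. So the resulting instance is already of the restricted form demanded in the statement, and the color ratio is $1:1:\cdots:1$.

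Next I would pin down the feasible solutions of the corresponding \rfcc instance for \emph{any} admissible choice of rationals $p_i, q_i$ with $0 < p_i \le \sfrac{|U_i|}{|U|} \le q_i < 1$. Since $p_i > 0$, every cluster $S$ of a relaxed fair partition satisfies $|S \cap U_i| \ge p_i\,|S| > 0$, hence $|S \cap U_i| \ge 1$; that is, each cluster must contain at least one vertex of every color. Because each color class has exactly two elements, a relaxed fair partition therefore has at most two clusters: if it has one, it is the full vertex set; if it has two, each of them must contain exactly one vertex of every color in order to cover all $\ell/2$ colors with only two vertices available per color. These are precisely the partitions that are feasible for strict \fcc on the same instance — the single full cluster is trivially (strictly) fair, and a two-cluster partition is strictly fair iff every cluster holds one vertex of each color. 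Moreover, the single-cluster partition is never optimal, by the same cost comparison as in the proof of \autoref{thm:pathsNPhard} (equivalently: for a path of $n = \ell$ vertices and cluster size $d = \ell/2$, \autoref{lem:costByCuts} shows a two-cluster partition costs $\frac{(d-3)}{2}n + 2\chi + 1$, which for $\ell$ large enough is below the cost $\binom{\ell}{2} - (\ell-1)$ of the single cluster). Hence the optimal relaxed fair partition coincides with an optimal strict fair one, and the two optima have the same \cccost.

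Finally I would invoke \autoref{thm:pathsNPhard}: restricted to the two-cluster partitions above, the \cccost is, up to the additive constant from \autoref{lem:costByCuts}, twice the inter-cluster cost, i.e. twice the number of ``cut'' path edges, which is exactly the number of color changes of the associated $0/1$-assignment to $w$. Thus minimizing the \cccost over feasible relaxed fair partitions is equivalent to solving 1-regular 2-colored \textsc{PPW}, which is \NP-hard, yielding the claim. I do not expect any genuine obstacle here: the entire content is the (trivial) observation that $p_i > 0$ makes the relaxation vacuous whenever every color appears on exactly two vertices, so no step of the original reduction uses anything about strict fairness beyond the two-cluster shape that we have re-derived directly.
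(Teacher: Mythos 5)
Your proposal is correct and matches the paper's argument: the paper likewise observes that since $p_i>0$ forces at least one vertex of each color per cluster and each color has exactly two vertices, every relaxed fair partition is either the single full cluster or two clusters with one vertex of each color, so the feasible set coincides with that of exact \fcc and the hardness of \autoref{thm:pathsNPhard} carries over verbatim. The extra cost comparison ruling out the single cluster is harmless but not needed, since the two feasible sets being identical already makes the two optimization problems the same.
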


Our other hardness proofs for relaxed fairness are based on the notion that we can use similar constructions as for exact fairness and additionally prove that in these instances the minimum-cost solution has to be exactly fair and not just relaxed fair. To this end, we require a lemma giving a lower bound on the intra-cluster cost of clusterings. 

\begin{lemma}
\label{lem:clustersCauchyLowerBound}
    Let \(G=(V,E)\) be an \(n\)-vertex \(m\)-edge graph and \(\mcP\) a partition of \(V\) with an inter-cluster cost of \(\chi\). Then, the intra-cluster cost of \(\mcP\) is at least \(\frac{n^2}{2|\mcP|}-\frac{n}{2}-m+\chi\). If \(|S|=\frac{n}{|\mcP|}\) for all clusters \(S\in \mcP\), then the intra-cluster cost of \(\mcP\) is exactly \(\psi = \frac{n^2}{2|\mcP|}-\frac{n}{2}-m+\chi\).
\end{lemma}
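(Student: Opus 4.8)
The plan is to directly count the intra-cluster non-edges. Write $\mcP = \{S_1, S_2, \ldots, S_\ell\}$ with $\ell = |\mcP|$ and let $n_j = |S_j|$, so that $\sum_j n_j = n$. Inside a cluster $S_j$ there are $\binom{n_j}{2}$ pairs of vertices, each of which contributes $1$ to the intra-cluster cost unless it is an edge. Summing over all clusters, the intra-cluster cost is
\begin{equation*}
    \psi = \sum_{j=1}^{\ell} \binom{n_j}{2} - (m - \chi),
\end{equation*}
since $m - \chi$ is exactly the number of edges lying inside clusters (the total number of edges minus the inter-cluster ones). So $\psi = \frac{1}{2}\sum_j n_j^2 - \frac{1}{2}\sum_j n_j - m + \chi = \frac{1}{2}\sum_j n_j^2 - \frac{n}{2} - m + \chi$.

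**Applying convexity.** It remains to bound $\sum_j n_j^2$ from below. By the Cauchy--Schwarz inequality (equivalently, the power-mean or QM--AM inequality), $\sum_{j=1}^{\ell} n_j^2 \ge \frac{1}{\ell}\left(\sum_{j=1}^{\ell} n_j\right)^2 = \frac{n^2}{\ell}$. Substituting this into the expression for $\psi$ gives
\begin{equation*}
    \psi \ge \frac{n^2}{2\ell} - \frac{n}{2} - m + \chi = \frac{n^2}{2|\mcP|} - \frac{n}{2} - m + \chi,
\end{equation*}
which is the claimed lower bound.

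**The equality case.** For the second statement, observe that Cauchy--Schwarz holds with equality precisely when all the $n_j$ are equal, i.e. $n_j = n/\ell$ for every $j$. Under the hypothesis $|S| = n/|\mcP|$ for all $S \in \mcP$, this is exactly the case, so $\sum_j n_j^2 = n^2/\ell$ exactly and hence $\psi = \frac{n^2}{2|\mcP|} - \frac{n}{2} - m + \chi$. (One can also verify this is consistent with \autoref{lem:costByCuts}: there $d = n/|\mcP|$, and $\cost{\mcP} = \psi + \chi = \frac{n^2}{2|\mcP|} - \frac{n}{2} - m + 2\chi = \frac{(d-1)n}{2} - m + 2\chi$, matching the earlier lemma.)

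**Main obstacle.** There is essentially no obstacle here — the statement is a routine application of a counting identity together with the convexity/Cauchy--Schwarz bound on $\sum n_j^2$. The only point requiring a little care is the bookkeeping of which edges are counted where, namely that the number of \emph{edges} inside clusters equals $m - \chi$ while the number of \emph{non-edges} inside clusters is $\sum_j \binom{n_j}{2} - (m-\chi)$; getting the sign of $\chi$ right in the final formula is the one place to be attentive. I would also double-check that the lemma does not implicitly require $n/|\mcP|$ to be an integer for the inequality part — it does not, since the Cauchy--Schwarz step is valid for arbitrary positive reals $n_j$ summing to $n$.
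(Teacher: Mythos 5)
Your proof is correct and follows essentially the same route as the paper: express $\psi$ as $\sum_{S\in\mcP}\binom{|S|}{2}-(m-\chi)$ and then bound $\sum_{S\in\mcP}|S|^2$ from below via Cauchy--Schwarz, with the equality case handled by direct substitution when all clusters have size $n/|\mcP|$. No gaps.
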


\begin{proof}
    We first prove the lower bound.
    We employ the Cauchy-Schwarz inequality,
    stating that for every \(\ell\in\N\), \(x_1,x_2,\ldots,x_\ell\), and \(y_1,y_2,\ldots,y_\ell\), we have
    $\left(\sum_{i=1}^\ell x_iy_i\right)^2\le \left(\sum_{i=1}^\ell x_i^2\right)\cdot \left(\sum_{i=1}^\ell y_i^2\right)$.
     In particular, it holds that
    $\left(\sum_{i=1}^\ell x_i\right)^2\le \ell\cdot \sum_{i=1}^\ell x_i^2$.
    Observe that we can write the intra-cluster cost \(\psi\) of \(\mcP\) as 
    \begin{align*}
        \psi &= \left(\sum_{S\in\mcP} \frac{|S|\cdot(|S|-1)}{2}\right)-(m-\chi)
        	= \frac{1}{2}\left(\sum_{S\in\mcP} |S|^2\right)-\left(\sum_{S\in\mcP}\frac{|S|}{2}\right)-m+\chi\\
        	&= \frac{1}{2}\left(\sum_{S\in\mcP} |S|^2\right)-\frac{n}{2}-m+\chi.
    \end{align*}
   By Cauchy-Schwarz, we have
    $\sum_{S\in\mcP} |S|^2 \ge \frac{1}{|\mcP|}\cdot \left(\sum_{S\in\mcP}|S|\right)^2 =  \frac{n^2}{|\mcP|}$.
   This bounds the intra-cluster cost from below by
   $\psi \ge \frac{n^2}{2|\mcP|}-\frac{n}{2}-m+\chi$.
   
   For the second statement, assume all clusters of \(\mcP\) to be of size \(\frac{n}{|\mcP|}\). Then, there are \(\frac{1}{2}\cdot \frac{n}{|\mcP|}\cdot \left(\frac{n}{|\mcP|}-1\right)\) pairs of vertices in each cluster. Thereby, we have
   \begin{equation*}
    \psi = |\mcP|\cdot \frac{1}{2}\cdot \frac{n}{|\mcP|}\cdot \left(\frac{n}{|\mcP|}-1\right)-(m-\chi)
    	= \frac{n^2}{2|\mcP|}-\frac{n}{2}-m+\chi. \qedhere
  \end{equation*}
\end{proof}

We further show that no clustering with clusters of unequal size achieves the lower bound given by \autoref{lem:clustersCauchyLowerBound}.

\begin{lemma}
\label{lem:clustersCauchyUnequalIsWorse}
    Let \(G=(V,E)\) be an \(n\)-vertex \(m\)-edge graph and \(\mcP\) a partition of \(V\) with an inter-cluster cost of \(\chi\) such that there is a cluster \(S\in \mcP\) with \(|S|=\frac{n}{|\mcP|}+a\) for some \(a\ge 0\).
    Then, the intra-cluster cost of \(\mcP\) is \(\psi \ge \frac{a^2|\mcP|}{2|\mcP|-2}+ \frac{n^2}{2|\mcP|}-\frac{n}{2}-m+\chi\).   
\end{lemma}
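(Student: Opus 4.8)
The plan is to reuse the exact identity for the intra-cluster cost already established inside the proof of \autoref{lem:clustersCauchyLowerBound}, namely that for any partition \(\mcP\) with inter-cluster cost \(\chi\) one has \(\psi = \frac12\bigl(\sum_{S\in\mcP}|S|^2\bigr) - \frac n2 - m + \chi\). Thus it suffices to show the purely combinatorial bound \(\sum_{S\in\mcP}|S|^2 \ge \frac{n^2}{|\mcP|} + \frac{a^2|\mcP|}{|\mcP|-1}\), and the claimed inequality for \(\psi\) follows by halving and subtracting \(\frac n2+m-\chi\).

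To prove the bound on the sum of squares, write \(t=|\mcP|\) and single out the large cluster \(S_0\) with \(|S_0| = \frac nt + a\). The remaining \(t-1\) clusters have sizes that sum to \(n - |S_0| = \frac{(t-1)n}{t} - a\). I would apply the Cauchy--Schwarz inequality (in the form \((\sum_{i=1}^{\ell} x_i)^2 \le \ell \sum_{i=1}^{\ell} x_i^2\), exactly as used in \autoref{lem:clustersCauchyLowerBound}) to just these \(t-1\) clusters, obtaining
\[
  \sum_{S\in\mcP, S\neq S_0}|S|^2 \;\ge\; \frac{1}{t-1}\left(\frac{(t-1)n}{t} - a\right)^{2}.
\]
Adding \(|S_0|^2 = \bigl(\frac nt + a\bigr)^2\) and expanding both squares, the cross terms \(\pm 2\cdot\frac nt\cdot a\) cancel and the \(\mu^2\)-terms (with \(\mu = n/t\)) combine to \(t\mu^2 = \frac{n^2}{t}\), while the \(a^2\)-terms combine to \(a^2 + \frac{a^2}{t-1} = \frac{a^2 t}{t-1}\). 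This yields \(\sum_{S\in\mcP}|S|^2 \ge \frac{n^2}{t} + \frac{a^2 t}{t-1}\), which is exactly what is needed.

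I do not anticipate a genuine obstacle here; the statement is essentially an exact computation once the right subset of clusters is fed into Cauchy--Schwarz. The only point requiring mild care is making sure the inequality is applied to the \(t-1\) small clusters rather than to all \(t\) clusters (applying it to all of them would only recover \autoref{lem:clustersCauchyLowerBound} and lose the extra \(\frac{a^2 t}{t-1}\) term), and keeping the bookkeeping of the cross terms straight so the cancellation is visible. One should also note in passing that the hypothesis only assumes \emph{some} cluster is at least \(\frac nt\) larger than the average, not that it is the unique largest one, so no further case distinction is necessary.
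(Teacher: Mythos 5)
Your proposal is correct and takes essentially the same route as the paper: both single out the oversized cluster, apply Cauchy--Schwarz to the remaining $|\mcP|-1$ clusters (the paper does this by invoking \autoref{lem:clustersCauchyLowerBound} on $\mcP\setminus\{S\}$), and recombine. Your bookkeeping via the identity $\psi=\tfrac12\sum_{S\in\mcP}|S|^2-\tfrac n2-m+\chi$ makes the cancellation of the cross terms more transparent than the paper's direct algebraic expansion, but the underlying argument is the same.
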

\begin{proof}
    If \(a=0\), the statement is implied by \autoref{lem:clustersCauchyLowerBound}. So, assume \(a>0\).
    We write the intra-cluster cost as    
   \begin{align*}
    \psi = \frac{1}{2}\cdot\left(\frac{n}{|\mcP|}+a\right)\cdot\left(\frac{n}{|\mcP|}+a-1\right)+ \psi_{\text{rest}}
\end{align*}
with \(\psi_{\text{rest}}\) being the intra-cluster cost incurred by \(\mcP \setminus \{S\}\). By applying \autoref{lem:clustersCauchyLowerBound} on \(\mcP \setminus \{S\}\), we have
\begin{align*}
    \psi &\ge \frac{1}{2}\cdot\left(\frac{n}{|\mcP|}+a\right)\cdot\left(\frac{n}{|\mcP|}+a-1\right)+ \frac{\left(n-(\frac{n}{|\mcP|}+a)\right)^2}{2(|\mcP|-1)}-\frac{n-(\frac{n}{|\mcP|}+a)}{2}-m+\chi\\
    &=\frac{n^2}{2|\mcP|^2}+\frac{an}{|\mcP|}+\frac{a^2}{2}-\frac{n}{2|\mcP|}-\frac{a}{2}+ \frac{n^2-2n^2/|\mcP|-2an+n^2/|\mcP|^2+2a\frac{n}{|\mcP|}+a^2}{2|\mcP|-2}\\
    &\hphantom{=}-\frac{n}{2}+\frac{n}{2|\mcP|}+\frac{a}{2}-m+\chi.
\end{align*}
Bringing the first summands to a common denominator of \(2|\mcP|-2\) yields 
\begin{align*}
    \psi &\ge \left(\frac{n^2(|\mcP|-1)}{|\mcP|^2} + \frac{an(2|\mcP|-2)}{|\mcP|} + a^2(|\mcP|-1)+n^2-\frac{2n^2}{|\mcP|}-2an+\frac{n^2}{|\mcP|^2}+\frac{2an}{|\mcP|}+a^2\right)\\
    &\hphantom{\ge}\hphantom{ } \big/ (2|\mcP|-2)-\frac{n}{2}-m+\chi\\
    &=\left(\frac{n^2|\mcP|}{|\mcP|^2} + \frac{2an|\mcP|}{|\mcP|} + a^2|\mcP|+n^2-\frac{2n^2}{|\mcP|}-2an\right) \big/ (2|\mcP|-2)-\frac{n}{2}-m+\chi\\
    &=\left(-\frac{n^2}{|\mcP|} + a^2|\mcP|+n^2\right) \big/ (2|\mcP|-2)-\frac{n}{2}-m+\chi.
\end{align*}
We then add \(0=-\frac{n^2}{2|\mcP|}\cdot \frac{2|P|-2}{2|P|-2}+\frac{n^2}{2|\mcP|}\) and obtain
\begin{align*}
    \psi&\ge \left(-\frac{n^2}{|\mcP|} + a^2|\mcP|+n^2-\frac{n^2(|\mcP|-1)}{|\mcP|}\right) \big/ (2|\mcP|-2)+\frac{n^2}{2|\mcP|}-\frac{n}{2}-m+\chi\\
    &=\frac{a^2|\mcP|}{2|\mcP|-2}+\frac{n^2}{2|\mcP|}-\frac{n}{2}-m+\chi. \qedhere
\end{align*}
\end{proof}

Observe that as \(|\mcP|>1\) and \(a\neq 0\) this means that such a clustering never achieves the lower bound given by \autoref{lem:clustersCauchyLowerBound}. 
In particular, this means that for fixed inter-cluster costs in minimum-cost \fcs in forests all clusters are of equal size. This way, we are able to transfer some hardness results obtained for exact fairness to relaxed fairness.

\begin{theorem}
\label{thm:forestHard_relaxed}
    For every choice of \(0<p_1\le \frac{1}{c+1}\le  q_1<1\) and \(0<p_2\le \frac{c}{c+1} \le q_2<1\), \emph{\rfcc} on forests with two colors in a ratio of \(1:c\) is \emph{\NP}-hard. It remains \emph{\NP}-hard when arbitrarily restricting the shape of the trees in the forest as long as for every \(a\in\N\) it is possible to form a tree with \(a\) vertices.
\end{theorem}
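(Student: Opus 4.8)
The plan is to reuse the reduction from \thrPart that underlies \autoref{thm:forest_hard} almost verbatim, and then add one extra argument showing that in the constructed instance every \emph{minimum-cost} relaxed fair clustering is already exactly fair; once this is in place, the analysis of \autoref{thm:forest_hard} applies unchanged. Given a \thrPart instance with numbers $a_1,\dots,a_{3p}$ and target $B$, I would build the same forest: one tree on $a_i$ red vertices for each $i$ (in whatever shape the restriction allows, which is possible because $a_i\ge 1$), together with $p$ isolated blue vertices. This forest has $N=p(B+1)$ vertices, $M=p(B-3)$ edges, and color ratio $1:B$; with $c=B$, the hypotheses $0<p_1\le \frac{1}{c+1}\le q_1<1$ and $0<p_2\le \frac{c}{c+1}\le q_2<1$ ensure that a cluster with exactly one blue and $B$ red vertices is relaxed fair. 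I would then prove that a relaxed fair clustering of cost at most $\frac{pB(B+1)}{2}-p(B-3)$ exists if and only if the \thrPart instance is a yes-instance.

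For the ``if'' direction nothing needs to change: partitioning the red trees into triples that each sum to $B$ and adding one blue vertex to each part gives an exactly (hence relaxed) fair clustering that cuts no edge, of cost exactly $\frac{pB(B+1)}{2}-p(B-3)$, precisely as in \autoref{thm:forest_hard}.

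The work is in the ``only if'' direction, and here I would invoke \autoref{lem:clustersCauchyLowerBound} and \autoref{lem:clustersCauchyUnequalIsWorse}. Let $\mcP$ be a relaxed fair clustering with inter-cluster cost $\chi$. Since $p_1>0$, every cluster contains at least one blue vertex, and there are only $p$ of them, so $|\mcP|\le p$. By \autoref{lem:clustersCauchyLowerBound}, $\cost{\mcP}=\chi+\psi\ge 2\chi+\frac{N^2}{2|\mcP|}-\frac{N}{2}-M$, and the right-hand side strictly decreases in $|\mcP|$; substituting $N=p(B+1)$ and $M=p(B-3)$, it equals $\frac{pB(B+1)}{2}-p(B-3)$ for $|\mcP|=p$ and is strictly larger for $|\mcP|<p$. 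Hence a clustering meeting the cost bound must have $|\mcP|=p$ (so each cluster has exactly one blue and thus $B$ red vertices), must have $\chi=0$, and must attain equality in \autoref{lem:clustersCauchyLowerBound}; by \autoref{lem:clustersCauchyUnequalIsWorse} this last point forces all clusters to have size $N/p=B+1$. With $\chi=0$ no red tree is split, so each cluster is a disjoint union of complete red trees totalling $B$ red vertices, and since every $a_i$ is strictly between $B/4$ and $B/2$, each cluster is made of exactly three red trees, giving the desired triple partition. Polynomial construction time and the obliviousness of the argument to the chosen tree shapes then give \NP-hardness of \rfcc in this regime, which is the claim.

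The main obstacle is this ``only if'' step: excluding the possibility that a cheaper solution comes from using fewer, larger, possibly unequal clusters that are only relaxed fair rather than exactly fair. This is exactly what the Cauchy--Schwarz estimates of \autoref{lem:clustersCauchyLowerBound} and \autoref{lem:clustersCauchyUnequalIsWorse} buy us, once one observes that the $p$ blue vertices already cap $|\mcP|$ at $p$; what remains is routine bookkeeping (the case $p=1$ is trivial, $N/p=B+1$ is an integer so the equality cases of both lemmas apply cleanly, and all relevant quantities are integral so the strict inequalities survive).
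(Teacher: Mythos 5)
Your proposal matches the paper's proof essentially verbatim: the same \thrPart reduction (red trees of size $a_i$ plus $p$ isolated blue vertices), the same cost threshold $p\cdot\frac{B(B+1)}{2}-p(B-3)$, and the same key step of combining the bound $|\mcP|\le p$ (from each cluster needing a blue vertex) with \autoref{lem:clustersCauchyLowerBound} and \autoref{lem:clustersCauchyUnequalIsWorse} to force $|\mcP|=p$, $\chi=0$, and equal cluster sizes, whence exact fairness. The argument is correct as stated.
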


\begin{proof}
    We reduce from \thrPart. Recall that there are \(3p\) values \(a_1,a_2,\ldots,a_{3p}\) and the task is to partition them in triplets that each sum to \(B\). 
    We construct a forest \(F\) as follows. For every \(a_i\) we construct an arbitrary tree of \(a_i\) red vertices. Further, we let there be \(p\) isolated blue vertices. Note that the ratio between blue and red vertices is \(1:B\). We now show that there is a \rfc \(\mcP\) such that 
    \begin{equation*}
        \cost{\mcP} \le p\cdot\frac{B(B+1)}{2}-p(B-3)
    \end{equation*}
     if and only if the given instance is a yes-instance for \thrPart.

    If we have a yes-instance of \thrPart, then there is a partition of the set of trees into \(p\) clusters of size \(B\). By assigning the blue vertices arbitrarily to one unique cluster each, we hence obtain an exactly fair partition, which is thus also relaxed fair. As there are no edges between the clusters and each cluster consists of \(B+1\) vertices and \(B-3\) edges, this partition has a cost of \(p\cdot\frac{B(B+1)}{2}-p(B-3)\).

    For the other direction, assume there is a \rfc \(\mcP\) such that  \(\cost{\mcP} \le p\cdot\frac{B(B+1)}{2}-p(B-3)\). We prove that this clustering has to be not just relaxed fair but exactly fair.
    Note that \(|V|=p(B+1)\) and \(|E|=p(B-3)\). As the inter-cluster cost \(\chi\) is non-negative, by \autoref{lem:clustersCauchyLowerBound} the intra-cluster cost has a lower bound of 
    \begin{align*}
        \psi \ge \frac{(p(B+1))^2}{2|\mcP|}-\frac{p(B+1)}{2}-p(B-3).
    \end{align*}
    As there are exactly \(p\) blue vertices and the relaxed fairness constraint requires putting at least one blue vertex in each cluster, we have \(|\mcP|\le p\). Hence, 
    \begin{equation*}
          \psi \ge \frac{p(B+1)^2}{2}-\frac{p(B+1)}{2}-p(B-3)
         	= p\cdot\frac{B(B+1)}{2}-p(B-3)
         	\ge \cost{\mcP}.
\end{equation*}
This implies that the inter-cluster cost of \(\mcP\) is 0 and \(|\mcP|=p\).  \autoref{lem:clustersCauchyUnequalIsWorse} then gives that all clusters in \(\mcP\) consist of exactly \(B+1\) vertices. As each of the \(p\) clusters has at least 1 blue vertex and there are \(p\) blue vertices in total, we know that each cluster consists of 1 blue and \(B\) red vertices.  
Since all trees are of size greater than \(\frac{B}{4}\) and less than \(\frac{B}{2}\), this implies each cluster consists of exactly one blue vertex and exactly three uncut trees with a total of \(B\) vertices. This way, such a clustering gives a solution to \thrPart, so our instance is a yes-instance.  

As the construction of the graph only takes polynomial time in the instance size, this implies our hardness result.
\end{proof}

Indeed, we note that we obtain our hardness result for any fairness constraint that allows the exactly fair solution and enforces at least 1 vertex of each color in every cluster. The same holds when transferring our hardness proof for trees of diameter 4.

\begin{theorem}
\label{thm:tree_relaxed_hard}
    For every choice of \(0<p_1\le \frac{1}{c+1}\le  q_1<1\) and \(0<p_2\le \frac{c}{c+1} \le q_2<1\), \emph{\rfcc} on trees with diameter 4 and two colors in a ratio of \(1:c\) is \NP-hard.
\end{theorem}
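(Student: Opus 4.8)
The plan is to re-use the diameter-$4$ ``octopus'' tree from the proof of \autoref{thm:tree_hard} essentially verbatim and then to argue that, on this instance, any \emph{relaxed} fair clustering of cost at most $\frac{pB^2-pB}{2}+7p-7$ must in fact be \emph{exactly} fair, so that the edge count of \autoref{thm:tree_hard} takes over. Since \thrPart is strongly \NP-hard, I first preprocess the input by scaling every $a_i$ (and hence $B$) by a polynomial factor, say $t=80p$; this gives an equivalent \thrPart instance in which $4\mid B$, $B\ge 80$, and $B^2>16(p-1)$. From such an instance I build the tree as in \autoref{thm:tree_hard}: $3p$ red stars of sizes $a_1,\dots,a_{3p}$, one blue star on $p$ vertices, and the blue centre $v$ joined to all red centres; then $n=p(B+1)$, $m=n-1$, the colour ratio is $1:B=1:c$, and the diameter is $4$. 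For the forward direction, a yes-instance yields the exactly fair clustering of \autoref{thm:tree_hard}, which has cost exactly $\frac{pB^2-pB}{2}+7p-7$ and, being exactly fair, is relaxed fair for every admissible $p_i,q_i$.

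For the converse, let $\mcP$ be relaxed fair with $\cost{\mcP}\le\frac{pB^2-pB}{2}+7p-7$. As $p_1>0$, every cluster contains one of the $p$ blue vertices, so $|\mcP|\le p$. I first exclude $|\mcP|<p$: by \autoref{lem:clustersCauchyLowerBound} (and $\chi\ge 0$), $\cost{\mcP}\ge\frac{n^2}{2|\mcP|}-\frac n2-m$, and for $|\mcP|\le p-1$ this exceeds the value $\frac{n^2}{2p}-\frac n2-m$ by at least $\frac{p(B+1)^2}{2(p-1)}\ge\frac{B^2}{2}>8p$, which overruns the threshold's slack of $8p-8$. Hence $|\mcP|=p$ and every cluster has exactly one blue vertex. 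Now I redo the bookkeeping of \autoref{thm:tree_hard} with the cluster sizes $\sigma_1,\dots,\sigma_p$ kept symbolic. Let $\gamma$ be the number of red \emph{centres} in $v$'s cluster, $\chi_r$ the number of cut red--red edges, and $\Delta:=\sum_i(\sigma_i-1-B)^2\ge 0$. The only within-cluster non-edges are blue--red ($pB-\gamma$ of them) and red--red ($\sum_i\binom{\sigma_i-1}{2}-(pB-3p-\chi_r)$), while $\chi=(p-1)+(3p-\gamma)+\chi_r$, and $\sum_i\binom{\sigma_i-1}{2}=\frac{pB^2-pB}{2}+\frac{\Delta}{2}$ (using $\sum_i(\sigma_i-1-B)=0$). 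Adding up gives the exact identity $\cost{\mcP}=\frac{pB^2-pB}{2}+\frac{\Delta}{2}+7p-1-2\gamma+2\chi_r$, so the cost bound is equivalent to $\gamma\ge\frac{\Delta}{4}+\chi_r+3$; in particular $\gamma\ge 3$, $\Delta\le 4(\gamma-3)$, and $\chi_r\le\gamma-3$.

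It remains to force $\gamma=3$ (hence $\Delta=0$, i.e. exact fairness, and $\chi_r=0$). The $\gamma$ stars whose centres lie in $v$'s cluster own more than $\gamma B/4$ vertices (each $a_i>B/4$), while that cluster holds only $\sigma_1-1\le B+2\sqrt{\gamma-3}$ red vertices; every surplus leaf cuts a red edge, so $\chi_r>B(\tfrac\gamma4-1)-2\sqrt{\gamma-3}$. For $\gamma\ge 5$ this contradicts $\chi_r\le\gamma-3$ once $B\ge 80$ (a short estimate bounds any $B$ compatible with $\gamma\ge5$ by $20$). For $\gamma=4$, the choice $4\mid B$ gives $a_i\ge B/4+1$, so those four stars own at least $B+4$ vertices whereas $v$'s cluster holds at most $B+2$ red vertices (since $\Delta\le 4$), forcing $\chi_r\ge 2>1=\gamma-3$. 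Hence $\gamma=3$, whence $\Delta=0$ and $\chi_r=0$: all clusters have size $B+1$ and no red edge is cut, so each red star lies in a single cluster and each cluster is a union of red stars totalling $B$ red vertices; as every star has more than $B/4$ vertices, each cluster is exactly three stars summing to $B$, a yes-certificate for \thrPart. The whole construction, scaling included, is polynomial and produces a diameter-$4$ tree, proving the theorem.

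The crux is exactly the ``minimum-cost relaxed fair solution is exactly fair'' step. In the edge-sparse forest of \autoref{thm:forestHard_relaxed} the threshold coincides with the Cauchy--Schwarz lower bound of \autoref{lem:clustersCauchyLowerBound}, so \autoref{lem:clustersCauchyUnequalIsWorse} alone kills unequal clusters; here the octopus has $n-1$ edges and the threshold carries an $\Theta(p)$ slack to absorb the $4p-4$ cuts of the intended solution, so a priori clusters could deviate from size $B+1$ by $\Theta(\sqrt p)$ and those deviations are not visible to Cauchy--Schwarz. The remedy is to turn \autoref{thm:tree_hard}'s cost computation into an \emph{identity} carrying the deviation parameter $\Delta$, and to dispose of the small values of the ``collected centres'' count $\gamma$ by a divisibility argument after scaling $B$ to be large --- which is legitimate precisely because \thrPart is strongly \NP-hard.
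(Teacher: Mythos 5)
Your proposal is correct and uses the same reduction and overall strategy as the paper's proof: the same diameter-$4$ octopus tree, the same cost threshold, first pinning $|\mcP|=p$ via the Cauchy--Schwarz bound of \autoref{lem:clustersCauchyLowerBound} (so each cluster has exactly one blue vertex), and then forcing the number $\gamma$ of red centres in $v$'s cluster to equal $3$, whence exact fairness and a \thrPart certificate follow. The difference is in how $\gamma<4$ is obtained. The paper tracks only the deviation $\delta$ of $v$'s cluster from size $B+1$, invokes \autoref{lem:clustersCauchyUnequalIsWorse} to tighten the budget $\chi\le 4p-4$ by $\frac{\delta^2 p}{4p-4}$, and closes the case $\delta\ge 1$ by an inequality chain; it needs only the precondition $B^2>16p$. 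You instead turn the cost computation into an exact identity carrying the global deviation $\Delta=\sum_i(\sigma_i-1-B)^2$ (the identity checks out), which yields $\gamma\ge\frac{\Delta}{4}+\chi_r+3$ directly, and you eliminate $\gamma=4$ by a divisibility argument ($4\mid B$) and $\gamma\ge 5$ by making $B$ large --- both legitimate via scaling, since \thrPart is strongly \NP-hard. Your identity is arguably more transparent (once $\gamma=3$, exact fairness $\Delta=0$ and $\chi_r=0$ drop out immediately), at the price of extra preprocessing; the paper's version avoids the scaling beyond $B^2>16p$ but needs the two-case analysis on $\delta$ and a second use of the unequal-cluster lemma. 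One cosmetic slip: you state the precondition $B^2>16(p-1)$ but then use $\frac{B^2}{2}>8p$, which needs $B^2>16p$; the required slack is only $8p-8$ and your scaling $t=80p$ satisfies either bound with huge margin, so nothing breaks.
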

\begin{proof}
    We reduce from \thrPart. We assume \(B^2>16p\). We can do so as we obtain an equivalent instance of \thrPart when multiplying all \(a_i\) and \(B\) by the same factor, here some value in $\bigO(p)$.
    For every \(a_i\) we construct a star of \(a_i\) red vertices. Further, we let there be a star of \(p\) blue vertices. We obtain a tree of diameter 4 by connecting the center \(v\) of the blue star to all the centers of the red stars. Note that the ratio between blue and red vertices is \(1:B\). 
    We now show that there is a \rfc \(\mcP\) such that 
    \begin{align*}
        \cost{\mcP} \le \frac{pB^2-pB}{2}+7p-7
    \end{align*}
     if and only if the given instance is a yes-instance for \thrPart. 

    If we have a yes-instance of \thrPart, then there is a partition of the set of stars into \(p\) clusters of size \(B\), each consisting of three stars. By assigning the blue vertices arbitrarily to one unique cluster each, we hence obtain an exact fair partition, which is thus also relaxed fair.
    We first compute the inter-cluster cost. We call an edge \emph{blue} or \emph{red} if it connects two blue or red vertices, respectively. We call an edge \emph{blue-red} if it connects a blue and a red vertex. All \(p-1\) blue edges are cut. Further, all edges between \(v\) (the center of the blue star) and red vertices are cut except for the three stars to which \(v\) is assigned. This causes \(3p-3\) more cuts, so the inter-cluster cost is $\chi = 4p-4$.
    Each cluster consists of \(B+1\) vertices and \(B-3\) edges, except for the one containing \(v\) which has \(B\) edges. The intra-cluster cost is
    \begin{equation*}
        \psi = p \left(\frac{B(B+1)}{2}-B+3\right)-3 = \frac{pB^2-pB}{2}+3p-3.
    \end{equation*}
     Combining the intra- and inter-cluster costs yields the desired cost of 
    \begin{equation*}
        \cost{\mcP} = \chi + \psi
        	=\frac{pB^2-pB}{2}+7p-7.
    \end{equation*}

    For the other direction, assume there is a \rfc \(\mcP\) such that \(\cost{\mcP} \le \frac{pB^2-pB}{2}+7p-7\).
    We prove that this clustering is not just relaxed fair but exactly fair. 
    
    To this end, we first show \(|\mcP|= p\). 
    Because each cluster requires one of the \(p\) blue vertices, we have \(|\mcP| \le p\).
    Now, let \(\chi\) denote the inter-cluster cost of \(\mcP\). Note that \(|V|=p(B+1)\) and \(|E|=p(B-3)+3p+p-1=p(B+1)-1\). Then, by \autoref{lem:clustersCauchyLowerBound}, we have
    \begin{align}
         \psi &\ge\frac{\left(p(B+1)\right)^2}{2|\mcP|}-\frac{p(B+1)}{2}-\left(p(B+1)-1\right)+\chi\nonumber\\
        &= \frac{p^2B^2+2p^2B+p^2}{2|\mcP|}-\frac{3p(B+1)}{2}+1+\chi.\label{eq:relTreeHardness_lowerBound}
    \end{align}
    Note that the lower bound is decreasing in \(|\mcP|\). If we had \(|\mcP|\le p-1\), then 
    \begin{align*}
       \psi \ge \frac{p^2B^2+2p^2B+p^2}{2(p-1)}-\frac{3p(B+1)}{2}+1+\chi.
    \end{align*}
    As the inter-cluster cost \(\chi\) is non-negative, we would thereby get 
    \begin{align*}
        \cost{\mcP} &\ge \frac{p^2B^2+2p^2B+p^2}{2(p-1)}-\frac{3p(B+1)}{2}+1+\chi\\
        &\ge \frac{p^2B^2+2p^2B+p^2}{2(p-1)}-\frac{3p^2B-3pB+3p^2-3p}{2(p-1)}+\frac{2p-2}{2(p-1)}\\
        &\ge \frac{p^2B^2-p^2B-2p^2+3pB+5p-2}{2(p-1)}.
    \end{align*}
    However, we know
    \begin{align*}
    \cost{\mcP} &\le \frac{pB^2-pB}{2}+7p-7\\
        &= \frac{p^2B^2-pB^2-p^2B+pB+14p^2-14p-14p+14}{2(p-1)}\\
        &= \frac{p^2B^2-pB^2-p^2B+pB+14p^2-28p+14}{2(p-1)}.
    \end{align*}
    
    Hence, \(|\mcP|\le p-1\) holds only if 
    $-2p^2+3pB+5p-2 \le -pB^2 +pB + 14p^2 -28p +14$
     which is equivalent to 
    $pB^2-16p^2+2pB+33p-16 \le 0$.
    As we assume \(B^2 > 16p\), this is always false, so \(|\mcP|=p\).
    Plugging this into \autoref{eq:relTreeHardness_lowerBound} yields
    \begin{equation*}
        \psi \ge \frac{pB^2+2pB+p}{2}-\frac{3p(B+1)}{2}+1+\chi
     		= \frac{pB^2-pB}{2}-p+1+\chi.
   \end{equation*}
    As \(\cost{\mcP} = \chi + \psi\), we have 
    \begin{align}
        \frac{pB^2-pB}{2}-p+1+2\chi \le \cost{\mcP}\le \frac{pB^2-pB}{2}+7p-7,\label{eq:relTreeHardness_lowerBoundMatch}
    \end{align}
    which yields $\chi \le 4p-4$.
    
     As no two blue vertices are placed in the same cluster, the cuts between blue vertices incur an inter-cluster cost of exactly \(p-1\).
    To estimate the number of cut blue-red edges, let \(a\) denote the number of red center vertices placed in the cluster of the blue center vertex \(v\). Then, there are \(3p-a\) of the \(3p\) red edges cut.
    Let \(\chi_r\) denote the number of cut red edges. Note that
    $\chi = p-1 + 3p-a + \chi_r = 4p -a -1 + \chi_r$.
   
    We prove that \(a=3\). As \(\chi \le 4p-4\) we have \(\chi_r - a \le -3\), whence \(a\ge 3\). 
    Next, we bound \(\chi_r\) by \(a\). Let \(\delta\in\Z\) be such that \(B+\delta\) is the number of red vertices in the cluster containing the blue center vertex \(v\). Then,
    \begin{equation*}
        \chi_r \ge \frac{aB}{4}-(B+\delta-a)
        	= \frac{(a-4)B}{4}-\delta+a
    \end{equation*}
    as each red center vertex is connected to at least \(\frac{B}{4}\) red leaves but in the cluster of \(v\) there is only space for \(B+\delta-a\) of them. First, assume \(\delta\le 0\). This implies 
    $\chi_r - a \ge \frac{(a-4)B}{4}$.
    As we required \(\chi_r -a \le -3\), this gives \(a < 4\), as desired.
    
    The case \(\delta \ge 1\) is a bit more involved.
    From \autoref{lem:clustersCauchyUnequalIsWorse},
    \(p=|\mcP|\), and \(m=n-1=p(B+1)-1\), we get
    \begin{equation*}
        \psi \ge \frac{\delta^2|\mcP|}{2|\mcP|-2}+\frac{\left(p(B+1)\right)^2}{2|\mcP|}-\frac{p(B+1)}{2}-m+\chi
        =\frac{\delta^2p}{2p-2}+\frac{pB^2+2pB+p}{2}-\frac{3p(B+1)}{2}+ \chi + 1.
    \end{equation*}
    This yields 
    \begin{align*}
        \frac{\delta^2p}{2p-2}+\frac{pB^2-pB}{2}-p+2\chi +1 \le \cost{\mcP}\le \frac{pB^2-pB}{2}+7p-7.
    \end{align*}
    We derive from this inequality that \(\chi \le 4p-4 - \frac{\delta^2p}{4p-4}\) and 
    $\chi_r - a \le -3 -\frac{\delta^2p}{4p-4}$
    implying
    \begin{equation*}
        \frac{(a-4)B}{4}-\delta \le -3 -\frac{\delta^2p}{4p-4}
    \end{equation*}
    The right-hand side is decreasing in \(\delta\),
    and by plugging in the minimum value for the case \(\delta \ge 1\), we finally get
    $\frac{(a-4)B}{4} \le -2 -\frac{p}{4p-4}$.
    This shows that \(a < 4\) must hold here as well.
    
    Thus, we have proven \(a=3\), which also gives \(\chi_r = 0\) and \(\chi = 4p-4\). 
    So, not only do we have that \(\cost{\mcP}\le \frac{pB^2-pB}{2}+7p-7\) but \(\cost{\mcP}= \frac{pB^2-pB}{2}+7p-7\). 
    In \autoref{eq:relTreeHardness_lowerBoundMatch} we see that for \(\chi = 4p-4\) this hits exactly the lower bound established by \autoref{lem:clustersCauchyLowerBound}. Hence, by \autoref{lem:clustersCauchyUnequalIsWorse}, this implies that all clusters consist of exactly 1 blue and \(B\) red vertices and the clustering is exactly fair.
    
    As \(\chi_r = 0\), all red stars are complete. Given that every red star is of size at least \(\frac{B}{4}\) and at most \(\frac{B}{2}\), this means each cluster consists of exactly three complete red stars with a total number of \(B\) red vertices each and hence yields a solution to the \thrPart instance.
    As the construction of the graph only takes polynomial time in the instance size and the constructed tree is of diameter 4, this implies our hardness result.  
\end{proof}

In the hardness proofs in this section, we argued that for the constructed instances clusterings that are relaxed fair, but not exactly fair would have a higher cost than exactly fair ones. However, this is not generally true. It does not even hold when limited to paths and two colors in a \(1:1\) ratio, as illustrated in \autoref{fig:advantageRelaxedFairness}.

\begin{figure}
    \begin{center}
        \includegraphics[width=.5\textwidth]{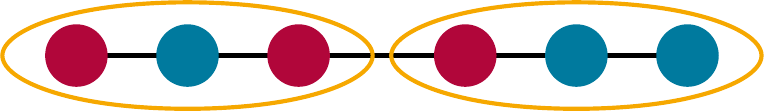}
    \end{center}
    \caption{Exemplary path with a color ratio of \(1:1\) where there is a \(\frac{2}{3}\)-\rfc of cost 3 (marked by the orange lines) and the cheapest exactly \fc costs 4.}
    \label{fig:advantageRelaxedFairness}
\end{figure}

Because of this, we have little hope to provide a general scheme that transforms all our hardness proofs from \autoref{sec:hardness} to the relaxed fairness setting at once. Thus, we have to individually prove the hardness results in this setting as done for \Cref{thm:forestHard_relaxed,thm:tree_relaxed_hard}. We are optimistic that the other hardness results still hold in this setting, especially as the construction for \autoref{thm:treeDeg5NPhard} is similar to the ones employed in this section. We leave the task of transferring these results to future work.

\subsection{Algorithms for Relaxed Fairness}
\label{sec:relaxed_algorithms}
 
We are also able to transfer the algorithmic result of \autoref{thm:forestByColorsAlgo} to a specific \(\alpha\)-relaxed fairness setting. We exploit that the algorithm does not really depend on exact fairness but on the fact that there is an upper bound on the cluster size, which allows us to compute respective splittings. In the following, we show that such upper bounds also exist for \(\alpha\)-relaxed fairness with two colors in a ratio of \(1:1\) and adapt the algorithm accordingly.
To compute the upper bound, we first prove \autoref{lem:alwaysGoodSplit}, which analogously to \autoref{lem:smallClustersForest} bounds the size of clusters but in uncolored forests. Using this lemma, with \autoref{lem:smallClustersRelaxedOneOne}, we then prove an upper bound on the cluster size in minimum-cost \arfcs for forests with two colors in ratio \(1:1\).

\begin{lemma}
\label{lem:alwaysGoodSplit}
    Let \(F=(V,E)\) be an \(n\)-vertex \(m\)-edge forest and let \(\mcP_1= \{V\}\). Further, let \(S\subset V\) with \(4 < |S| \le n-3\) and let \(\mcP_2 = \{S, V\setminus S\}\). Then, \(\cost{\mcP_1} > \cost{\mcP_2}\). 
\end{lemma}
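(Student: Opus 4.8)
The plan is to compute both clustering costs directly and show their difference is positive by a short combinatorial estimate. Since $\mathcal{P}_1=\{V\}$ places everything in one cluster, its cost is exactly the number of non-edges of $F$, namely $\binom{n}{2}-m$. For $\mathcal{P}_2=\{S,V\setminus S\}$, I would split the edges of $F$ into the $m_S$ edges inside $S$, the $m_T$ edges inside $V\setminus S$, and the $m_{ST}$ edges crossing the cut, so that $m=m_S+m_T+m_{ST}$; then the cost of $\mathcal{P}_2$ is $\bigl(\binom{s}{2}-m_S\bigr)+\bigl(\binom{t}{2}-m_T\bigr)+m_{ST}$, where $s=|S|$ and $t=|V\setminus S|=n-s$. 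Subtracting, and using the identity $\binom{n}{2}-\binom{s}{2}-\binom{t}{2}=st$ (valid because $n=s+t$) together with $m-m_S-m_T=m_{ST}$, everything collapses to $\cost{\mcP_1}-\cost{\mcP_2}=st-2m_{ST}$.

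It then remains to show $st>2m_{ST}$. Here I would invoke the only structural hypothesis available: $F$ is a forest, so $m\le n-1$, and in particular $m_{ST}\le m\le s+t-1$. Thus it suffices to verify $st>2(s+t-1)$, i.e.\ $(s-2)(t-2)>2$. The hypotheses $4<|S|\le n-3$ translate precisely into $s\ge 5$ and $t\ge 3$, so $(s-2)(t-2)\ge 3\cdot 1=3>2$, which closes the argument.

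There is no serious obstacle; the whole lemma is an exercise in careful bookkeeping, entirely in the spirit of \autoref{lem:costByCuts} and the splitting arguments in \autoref{lem:smallClustersForest}. The one point worth stating cleanly is why the reduction to $(s-2)(t-2)>2$ is tight enough: the slack is small (the bound would fail, e.g., for a path with $s=4$, $t=3$, which is exactly why the hypothesis excludes $|S|\le 4$ and $|S|>n-3$), so one must use the forest bound $m_{ST}\le n-1$ rather than a cruder estimate and must keep track of the exact ranges of $s$ and $t$. No case distinction beyond plugging in the extreme values $s=5$, $t=3$ is needed, since $(s-2)(t-2)$ is monotone in both arguments over the admissible range.
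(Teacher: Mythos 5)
Your proposal is correct and follows essentially the same route as the paper: both reduce the claim to the inequality $|S|\cdot|V\setminus S| > 2(n-1)$ by using the forest bound $m\le n-1$ on the cut edges, and then verify it from $|S|\ge 5$ and $|V\setminus S|\ge 3$. Your formulation via the exact identity $\cost{\mcP_1}-\cost{\mcP_2}=st-2m_{ST}$ is a slightly cleaner bookkeeping of the same argument.
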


\begin{proof}
    We have \(\cost{\mcP_1}=\frac{n(n-1)}{2} - m\) as there are \(\frac{n(n-1)}{2}\) pairs of vertices and \(m\) edges, none of which is cut by \(\mcP_1\).
    In the worst case, \(\mcP_2\) cuts all of the at most \(n-1\) edges in the forest. It has one cluster of size \(|S|\) and one of size \(n-|S|\), so 
    \begin{align*}
        \cost{\mcP_2}&\le n-1 + \frac{(n-|S|)(n-|S|-1)}{2} + \frac{|S|(|S|-1)}{2} - (m - n-1)\\   
        &= \frac{n(n-1)}{2} + \frac{-2n|S|+|S|^2+|S|}{2} + \frac{|S|^2-|S|}{2} - m +2n -2\\
        &= \frac{n(n-1)}{2} - n|S|+|S|^2 - m +2n -2.
    \end{align*}
    Then, we have 
    \begin{align*}
        \cost{\mcP_1}-\cost{\mcP_2} &\ge n|S|-|S|^2 -2n +2 \ge (|S|-2)n-|S|^2+2.
    \end{align*}
    Note that the bound is increasing in \(n\). As we have, \(n \ge |S|+3\) and \(|S|>4\), this gives 
    \begin{equation*}
        \cost{\mcP_1}-\cost{\mcP_2} \ge (|S|-2)(|S|+3)-|S|^2+2 = |S|-4 > 0. \qedhere
    \end{equation*}
\end{proof}

With the knowledge of when it is cheaper to split a cluster, we now prove that also for \arfcc there is an upper bound on the cluster size in minimum-cost solutions in forests. The idea is to assume a cluster of a certain size and then argue that we can split it in a way that reduces the cost and keeps \(\alpha\)-relaxed fairness.

\begin{lemma}
\label{lem:smallClustersRelaxedOneOne}
    Let \(F\) be a forest with two colors in a ratio of \(1:1\). Let \(0 <\alpha < 1\) and let \(\hat{\alpha}\in\N\) be minimal such that \(\frac{2\hat{\alpha}}{\alpha}\in\N\) and \(\frac{2\hat{\alpha}}{\alpha} > 4\). Then, if \(\mcP\) is a minimum-cost \arfc on \(F\), we have \(|S| < 4\frac{\hat{\alpha}}{\alpha^2}\) for all \(S\in \mcP\).   
\end{lemma}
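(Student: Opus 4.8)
The plan is to mimic the proof of \autoref{lem:smallClustersForest}: show that a cluster that is too large in a minimum-cost solution can be split into two smaller $\alpha$-relaxed fair pieces while strictly decreasing the cost, using \autoref{lem:alwaysGoodSplit} for the cost comparison. Write $d=\frac{2\hat\alpha}{\alpha}$; by the choice of $\hat\alpha$ this is an integer with $d\ge 5$, and $\hat\alpha=\frac{\alpha}{2}d$ holds \emph{exactly}, so both $\hat\alpha$ and $d-\hat\alpha$ are integers, and since $\alpha<1$ we have $d>2\hat\alpha$, hence $d-\hat\alpha\ge\hat\alpha\ge 1$. The first step is to restate $\alpha$-relaxed fairness for two colors in ratio $1:1$ purely in terms of counts: a set with $r$ red and $b$ blue vertices is $\alpha$-relaxed fair if and only if $r\ge\frac{\alpha}{2}(r+b)$ and $b\ge\frac{\alpha}{2}(r+b)$. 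Indeed here $p_i=\frac{\alpha}{2}$ and $q_i=\frac{1}{2\alpha}$, and the two upper bounds are implied by the two lower bounds because $1-\frac{\alpha}{2}\le\frac{1}{2\alpha}$ is equivalent to $(1-\alpha)^2\ge 0$; in particular an exactly fair set is $\alpha$-relaxed fair.

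Next I would argue by contradiction. Suppose $\mcP$ is a minimum-cost \arfc of $F$ containing a cluster $S$ with $s:=|S|\ge\frac{4\hat\alpha}{\alpha^2}$; note $\frac{4\hat\alpha}{\alpha^2}\ge\frac{4\hat\alpha}{\alpha}=2d$, so $s\ge 2d$. Let $S$ have $r$ red and $b$ blue vertices, so $r,b\ge\frac{\alpha}{2}s$ by relaxed fairness of $S$. Assume w.l.o.g.\ $r\ge b$, hence $r\ge\frac{s}{2}$ and therefore $r\ge\frac{\alpha}{2}s+(1-\alpha)d$ — this is where $s\ge 2d$ enters, via $\frac{s}{2}-\frac{\alpha}{2}s=\frac{(1-\alpha)s}{2}\ge(1-\alpha)d$. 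Now let $T\subseteq S$ consist of $d-\hat\alpha$ red and $\hat\alpha$ blue vertices, which is possible since $r\ge d>d-\hat\alpha$ and $b\ge\frac{\alpha}{2}s\ge\alpha d\ge\hat\alpha$; then $|T|=d$ and $|S\setminus T|=s-d$. The set $T$ is $\alpha$-relaxed fair because $d-\hat\alpha\ge\hat\alpha=\frac{\alpha}{2}d$ and $\hat\alpha=\frac{\alpha}{2}d$. The set $S\setminus T$, with $r-(d-\hat\alpha)$ red and $b-\hat\alpha$ blue vertices, is $\alpha$-relaxed fair as well: using $\frac{\alpha}{2}d=\hat\alpha$ one gets $(d-\hat\alpha)+\frac{\alpha}{2}(s-d)=\frac{\alpha}{2}s+(1-\alpha)d\le r$ and $\hat\alpha+\frac{\alpha}{2}(s-d)=\frac{\alpha}{2}s\le b$, which are exactly the two count conditions for $S\setminus T$.

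Finally, replacing $S$ by $T$ and $S\setminus T$ yields another \arfc $\mcP'$ of $F$, and the cost changes only among pairs with both endpoints in $S$; since $F[S]$ is a forest on $s$ vertices with $4<|T|=d$ and $d\le s-3$ (as $s\ge 2d\ge d+5$), \autoref{lem:alwaysGoodSplit} applied to $F[S]$ gives $\cost{F[S],\{S\}}>\cost{F[S],\{T,S\setminus T\}}$, whence $\cost{\mcP}>\cost{\mcP'}$, contradicting minimality. Thus every cluster of a minimum-cost \arfc has fewer than $2d=\frac{4\hat\alpha}{\alpha}<\frac{4\hat\alpha}{\alpha^2}$ vertices, which is the claim. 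The one delicate point — and the reason the statement is phrased through $\hat\alpha$ and the integer $d=\frac{2\hat\alpha}{\alpha}$ — is integrality: we must exhibit a subset of $S$ with prescribed numbers of red and blue vertices that is relaxed fair \emph{and} has relaxed fair complement, and choosing $T$ to sit exactly on the fairness boundary ($\hat\alpha$ vertices of the minority color) is what keeps both counts integral, while the factor-$2$ slack in the hypothesis $s\ge 2d$ absorbs the resulting tightness on the complement side. I expect that carefully checking this integrality/boundary bookkeeping, rather than any conceptual difficulty, is the crux; the intermediate regime $d+3\le s<2d$ need not be analyzed at all, since the bound claimed in the lemma leaves enough room to ignore it.
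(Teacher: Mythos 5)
Your proof is correct and follows essentially the same route as the paper's: assume a cluster of size at least $4\hat{\alpha}/\alpha^2$, split off a boundary-fair piece of size $d=\tfrac{2\hat{\alpha}}{\alpha}$ containing exactly $\hat{\alpha}$ vertices of the minority color, invoke \autoref{lem:alwaysGoodSplit} on $F[S]$ for the strict cost decrease, and verify $\alpha$-relaxed fairness of both pieces. Your observation that for two colors in ratio $1:1$ the lower fairness bounds imply the upper ones (via $(1-\alpha)^2\ge 0$) neatly shortens the fairness verification that the paper carries out by explicit case analysis, but the underlying argument is the same.
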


\begin{proof}
    Assume otherwise, i.e., there is a cluster \(S\) with \(|S| \ge 4\frac{\hat{\alpha}}{\alpha^2}\). 
    Let \(b\) and \(r\) denote the number of blue and red vertices in \(S\), respectively, and assume w.l.o.g. that \(b \le r\). 
    Because \(|S|\ge 4\frac{\hat{\alpha}}{\alpha^2}\) we have \(\frac{\alpha}{2}\ge \frac{2\hat{\alpha}}{\alpha |S|}\).
    Due to the \(\alpha\)-relaxed fairness constraint, this yields \(\frac{b}{|S|}\ge \frac{2\hat{\alpha}}{\alpha |S|}\) and thereby \(r \ge b\ge \frac{2\hat{\alpha}}{\alpha}\).

    Then, consider the clustering obtained by splitting off \(\hat{\alpha}\) blue and \(\frac{2\hat{\alpha}}{\alpha}-\hat{\alpha}\) red vertices of from \(S\) into a new cluster \(S_1\) and let \(S_2 = S\setminus S_1\). Note that we choose \(\hat{\alpha}\) in a way that this is possible, i.e., that both sizes are natural numbers.
    As the cost induced by all edges with at most one endpoint in \(S\) remains the same and the cost induced by the edges with both endpoints in \(S\) decreases, as shown in \autoref{lem:alwaysGoodSplit}, the new clustering is cheaper than \(\mcP\). 
    As we now prove that the new clustering is also \(\alpha\)-relaxed Fair, this contradicts the optimality of \(\mcP\).

    We first prove the \(\alpha\)-relaxed fairness of \(S_1\).
    Regarding the blue vertices, we have a portion of \(\frac{\hat{\alpha}}{\hat{\alpha}+\frac{2\hat{\alpha}}{\alpha}-\hat{\alpha}} = \frac{\alpha}{2}\) in \(S_1\), which fits the \(\alpha\)-relaxed fairness constraint.
    Regarding the red vertices, we have \(\frac{\frac{2\hat{\alpha}}{\alpha}-\hat{\alpha}}{\hat{\alpha}+\frac{2\hat{\alpha}}{\alpha}-\hat{\alpha}} = 1-\frac{\alpha}{2}\), which fits the \(\alpha\)-relaxed fairness constraint as \(0<\alpha<1\), so \(1-\frac{\alpha}{2} \ge \frac{\alpha}{2}\) and \(1-\frac{\alpha}{2}=\frac{2\alpha-\alpha^2}{2\alpha}\le \frac{1}{2\alpha}\).

    Now we prove the \(\alpha\)-relaxed fairness of \(S_2\). 
    The portion of blue vertices in \(S_2\) is \(\frac{b-\hat{\alpha}}{r+b-\frac{2\hat{\alpha}}{\alpha}}\), so we have to show that this value lays between \(\frac{\alpha}{2}\) and \(\frac{1}{2\alpha}\). 
    We start with showing the value is at least \(\frac{\alpha}{2}\) by proving \(\frac{\alpha}{2}\cdot \left(r+b-\frac{2\hat{\alpha}}{\alpha}\right) \le b-\hat{\alpha}\).
    As \(S\) is \(\alpha\)-relaxed fair, we have \(r\le \frac{2b}{\alpha}-b\) because otherwise \(\frac{b}{b+r} < \frac{b}{b+\frac{2b}{\alpha}-b} = \frac{\alpha}{2}\).
    Hence, we have 
    \begin{align*}
        \frac{\alpha}{2}\cdot \left(r+b-\frac{2\hat{\alpha}}{\alpha}\right)
        \le \frac{\alpha}{2}\cdot \left(\frac{2b}{\alpha}-b+b-\frac{2\hat{\alpha}}{\alpha}\right)
        = b-\hat{\alpha}.
    \end{align*}
    Similarly, we show the ratio is at most \(\frac{1}{2\alpha}\) by proving the equivalent statement of \(2\alpha (b-\hat{\alpha}) \le r+b-\frac{2\hat{\alpha}}{\alpha}\). As we assume \(r\ge b\), we have
    \begin{align*}
        r+b-\frac{2\hat{\alpha}}{\alpha}
        \ge 2b - \frac{2\hat{\alpha}}{\alpha} 
        \ge 2\left(b - \frac{\hat{\alpha}}{\alpha} - \left((1-\alpha)b + (\alpha^2-1)\frac{\hat{\alpha}}{\alpha}\right)\right)
        = 2\alpha\left(b - \hat{\alpha}\right).
    \end{align*}
    The second step holds because we assumed \(b\ge \frac{2\hat{\alpha}}{\alpha} \ge \frac{\alpha\hat{\alpha}+\hat{\alpha}}{\alpha} = \frac{\frac{\hat{\alpha}}{\alpha}-\alpha\hat{\alpha}}{1-\alpha}\), so we have
    \((1-\alpha)b + (\alpha^2-1)\frac{\hat{\alpha}}{\alpha} \ge 0\). 
    Now, we regard the portion of red vertices in \(S_2\), which is \(\frac{r-\left(\frac{2\hat{\alpha}}{\alpha}-\hat{\alpha}\right)}{r+b-\frac{2\hat{\alpha}}{\alpha}}\). 
    We know that  $r \ge \frac{2\hat{\alpha}}{\alpha}$, that is, 
    $(1-\alpha)r \ge \frac{2\hat{\alpha}}{\alpha}-2\hat{\alpha}$ or, in other words,
    $r-\left(\frac{2\hat{\alpha}}{\alpha}+\hat{\alpha}\right) \ge \alpha r - \hat{\alpha}$.
   	As \(r\ge b\), this implies
   	\begin{equation*}
   		r-\left(\frac{2\hat{\alpha}}{\alpha}+\hat{\alpha}\right) \ge \frac{\alpha}{2} \cdot \left(r+b - \frac{2\hat{\alpha}}{\alpha}\right)
   	\end{equation*}
    and therefore
    $\frac{r-\left(\frac{2\hat{\alpha}}{\alpha}-\hat{\alpha}\right)}{r+b-\frac{2\hat{\alpha}}{\alpha}}  \ge  \frac{\alpha}{2}$.
    
    It remains to prove that this ratio is also at most \(\frac{1}{2\alpha}\).
    We have  $r \ge  \frac{2\hat{\alpha}}{\alpha}-\hat{\alpha}$, which is equivalent to
    \begin{equation*}
        \left(2\alpha -1 - \frac{\alpha}{2-\alpha}\right)r \le 4\hat{\alpha} - 2\alpha\hat{\alpha}-\frac{2\hat{\alpha}}{\alpha}.
    \end{equation*}
     Note that \(2\alpha-1-\frac{\alpha}{2-\alpha} = - \frac{2\alpha^2-4\alpha+2}{2-\alpha} = - \frac{2(\alpha-1)^2}{2-\alpha}< 0\) and that \(r\le \frac{2b}{\alpha}-b\) gives \(b \ge \frac{r}{\frac{2}{\alpha}-1}=\frac{\alpha r}{2-\alpha}\). With this, the above inequality implies
    \begin{equation*}
        (2\alpha -1)r - b \le 4\hat{\alpha} - 2\alpha\hat{\alpha}-\frac{2\hat{\alpha}}{\alpha}
    \end{equation*}
    From this, we finally arrive at
        $2\alpha\cdot \left(r-\left(\frac{2\hat{\alpha}}{\alpha}-\hat{\alpha}\right)\right) \le  r+b-\frac{2\hat{\alpha}}{\alpha}$,
    that is,
    $\frac{r-\left(\frac{2\hat{\alpha}}{\alpha}-\hat{\alpha}\right)}{r+b-\frac{2\hat{\alpha}}{\alpha}} \le \frac{1}{2\alpha}$.
    
   This proves that both \(S_1\) and \(S_2\) are \(\alpha\)-relaxed fair. As splitting \(S\) into \(S_1\) and \(S_2\) remains \(\alpha\)-relaxed fair and is cheaper, this contradicts \(S\) being in a minimum-cost \(\alpha\)-relaxed fair clustering.
\end{proof}

We are now able to adapt the algorithm presented in \autoref{subsec:forestTracing} to solve \rfcc on forests with two colors in a ratio of \(1:1\). While the original algorithm exploited that any optimum solution has fair clusters of minimum size, with \autoref{lem:smallClustersRelaxedOneOne} we are able to bound the clusters also in the \(\alpha\)-relaxed setting. 

Like the original algorithm, we first create a list of possible splittings. However, these splittings can contain not only components with one or two vertices, as we know would suffice for the exact fairness with two colors in a \(1:1\) ratio, but each component may contain up to \(4\frac{\hat{\alpha}}{\alpha^2}\) vertices with \(\hat{\alpha}\) being the smallest natural number such that \(\frac{2\hat{\alpha}}{\alpha}\in\N\) and \(\frac{2\hat{\alpha}}{\alpha} > 4\) as defined in \autoref{lem:smallClustersRelaxedOneOne}. 
In the following, we set \(d=4\frac{\hat{\alpha}}{\alpha^2}\) to refer to this maximum size of a cluster.
In the second phase, it checks which of these splitting can be merged into an \(\alpha\)-relaxed fair clustering and among these returns the one of minimum cost. 

\paragraph*{Splitting the forest.}
To get the optimal way to obtain a splitting of each possible coloring, we simply apply \autoref{lem:splittingForests} and set \(d_1 = d_2 = d\) as we know the optimum solution has to be among clusters with no more than \(d\) vertices of either color. This phase takes time in 
\(\bigO(n^{2(d+1)^2+2d+2}\cdot \left((d+1)^2\right)^{2d}) 
= \bigO(n^{2d^2+6d+4}\cdot (d+1)^{4d})\).  

\paragraph*{Assembling a fair clustering.} In the second phase, we have to find a splitting in \(D_r^\emptyset\) that can be transformed into an \(\alpha\)-relaxed fair clustering and yields the minimum \cccost. As we tracked the minimum inter-cluster cost for each possible partition coloring of splittings in the first phase, we do not have to consider cutting more edges in this phase, because for the resulting splittings coloring we already have tracked a minimum inter-cluster cost. Hence, the only questions are whether a splitting is \emph{assemblable}, i.e., whether its components can be merged such that it becomes an \(\alpha\)-relaxed fair clustering, and, if so, what the cheapest way to do so is. 

Regarding the first question, observe that the \emph{assemblability} only depends on the partition coloring of the splitting. Hence, it does not hurt that in the first phase we tracked only all possible partition colorings of splittings and not all possible splittings themselves. 
First, note that the coloring of a splitting may itself yield an \(\alpha\)-relaxed fair clustering. We mark all such partition colorings as assemblable, taking time in $\bigO(n^{d^2+1})$. For the remaining partition colorings, we employ the following dynamic program. 

Recall that the size of a partition coloring refers to the number of set colorings it contains (not necessarily the number of different set colorings). We decide assemblability for all possible partition colorings from smallest to largest.  
Note that each partition coloring is of size at least \(\lceil \frac{n}{d} \rceil\). If it is of size exactly \(\lceil\frac{n}{d}\rceil\), then there are no two set colorings that can be merged and still be of size at most \(d\), as all other set colorings are of size at most \(d\). Hence, in this case, a splitting is assemblable if and only if it is already an \(\alpha\)-relaxed fair clustering so we have already marked the partition colorings correctly.
Now, assume that we decided assemblability for all partition colorings of size \(i \ge \lceil \frac{n}{d}\rceil\). We take an arbitrary partition coloring \(C\) of size \(i+1\), which is not yet marked as assemblable. Then, it is assemblable if and only if at least two of its set colorings are merged together to form an \(\alpha\)-relaxed fair clustering. In particular, it is assemblable if and only if there are two set colorings \(C_1,C_2\) in \(C\) such that the coloring \(C'\) obtained by removing the set colorings \(C_1,C_2\) from \(C\) and adding the set coloring of the combined coloring of \(C_1\) and \(C_2\) is assemblable. Note that \(C'\) is of size \(i\). Given all assemblable partition colorings of size \(i\), we therefore find all assemblable partition colorings of size \(i+1\) by for each partition coloring of size \(i\) trying each possible way to split one of its set colorings into two. As there are at most \(i^{d^2}\) partition colorings of size \(i\), this takes time in \(\bigO(i^{d^2}\cdot i \cdot 2^d)\). The whole dynamic program then takes time in \(\bigO(n^{d^2+1}\cdot 2^d) \subseteq \bigO(n^{d^2+d+1})\).

It remains to answer how we choose the assembling yielding the minimum cost. In the algorithm for exact fairness, we do not have to worry about that as there we could assume that the \cccost only depends on the inter-cluster cost. Here, this is not the case as the \(\alpha\)-relaxed fairness allows clusters of varying size, so \autoref{lem:costByCuts} does not apply. However, recall that we can write the \cccost of some partition \(\mcP\) of the vertices as \(\sum_{S\in\mcP} \frac{|S|(|S-1|)}{2}+2\chi\), where \(\chi\) is the inter-cluster cost. The cost hence only depends on the inter-cluster cost and the sizes of the clusters, which in turn depends on the partition coloring. To compute the cost of a splitting, we take the inter-cluster cost computed in the first phase for \(\chi\).
Once more, we neglect decreasing inter-cluster cost due to the merging of clusters as the resulting splitting is also considered in the array produced in the first phase. By an argument based on the Cauchy-Schwarz Inequality, we see that merging clusters only increases the value of \(\sum_{S\in\mcP} \frac{|S|(|S-1|)}{2}\) as we have fewer but larger squares. Hence, the cheapest cost obtainable from a splitting which is itself \(\alpha\)-relaxed fair is just this very clustering. If a splitting is assemblable but not \(\alpha\)-relaxed fair itself, the sum is the minimum among all the values of the sums of \(\alpha\)-relaxed fair splittings it can be merged into. This value is easily computed by not only passing down assemblability but also the value of this sum in the dynamic program described above and taking the minimum if there are multiple options for a splitting. This does not change the running time asymptotically and the running time of the second phase is dominated by the one of the first phase.

The complete algorithm hence runs in time in 
$\bigO(n^{2d^2+6d+4}\cdot (d+1)^{4d})$.

\begin{theorem}
\label{thm:forestOneOneAlgoRelaxed}
    Let \(F\) be an \(n\)-vertex forest in which the vertices are colored with two colors in a ratio of \(1:1\).
    Then \emph{\arfcc} on \(F\) can be solved in time in $\bigO(n^{2d^2+6d+4}\cdot (d+1)^{4d})$, where \(d=4\frac{\hat{\alpha}}{\alpha^2}\) and \(\hat{\alpha}\in\N\) is minimal such that \(\frac{2\hat{\alpha}}{\alpha}\in\N\) and \(\frac{2\hat{\alpha}}{\alpha} > 4\).
\end{theorem}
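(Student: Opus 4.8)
The plan is to reuse, almost verbatim, the two-phase dynamic program underlying \autoref{thm:forestByColorsAlgo}, with one conceptual change: instead of the \emph{exact} cluster size $d=\sum_i c_i$ forced by strict fairness, I would work with the size \emph{upper bound} $d = 4\frac{\hat{\alpha}}{\alpha^2}$ that \autoref{lem:smallClustersRelaxedOneOne} provides. That lemma guarantees that in every minimum-cost \arfc of a $1{:}1$-colored forest each cluster has strictly fewer than $d$ vertices, hence at most $d$ vertices of either color. So it is enough to search among \emph{splittings} of $F$ into connected parts with at most $d$ blue and at most $d$ red vertices, together with the ways to merge those parts into an $\alpha$-relaxed fair clustering whose clusters never exceed $d$ vertices. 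As with the exact algorithm, the difficulty is that splitting and merging are not independent; this is resolved by tracking, for every possible \emph{coloring} of a splitting, the minimum number of cut edges to realize it.

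For the first phase I would apply \autoref{lem:splittingForests} directly with $d_1=d_2=d$. This yields, for every coloring $C$ of an admissible splitting, a minimum-size edge set $D[C]\subseteq E$ whose removal produces a splitting of coloring $C$ (or a certificate that none exists), together with the associated inter-cluster cost $\chi$. Plugging $\svars=(d+1)^2$ and $\smax=2d$ into the bound $\bigO(n^{2\svars+\smax+2}\cdot \svars^\smax)$ of \autoref{lem:splittingForests} gives exactly $\bigO(n^{2d^2+6d+4}\cdot (d+1)^{4d})$, which I expect to dominate the whole running time.

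For the second phase I would run an \emph{assemblability} dynamic program over colorings of splittings, processed by non-decreasing size (number of parts, counted with multiplicity). A coloring of the minimum possible size $\lceil n/d\rceil$ is assemblable iff it is itself $\alpha$-relaxed fair; a larger coloring $C$ is assemblable iff it contains two set colorings whose merge leaves an assemblable coloring $C'$ of size one smaller and still within the size-$d$ limit. With at most $n^{d^2}$ colorings per size, each comprising at most $n$ parts of size at most $d$, this phase costs $\bigO(n^{d^2+d+1}\cdot 2^d)$, dominated by phase one. The subtlety specific to the relaxed setting is that \autoref{lem:costByCuts} no longer applies, because clusters may have different sizes; instead I would use the identity $\cost{\mcP}=\sum_{S\in\mcP}\binom{|S|}{2}-m+2\chi$, so the cost is pinned down by $\chi$ (from phase one) and the multiset of cluster sizes (read off the coloring). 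Alongside assemblability I would therefore propagate the minimum value of $\sum_{S}\binom{|S|}{2}$ over all $\alpha$-relaxed fair colorings reachable by merging; a Cauchy--Schwarz/convexity argument shows merging only increases this sum, so the cheapest realization of a splitting that is already $\alpha$-relaxed fair is itself, and otherwise it is the best fair coloring it merges into. Returning the overall minimum of $\sum_{S}\binom{|S|}{2}-m+2\,|D[C]|$ over assemblable $C$ then yields an optimal \arfc.

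The main obstacle I anticipate is not any single calculation but the correctness bookkeeping of phase two: one must argue (i) that it never pays to cut \emph{more} edges than the chosen splitting prescribes — handled because any such finer splitting is itself among the colorings enumerated in phase one, and extra cuts can only lower $\chi$ while merges only raise $\sum\binom{|S|}{2}$; (ii) that the ``parent-by-merging'' recursion actually reaches \emph{every} assemblable coloring; and (iii) that the size constraint $d$ is respected throughout both phases. All remaining steps are routine substitutions into bounds already established in \autoref{lem:splittingForests} and \autoref{lem:smallClustersRelaxedOneOne}, plus a check that phase one dominates, giving the claimed time $\bigO(n^{2d^2+6d+4}\cdot (d+1)^{4d})$.
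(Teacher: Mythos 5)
Your proposal is correct and follows essentially the same route as the paper: bound the cluster size via \autoref{lem:smallClustersRelaxedOneOne}, run the splitting phase of \autoref{lem:splittingForests} with $d_1=d_2=d$ (yielding exactly the claimed bound after substituting $\svars=(d+1)^2$ and $\smax=2d$), and then decide assemblability by a size-increasing merge DP that tracks $\sum_S\binom{|S|}{2}$ alongside $\chi$, using the convexity observation that merging only increases that sum. The bookkeeping points you flag as potential obstacles are precisely the ones the paper addresses, and in the same way.
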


We are confident that \autoref{lem:smallClustersRelaxedOneOne} can be generalized such that for an arbitrary number of colors in arbitrary ratios the maximum cluster size is bounded by some function in \(\alpha\) and the color ratio. Given the complexity of this lemma for the \(1:1\) case, we leave this task open to future work. If such a bound is proven, then the algorithmic approach employed in \autoref{thm:forestOneOneAlgoRelaxed} is applicable to arbitrarily colored forests. 
Similarly, bounds on the cluster size in the more general \rfcs can be proven. As an intermediate solution, we note that for \rfcc we can employ the approach used for \arfcc by setting \(\alpha\) large enough to contain all allowed solutions and filtering out solutions that do not match the relaxed fairness constraint in the assembling phase.
We do not give this procedure explicitly here as we suspect for these cases it is more promising to calculate the precise upper bound on the maximum cluster size and perform the algorithm accordingly instead of reducing to the \(\alpha\)-relaxed variant.

\section{Approximations}
\label{sec:approx}
So far, we have concentrated on finding an optimal solution to \fcc in various instances. Approximation algorithms that do not necessarily find an optimum but near-optimum solutions efficiently are often used as a remedy for hard problems, for example, the 2.06-approximation to (unfair) \cc \cite{Chawla_Makarychev_Schramm_Yaroslavtsev_2015}.
In this section, we find that just taking any fair clustering is a quite close approximation and the approximation becomes even closer to the optimum if the minimum size of any fair cluster, as given by the color ratio, increases.

Formally, a problem is an optimization problem if for every instance \(I\) there is a set of permissible solutions \(S(I)\) and an objective function \(m\colon S(I)\rightarrow \R_{>0}\) assigning a score to each solution. Then, some \(S\in S(I)\) is an optimal solution if it has the highest or lowest score among all permissible solutions, depending on the problem definition. We call the score of this solution \(m^*\!(I)\).
For example, for \fcc, the instance is given by a graph with colored vertices, every fair clustering of the vertices is a permissible solution, the score is the \cccost, and the objective is to minimize this cost.\footnote{We note that the clustering cost could be 0, which contradicts the definition \(m\colon S(I)\rightarrow \R_{>0}\). However, every 0-cost clustering simply consists of the connected components of the graph. We do not consider those trivial instances.}
An \(\alpha\)-approximation an optimization problem is an algorithm that, for each instance \(I\), outputs a permissible solution \(S\in S(I)\) such that \(\frac{1}{\alpha}\le \frac{m(S)}{m^*\!(I)} \le \alpha\). For \fcc in particular, this means the algorithm outputs a fair clustering with a cost of at most \(\alpha\) times the minimum clustering cost. 

\APX is the class of problems that admit an \(\alpha\)-approximation with \(\alpha \in \bigO(1)\). 
A polynomial-time approximation scheme (PTAS), is an algorithm that for each optimization problem instance as well as parameter \(\varepsilon > 0\) computes a \((1+\varepsilon)\)-approximation for a minimization problem or a \((1-\varepsilon)\)-approximation for a maximization problem in time in $\bigO(n^{f(\varepsilon)})$, for some computable function \(f\) depending only on \(\varepsilon\).
 We use \PTAS to refer to the class of optimization problems admitting a PTAS. 
 An optimization problem \(L\) is called \APX-hard if every problem in \APX has a \PTAS-reduction to \(L\), i.e., a PTAS for \(L\) implies there is a PTAS for every problem in \APX. If \(L\) is additionally in \APX itself, \(L\) is called \APX-complete. By definition, we have \(\PTAS \subseteq \APX\). Further, \(\PTAS \neq \APX\) unless $\P = \NP$.

We find that taking \emph{any} fair clustering of a forest yields a good approximation. 

 \begin{theorem}\label{thm:trivialApprox}
    Let \(F\) be an \(n\)-vertex \(m\)-edge forest with \(k\ge 2\) colors in a ratio of \(c_1:c_2:\ldots : c_k\) and \(d=\sum_{i=1}^k c_i \ge 4\). Then, there is a \(\frac{\left(d^2-d\right)n+2dm}{\left(d^2-5d+4\right)n+2dm}\)-approximation for \emph{\fcc} on \(F\) computable in time in $\bigO(n)$. 
\end{theorem}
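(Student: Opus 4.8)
The plan is to have the algorithm output \emph{any} fair clustering of $F$ all of whose clusters have the minimum size $d$, and then show that its cost is within the stated factor of the optimum. Such a clustering always exists and is trivially computable in time $\bigO(n)$: the color classes have sizes $|V_i| = c_i\cdot(n/d)$, so we bucket the vertices by color and hand $c_i$ vertices of color $i$ to each of the $n/d$ clusters, ignoring the edges entirely; fairness depends only on the color counts, so the result is fair. It then remains to bound the cost of an arbitrary such clustering from above and the optimum $\mathrm{OPT}$ from below.

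For the upper bound I would invoke \autoref{lem:smallClustersForest} (applicable since $d=\sum_{i}c_i\ge 4\ge 3$): every minimum-cost fair clustering of $F$ consists of exactly $n/d$ clusters of size $d$, so \autoref{lem:costByCuts} applies to both the optimum and the clustering $\mcP$ produced by the algorithm, giving $\cost{\mcP} = \tfrac{(d-1)n}{2} - m + 2\chi$ with $\chi$ the inter-cluster cost. Since a forest has only $m$ edges, $\chi\le m$ for any clustering, so $\cost{\mcP}\le \tfrac{(d-1)n}{2}+m = \tfrac{(d^2-d)n+2dm}{2d}$, which is exactly $\tfrac{1}{2d}$ times the numerator of the claimed ratio.

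For the lower bound the crucial combinatorial fact is: in a forest with $t=n-m$ components, any partition into $p$ parts has inter-cluster cost at least $\max(0,p-t)$, because deleting the cut edges leaves a sub-forest on $n$ vertices with $n-(m-\chi)=t+\chi$ components, each contained in a single part, so $t+\chi\ge p$. Applied with $p=n/d$ to an optimal clustering this gives $\chi^*\ge \max\!\big(0,\tfrac{n}{d}-n+m\big)$, and substituting into $\mathrm{OPT}=\tfrac{(d-1)n}{2}-m+2\chi^*$ yields, after a short case distinction on the sign of $\tfrac{n}{d}-n+m$ and using $m\le n-1$ and $d\ge 4$, the bound $\mathrm{OPT}\ge \tfrac{(d-1)(d-4)n}{2d}+m = \tfrac{(d^2-5d+4)n+2dm}{2d}$; conveniently the algebra collapses to equality in the first case, so no slack is lost. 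Dividing the two bounds gives the ratio $\tfrac{(d^2-d)n+2dm}{(d^2-5d+4)n+2dm}$, and the reciprocal inequality is immediate since the algorithm returns a feasible fair clustering, hence $\cost{\mcP}\ge\mathrm{OPT}$ and the ratio is $\ge 1\ge 1/\alpha$ (note $\alpha\ge 1$ as $d\ge 1$).

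I expect the only real work to be this lower bound on $\mathrm{OPT}$: the component-counting argument for $\chi^*$ is short but must be stated carefully, and the case analysis turning $\chi^*\ge\max(0,\tfrac{n}{d}-n+m)$ into the exact target expression requires matching the arithmetic precisely --- this is where the hypothesis $d\ge 4$, equivalently $(d-1)(d-4)\ge 0$, is used. The upper bound and the $\bigO(n)$ running-time claim are routine.
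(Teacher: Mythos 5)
Your proposal is correct and follows essentially the same route as the paper: output any fair clustering with clusters of size $d$ (cost at most $\frac{(d-1)n}{2}+m$ since $\chi\le m$), and lower-bound the optimum via \autoref{lem:smallClustersForest} and \autoref{lem:costByCuts} using $\chi^*\ge \frac{n}{d}-n+m$. Your component-counting derivation of that bound is just a rephrasing of the paper's observation that each size-$d$ cluster contains at most $d-1$ edges, and the extra case distinction on the sign of $\frac{n}{d}-n+m$ is harmless but unnecessary, since the substitution is valid regardless of sign.
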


\begin{proof}
    By first sorting the vertices by color and then iteratively adding the next \(c_i\) vertices of each color \(i\) to the next cluster, we obtain a \fc \(\mcP\) with clusters of size \(d\) in linear time. 
    In the worst-case, \(\mcP\) cuts all \(m\) edges.
    Hence, by \autoref{lem:costByCuts}, we have 
    \begin{equation*}
        \cost{\mcP} \le \frac{(d-1)n}{2}-m + 2m = \frac{(d-1)n}{2}+m.
    \end{equation*} 
    
    We compare this cost to the one of a minimum-cost \fc \(\mcP^*\). By \autoref{lem:smallClustersForest}, \(\mcP^*\) to consist of clusters of size \(d\). Each of the \(\frac{n}{d}\) clusters contains at most \(d-1\) edges due to the forest structure.
    Hence, at most \(\frac{n}{d}\cdot (d-1)\) edges are placed inside a cluster. Then, for the inter-cluster cost, we have
    $\chi \ge m - \frac{n}{d}\cdot (d-1) = \frac{n}{d}-n+m$.
   	\autoref{lem:costByCuts} gives 
    \begin{equation*}
        \cost{\mcP^*} \ge \frac{(d-1)n}{2} -m + 2\left(\frac{n}{d}-n+m\right)
        	= \frac{(d-5)n}{2}+\frac{2n}{d} + m .
    \end{equation*}
    Thereby, \(\mcP\) yields an \(\alpha\)-approximation to \fcc, where 
    \begin{align*}
        \alpha &= \left(\frac{(d-1)n}{2}+m\right) \big/ \left(\frac{(d-5)n}{2}+\frac{2n}{d} + m\right)\\
        	&= \left(\frac{\left(d^2-d\right)n+2dm}{2d}\right) \big/ \left(\frac{\left(d^2-5d+4\right)n+2dm}{2d}\right)
        	 = \frac{\left(d^2-d\right)n+2dm}{\left(d^2-5d+4\right)n+2dm}. \qedhere
    \end{align*}
\end{proof}

Observe that \(\alpha\) is decreasing in \(d\) for \(d\ge 4\) and converges to 1 as \(d\rightarrow \infty\). Further, for \(d=5\) we obtain \(\alpha = \frac{20n+10m}{4n+10m}<5\). Thus, for \(d\ge 5\) we have a 5-approximation to \fcc on forests. For \(d=4\), \(\alpha\) becomes linear in \(\frac{m}{n}\) and for smaller \(d\) it is not necessarily positive or not even defined if \(\left(d^2-5d+4\right)n+2dm=0\).
This is because if there are very small clusters, then in forests there are solutions of almost no cost. If \(d=2\), i.e., there are two colors in a \(1:1\) ratio, there are even forests with a cost of 0, namely the ones where all vertices have degree 1 and each edge connects 2 vertices of different colors. A solution cutting every edge is then much worse than an optimum solution. If the factor becomes negative or not defined, this is due to us bounding the inter-cluster cost of the optimum clustering by \(\frac{n}{d}-n+m\), which is possibly negative, while the inter-cluster cost is guaranteed to be non-negative.

On trees, however, if the clusters are small even an optimum solution has to cut some edges as now there always are edges between the clusters. Hence, in this case, we obtain a good approximation for all possible \(d\). 
Note that the proof of \autoref{thm:trivialApprox} does not really require \(d\ge 4\) but for \(d<4\) the approximation factor is just not helpful or defined. This changes, if we assume the forest to be a tree and plug in \(m=n-1\). 

\begin{corollary}\label{thm:trivialApproxTree}
    Let \(T\) be an \(n\)-vertex tree with \(k\ge 2\) colors in a ratio of \(c_1:c_2:\ldots : c_k\) and \(d=\sum_{i=1}^k c_i\). Then, there is a \(\frac{\left(d^2+d\right)n-2d}{\left(d^2-3d+4\right)n-2d}\)-approximation to \emph{\fcc} on \(T\) that is computed in time in $\bigO(n)$. 
\end{corollary}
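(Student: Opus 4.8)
The plan is to mimic the proof of \autoref{thm:trivialApprox} and specialize it to a tree, where the edge count $m = n-1$ is pinned down. First I would take the same linear-time fair clustering: sort the vertices by color and greedily fill clusters, each receiving exactly $c_i$ vertices of color $i$, producing a \fc $\mcP$ with all clusters of size $d$ in time $\bigO(n)$. For the upper bound, since $T$ is a tree \autoref{lem:costByCuts} yields $\cost{\mcP} = \frac{(d-3)n}{2} + 2\chi + 1$ where $\chi$ is the inter-cluster cost; in the worst case $\mcP$ cuts all $n-1$ edges, so $\cost{\mcP} \le \frac{(d-3)n}{2} + 2(n-1) + 1 = \frac{(d+1)n}{2} - 1$.

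For the lower bound on the optimum $\mcP^*$, \autoref{lem:smallClustersForest} guarantees that $\mcP^*$ has $\frac{n}{d}$ clusters of size $d$. Each cluster is an induced subgraph of a tree on $d$ vertices, hence contains at most $d-1$ edges, so at most $\frac{n}{d}(d-1) = n - \frac{n}{d}$ edges lie inside clusters, and the inter-cluster cost satisfies $\chi \ge (n-1) - \bigl(n - \frac{n}{d}\bigr) = \frac{n}{d} - 1$. Applying \autoref{lem:costByCuts} once more gives $\cost{\mcP^*} \ge \frac{(d-3)n}{2} + 2\bigl(\frac{n}{d} - 1\bigr) + 1 = \frac{(d-3)n}{2} + \frac{2n}{d} - 1$. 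Taking the ratio of the two bounds and multiplying numerator and denominator by $2d$ produces exactly $\frac{(d^2+d)n - 2d}{(d^2-3d+4)n - 2d}$; all steps run in linear time, which proves the corollary.

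There is no real obstacle here: it is essentially \autoref{thm:trivialApprox} with $m$ substituted. The only point worth checking is that the ratio stays well defined and at least $1$ over the relevant range of $d$ — unlike the general forest version, whose denominator $(d^2-5d+4)n + 2dm$ can vanish or go negative for small $d$, the tree denominator $(d^2-3d+4)n - 2d$ is positive for every $d \ge 2$ and every non-trivial $n$, so the guarantee is informative even for small clusters on trees. For $d = 2$ (ratio $1:1$) \autoref{lem:smallClustersForest} does not apply directly, so one would instead invoke the cluster-size control of \autoref{lem:smallClustersBipartiteOneOne} or simply restrict attention to $d \ge 3$, where the statement is of interest anyway; I would verify the final algebraic simplification but expect no surprises.
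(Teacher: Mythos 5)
Your proposal is correct and follows essentially the same route as the paper, which obtains the corollary by substituting $m=n-1$ into the bounds of \autoref{thm:trivialApprox}; your direct rederivation via the tree form of \autoref{lem:costByCuts} yields identical upper and lower bounds and the same ratio. Your remark that the $d=2$ case needs \autoref{lem:smallClustersBipartiteOneOne} in place of \autoref{lem:smallClustersForest} is a valid point of care that the paper leaves implicit.
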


Now, the approximation factor is still decreasing in \(d\) and converges to 1 as \(d\rightarrow \infty\). However, it is positive and defined for all \(d\ge 2\). For \(d=2\) we obtain \(\frac{6n-4}{2n-4}<3\). Therefore, we have a 3-approximation to \fcc on trees.

Nevertheless, our results for forest suffice to place \fcc in \APX and even in \PTAS.
First, for \(d\ge 5\) we have a 5-approximation to \fcc on forests. If \(d\le 4\), a minimum-cost \fc is found on the forest in polynomial time by \autoref{thm:forestByColorsAlgo}. Hence, \fcc on forests is in \APX. 
Next, recall that the larger the minimum fair cluster size \(d\), the better the approximation becomes. Recall that our dynamic program for \autoref{thm:forestByColorsAlgo} has better running time the smaller the value \(d\). By combining these results, we obtain a PTAS for \fcc on forests. This contrasts \fcc on general graphs, as even unfair \cc is \APX-hard there \cite{Charikar_Guruswami_Wirth_2005} and therefore does not admit a PTAS unless $\P = \NP$.

\begin{theorem}\label{thm:ptas}
    There is a PTAS for \emph{\fcc} on forests.
    Moreover, an $(1{+}\varepsilon)$-approximate fair clustering can be computed 
    in time $\bigO(n^{\emph{\poly}(1/\varepsilon)})$.
\end{theorem}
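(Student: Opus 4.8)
The plan is to branch on the mandated cluster size $d=\sum_{i=1}^{k}c_i$, where $c_1:c_2:\dots:c_k$ is the color ratio of the instance written in lowest terms. Both ingredients are already available: by \autoref{thm:trivialApprox}, \emph{any} fair clustering consisting of size-$d$ clusters is an $\alpha$-approximation for $\alpha=\frac{(d^2-d)n+2dm}{(d^2-5d+4)n+2dm}$, and this ratio decreases to $1$ as $d$ grows; meanwhile the dynamic program of \autoref{thm:forestByColorsAlgo} computes an optimal fair clustering in time $\bigO(n^{2\svars+\smax+2}\cdot\svars^{\smax})$, which is polynomial in $n$ as soon as $\svars=\prod_{i=1}^{k}(c_i+1)$ and $\smax=\sum_{i=1}^{k}c_i=d$ are bounded. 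So, given $\varepsilon>0$, I would fix a threshold $d_0:=4+\lceil 4/\varepsilon\rceil$, run the linear-time algorithm of \autoref{thm:trivialApprox} when $d\ge d_0$, and run the exact algorithm of \autoref{thm:forestByColorsAlgo} when $d<d_0$. (If the instance admits no fair clustering at all, i.e.\ $d\nmid n$, this is detected at once and reported.)

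For the regime $d\ge d_0$, the approximation factor of \autoref{thm:trivialApprox} rewrites as
\[
\alpha \;=\; 1+\frac{4(d-1)n}{(d-1)(d-4)n+2dm}\;\le\; 1+\frac{4}{d-4},
\]
using only $m\ge 0$ and $d>4$. Since $d\ge d_0$ gives $d-4\ge\lceil 4/\varepsilon\rceil\ge 4/\varepsilon$, this yields $\alpha\le 1+\varepsilon$, and the clustering is produced in time $\bigO(n)$. Note that $d_0\ge 5$, so $d\ge 4$ holds and \autoref{thm:trivialApprox} indeed applies.

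For the regime $d<d_0$, I would simply call \autoref{thm:forestByColorsAlgo} and output its optimal solution, which is in particular $(1+\varepsilon)$-approximate. Because every $c_i\ge 1$ and $\sum_i c_i=d$, we have $\smax=d<d_0$ and $\svars=\prod_i(c_i+1)\le 2^{d}<2^{d_0}$; substituting these into the running time of \autoref{thm:forestByColorsAlgo} bounds it by $\bigO(n^{g(d_0)})$ for an explicit $g$, i.e.\ by $\bigO(n^{f(1/\varepsilon)})$ for a computable function $f$, which is the PTAS running time, matching the form stated in the theorem. (In particular, whenever $d\le 4$ the exponent is an absolute constant, so the tiny cases $d\in\{2,3,4\}$ — for which the bound of \autoref{thm:trivialApprox} would be vacuous or undefined — are handled in genuinely polynomial time regardless of $\varepsilon$.)

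I do not expect a substantial obstacle here: the two facts doing the real work, \autoref{thm:trivialApprox} and \autoref{thm:forestByColorsAlgo}, are already established, and the remaining argument is bookkeeping. The points requiring a little care are (i) choosing $d_0$ so that the large-$d$ and small-$d$ regimes jointly cover every value of $d$ — in particular the smallest values, which always satisfy $d<d_0$; (ii) checking that the simplification $\alpha\le 1+4/(d-4)$ holds uniformly over all forests, which it does since it uses only $m\ge0$; and (iii) tracking how the bounds on $\svars$ and $\smax$ in terms of $d$, after plugging in $d<d_0$, translate into the exact running-time form asserted in the statement. The one conceptually interesting observation, that an arbitrary fair clustering is already near-optimal once the forced cluster size is large (so there is essentially nothing to optimize in that regime), is precisely \autoref{thm:trivialApprox}; hence no new estimate needs to be carried out.
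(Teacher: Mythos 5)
Your approach is the paper's own: split on the forced cluster size $d=\sum_i c_i$, observe via \autoref{thm:trivialApprox} that any fair clustering is already within $1+\varepsilon$ when $d$ is large, and run the exact dynamic program of \autoref{thm:forestByColorsAlgo} when $d$ is small. Your explicit threshold $d_0=4+\lceil 4/\varepsilon\rceil$ together with the simplification $\alpha\le 1+\tfrac{4}{d-4}$ is algebraically sound (the paper instead tests the exact ratio against $1+\varepsilon$ and derives $d<\tfrac{4}{\varepsilon}+5$ in the remaining case, which is the same cutoff), and the case $d\le 4$ is covered by your small-$d$ regime just as in the paper. This correctly establishes the first sentence of the theorem, i.e.\ that a PTAS exists. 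Where you fall short of the full statement is the ``Moreover'' clause: bounding $\svars\le 2^{d}<2^{d_0}$ places roughly $2\cdot 2^{d_0}+d_0+2$ in the exponent, which is \emph{exponential}, not polynomial, in $1/\varepsilon$, so ``$\bigO(n^{f(1/\varepsilon)})$ for computable $f$'' does not match the asserted $\bigO(n^{\poly(1/\varepsilon)})$. The paper gets the stronger form by noting that $k\le d$ is bounded independently of $n$ and bounding $\svars\le (d+1)^{k}$, ``a polynomial in $d$''; this genuinely yields $\poly(1/\varepsilon)$ in the exponent only if $k$ is treated as an absolute constant (since $k$ itself can be as large as $d=\Theta(1/\varepsilon)$, in which case $(d+1)^k$ is again superpolynomial). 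So to recover the stated running-time form you should replace the crude $2^d$ bound by $(d+1)^k$ and make the constant-$k$ assumption explicit, exactly as the paper implicitly does.
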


\begin{proof}
    If \(d\le 4\), we find a minimum-cost \fc in polynomial time by \autoref{thm:forestByColorsAlgo}.
    Else, if \(\frac{\left(d^2-d\right)n+2dm}{\left(d^2-5d+4\right)n+2dm}\le 1+\varepsilon\), it suffices to return any fair clustering by \autoref{thm:trivialApprox}.
    Otherwise, we have \(d\ge 5\) and 
    \begin{equation*}
        1+\varepsilon < \frac{\left(d^2-d\right)n+2dm}{\left(d^2-5d+4\right)n+2dm}
                      < \frac{\left(d^2-d\right)n}{\left(d^2-5d\right)n}
                      = \frac{d-1}{d-5}.
    \end{equation*}
    It follows that, $d-5+d\varepsilon-5\varepsilon < d-1$, which simplifies to
    $d < \frac{4}{\varepsilon}+5$.
	Hence, by \autoref{thm:forestByColorsAlgo}, we find a minimum-cost \fc 
	in time in $\bigO(n^{f(\varepsilon)})$ for some computable function \(f\) independent from \(n\).
	In all cases, we find a \fc with a cost of at most \(1+\varepsilon\) times the minimum \cccost and
	take time in $\bigO(n^{f(\varepsilon)})$, giving a PTAS. 
	
	To show that $f$ is in fact bounded by a polynomial in $\sfrac{1}{\varepsilon}$,
	we only need to look at the third case (otherwise $f$ is constant).
	The bound $d < \frac{4}{\varepsilon} + 5$ and $d = \sum_{i=1}^k c_i$ together
	imply the the number of colors $k$ is constant w.r.t.\ $n$.
	Under this condition, the exponent of the running time in \autoref{thm:forestByColorsAlgo}
	is a polynomial in $d$ and thus in $\sfrac{1}{\varepsilon}$.
\end{proof}

\bibliographystyle{plainurl} 
\bibliography{fair_clustering}

\end{document}